\def\safedef#1{%
   \ifx#1\undefined
      \expandafter\def\expandafter#1%
   \else
      \errmessage{The \string#1 is defined already}%
      \expandafter\def\expandafter\tmp
   \fi
}
\definecolor{kjgray}{rgb}{.7,.7,.7}
\newtheoremstyle{kjstyle}
{1ex} 
{\topsep} 
{\itshape} 
{} 
{\bfseries} 
{.} 
{.5em} 
{} 
\newtheoremstyle{kjstyle2}
{.0em} 
{.0em} 
{\itshape} 
{} 
{\bfseries} 
{.} 
{.5em} 
{} 
\newtheoremstyle{kjstylenoitalic}
{1ex} 
{\topsep} 
{} 
{} 
{\bfseries} 
{.} 
{.5em} 
{} 
\definecolor{kjgray}{rgb}{.7,.7,.7}
\renewcommand{\paragraph}{%
  \@startsection{paragraph}{4}%
  {\z@}{0.50ex \@plus 1ex \@minus .2ex}{-1em}%
  {\normalfont\normalsize\bfseries}%
}
\newcolumntype{P}[1]{>{\centering\arraybackslash}p{#1}}
\newcolumntype{M}[1]{>{\centering\arraybackslash}m{#1}}
\def\ddefloop#1{\ifx\ddefloop#1\else\ddef{#1}\expandafter\ddefloop\fi}
\def\ddef#1{\expandafter\def\csname #1#1\endcsname{\ensuremath{\mathbb{#1}}}}
\def\ddef#1{\expandafter\def\csname c#1\endcsname{\ensuremath{\mathcal{#1}}}}
\def\ddef#1{\expandafter\def\csname b#1\endcsname{\ensuremath{{\mathbf{#1}}}}}
\def\ddef#1{\expandafter\def\csname b#1\endcsname{\ensuremath{{\boldsymbol{#1}}}}}
\def\ddef#1{\expandafter\def\csname h#1\endcsname{\ensuremath{\hat{#1}}}}
\def\ddef#1{\expandafter\def\csname hc#1\endcsname{\ensuremath{\hat{\mathcal{#1}}}}}
\def\ddef#1{\expandafter\def\csname hb#1\endcsname{\ensuremath{\hat{\mathbf{#1}}}}}
\def\ddef#1{\expandafter\def\csname hb#1\endcsname{\ensuremath{\hat{\boldsymbol{#1}}}}}
\def\ddef#1{\expandafter\def\csname t#1\endcsname{\ensuremath{\tilde{#1}}}}
\def\ddef#1{\expandafter\def\csname tc#1\endcsname{\ensuremath{\tilde{\mathcal{#1}}}}}
\def\ddef#1{\expandafter\def\csname tb#1\endcsname{\ensuremath{\tilde{\mathbf{#1}}}}}
\def\ddef#1{\expandafter\def\csname tb#1\endcsname{\ensuremath{\tilde{\boldsymbol{#1}}}}}
\def\ddef#1{\expandafter\def\csname bar#1\endcsname{\ensuremath{\bar{#1}}}}
\def\ddef#1{\expandafter\def\csname barc#1\endcsname{\ensuremath{\bar{\mathcal{#1}}}}}
\def\ddef#1{\expandafter\def\csname barb#1\endcsname{\ensuremath{\bar{\mathbf{#1}}}}}
\def\ddef#1{\expandafter\def\csname barb#1\endcsname{\ensuremath{\bar{\boldsymbol{#1}}}}}
\def\ddef#1{\expandafter\def\csname war#1\endcsname{\ensuremath{\overline{#1}}}}
\def\ddef#1{\expandafter\def\csname warc#1\endcsname{\ensuremath{\overline{\mathcal{#1}}}}}
\def\ddef#1{\expandafter\def\csname warb#1\endcsname{\ensuremath{\overline{\mathbf{#1}}}}}
\def\ddef#1{\expandafter\def\csname warb#1\endcsname{\ensuremath{\overline{\boldsymbol{#1}}}}}
\def\sig{\sigma}
\def\gam{\gamma}
\def\eps{\varepsilon}
\def\epsilon{\varepsilon}
\def\Dt{\Delta}
\def\greeksymbols{alpha,beta,gamma,gam,delta,dt,eps,epsilon,zeta,eta,theta,th,iota,kappa,kap,lambda,lam,mu,nu,xi,pi,rho,sigma,sig,tau,phi,chi,psi,omega,om,Gamma,Gam,Delta,Dt,Theta,Th,Lambda,Lam,Pi,Sigma,Sig,Phi,Psi,Omega,Om}
\def\greeksymbolsnoeta{alpha,beta,gamma,gam,delta,dt,eps,epsilon,zeta,theta,th,iota,kappa,kap,lambda,lam,mu,nu,xi,pi,rho,sigma,sig,tau,phi,chi,psi,omega,om,Gamma,Gam,Delta,Dt,Theta,Th,Lambda,Lam,Pi,Sigma,Sig,Phi,Psi,Omega,Om} 
\xdef\csname barb\x\endcsname{\noexpand\ensuremath{\noexpand\bar{\noexpand\boldsymbol{ \csname \x\endcsname}}}}
\providecommand{\normz}[2][-1]{
\ensuremath{\mathinner{
\ifthenelse{\equal{#1}{-1}}{ 
\!\left\|#2\right\|}{}
\ifthenelse{\equal{#1}{0}}{ 
\|#2\|}{}
\ifthenelse{\equal{#1}{1}}{ 
\bigl\|#2\bigr\|}{}
\ifthenelse{\equal{#1}{2}}{ 
\Bigl\|#2\Bigr\|}{}
\ifthenelse{\equal{#1}{3}}{ 
\biggl\|#2\biggr\|}{}
\ifthenelse{\equal{#1}{4}}{ 
\Biggl\|#2\Biggr\|}{}
}} 
}  
\providecommand{\floor}[2][-1]{
\ensuremath{\mathinner{
\ifthenelse{\equal{#1}{-1}}{ 
\!\left\lfloor#2\right\rfloor}{}
\ifthenelse{\equal{#1}{0}}{ 
\lfloor#2\rfloor}{}
\ifthenelse{\equal{#1}{1}}{ 
\!\bigl\lfloor#2\bigr\rfloor}{}
\ifthenelse{\equal{#1}{2}}{ 
\!\Bigl\lfloor#2\Bigr\rfloor}{}
\ifthenelse{\equal{#1}{3}}{ 
\!\biggl\lfloor#2\biggr\rfloor}{}
\ifthenelse{\equal{#1}{4}}{ 
\!\Biggl\lfloor#2\Biggr\rfloor}{}
}} 
}
\providecommand{\ceil}[2][-1]{
\ensuremath{\mathinner{
\ifthenelse{\equal{#1}{-1}}{ 
\!\left\lceil#2\right\rceil}{}
\ifthenelse{\equal{#1}{0}}{ 
\lceil#2\rceil}{}
\ifthenelse{\equal{#1}{1}}{ 
\!\bigl\lceil#2\bigr\rceil}{}
\ifthenelse{\equal{#1}{2}}{ 
\!\Bigl\lceil#2\Bigr\rceil}{}
\ifthenelse{\equal{#1}{3}}{ 
\!\biggl\lceil#2\biggr\rceil}{}
\ifthenelse{\equal{#1}{4}}{ 
\!\Biggl\lceil#2\Biggr\rceil}{}
}} 
}
\newcommand{\fr}[2]{ { \frac{#1}{#2} }}
\def\cd{\cdot}
\def\rarrow{\ensuremath{\rightarrow}}
\definecolor{mygrn}{rgb}{0,.8,0}
\definecolor{myred}{rgb}{.8,0,0}
\DeclareMathOperator{\EE}{\mathbb{E}} 
\DeclareMathOperator{\PP}{\mathbb{P}}
\DeclareMathOperator*{\argmin}{arg~min}
\DeclareMathOperator{\KL}{{\mathrm{KL}}}
\DeclarePairedDelimiterX{\inp}[2]{\langle}{\rangle}{#1, #2}
\newcommand\declareop[3]{%
  \newcommand#1{%
    \mskip\muexpr\medmuskip*#2\relax
    {#3}%
    \mskip\muexpr\medmuskip*#2\relax
}}
\declareop\capprox{1}{{\sr{\const}{\approx}}} 
\declareop\logapprox{1}{{\sr{\mathrm{log}}{\approx}}} 
\def\Bernoulli{{\ensuremath{\mathrm{Bernoulli}}}}
\def\const{\mathsf{const}}
\def\tmin{{\min}}
\def\tmax{{\max}}
\newcommand{\sr}{\stackrel}
\newcommand{\vast}{\bBigg@{3}}
\newcommand{\Vast}{\bBigg@{4}}
\newenvironment{talign*}
 {\csname align*\endcsname}
 {\endalign}
\def\chrulefill{\leavevmode\leaders\hrule height 0.7ex depth \dimexpr0.4pt-0.7ex\hfill\kern0pt}
\renewcommand{\cite}[1]{\citep{#1}}
\newcommand{\vecp}{\mathrm{\mathbf{p}}}
\newcommand{\vecr}{\mathrm{\mathbf{r}}}
\def\pmle{\textup{\textsf{PMLE}}}
\def\rkl{\textup{\textsf{RKL}}}
\def\rlhf{\textup{\textsf{RLHF}}}
\def\distill{\textup{\textsf{Distill}}}
\def\wtpi{\widetilde{\pi}}
\def\rmin{r_\tmin}
\def\rmax{r_\tmax}
\theoremstyle{plain}
\newtheorem{theorem}{Theorem} 
\newtheorem{proposition}[theorem]{Proposition}
\newtheorem{lemma}[theorem]{Lemma}
\theoremstyle{definition}
\newtheorem{definition}[theorem]{Definition}
\newtheorem{assumption}[theorem]{Assumption}
\theoremstyle{remark}
\newtheorem{remark}[theorem]{Remark}
\newtcolorbox{highlight}[1][]{
    colback=gray!10,
    colframe=gray!30,
    boxrule=0.5pt,
    arc=2pt,
    leftrule=3pt,
    rightrule=3pt,
    toprule=1pt,
    bottomrule=1pt,
    #1
}
\title{Beyond RLHF: A Unified Theoretical Framework of Alignment}
\author{%
  Jihun Yun$^1$\thanks{Equal contribution}
  \qquad 
  Juno Kim$^2$\footnotemark[1]\, \thanks{Work done during an internship at KRAFTON.}
  \qquad  
  Jongho Park$^{1,2}$
  \qquad
  Junhyuck Kim$^1$\\ 
  \bfseries
  Jongha Jon Ryu$^3$
  \qquad 
  Jaewoong Cho$^1$
  \qquad 
  Kwang-Sung Jun$^4$\\
  $^1$KRAFTON\qquad
  $^2$UC Berkeley\qquad
  $^3$MIT\qquad
  $^4$POSTECH\\
  \texttt{jihuny@krafton.com\qquad
  junokim@berkeley.edu\qquad
  kwangsungjun@postech.ac.kr
  }
}
\begin{document}

\maketitle

\begin{abstract}
    Alignment via reinforcement learning from human feedback (RLHF) has become the dominant paradigm for controlling the quality of outputs from large language models (LLMs).
    However, existing theories do not provide strong justification for the RLHF objective itself and do not allow comparisons of the guarantees between various methods because different methods are often analyzed under different frameworks.
    Toward a unified framework for alignment, we ask under what assumptions can we derive existing or new training objectives and obtain theoretical guarantees.
    To this end, we reframe alignment as \emph{distribution learning} from pairwise preferences, which makes a probabilistic assumption describing how preferences reveal information about the target LM.
    This leads us to propose three principled alignment objectives: preference maximum likelihood estimation, preference distillation, and reverse KL minimization.
    We prove that they all enjoy strong non-asymptotic $O(1/n)$ convergence to the target LM, naturally avoiding degeneracy.
    In particular, reverse KL highly resembles the RLHF objective, providing strong justification for RLHF.
    Furthermore, our theory explains, for the first time, the empirical finding that on-policy objectives (e.g., RLHF) typically outperform likelihood-style objectives (e.g., DPO). Finally, empirical results indicate that the proposed objectives are competitive with strong baselines across several tasks and models.
    \looseness=-1
\end{abstract}

\def\piref{{\pi_0}}

\section{Introduction}
\label{sec:intro}

Alignment refers to the task of improving the quality of responses (e.g., helpfulness and harmlessness) generated from large language models (LLMs) via human preferences~\citep{bai22training,ouyang22training} and has become the de facto final step in LLM training.
The first method introduced for alignment is Reinforcement Learning from Human Feedback (RLHF)~\citep{christiano17deep,stiennon20learning}, which trains a reward model $R$ from pairwise preferences and then optimizes a policy $\pi$ (i.e., language model) that maximizes the reward via reinforcement learning (RL):
\begin{align}\label{eq:rlhf}
   \max_{\pi}~~ \EE_{x\sim \cD, a\sim\pi(x)}\big[\hR(x,a)\big] - \beta \KL(\pi\Vert\pi_0)
\end{align}
where $\hR$ is a reward function learned from preference data, $\cD$ is the prompt distribution, $\pi(x)$ is the policy $\pi$'s response distribution to prompt $x$, $\KL(p \Vert q) = \EE_{x\sim\cD}\left[\KL(p(x)\Vert q(x))\right]$ is the Kullback-Leibler divergence (between policies), $\piref$ is a supervised fine-tuned reference LLM, and $\beta>0$ is the regularization strength.

The RLHF objective is central to various practical algorithms and has fundamentally shaped how researchers think about alignment.
For example, DPO (Direct Policy Optimization) can be viewed as reformulating RLHF under the strong `all-policy' assumption so the objective consists of simple likelihood ratio terms rather than relying on on-policy responses~\citep{rafailov23direct}.
$\Psi$PO extends RLHF by generalizing $R(x,a)$ to a $\Psi$-transformation of the preference probability~\citep{azar24general}.
Even theoretical analyses of alignment algorithms often treat the RLHF objective or its variants as the ultimate learning-theoretic goal and prove convergence guarantees such as regret bounds \citep{zhan2024provable,xiong24iterative,zhang2024self,huang25correcting,xie25exploratory}.
Certainly, the RLHF objective has proven useful, and one may argue that it is a sensible objective.

\begin{table*}[t!]
    \centering
    \caption{Summary of our proposed methods and theoretical guarantees. In each section, we draw parallels to existing approaches such as DPO and REBEL~\cite{gao24rebel}.}\label{tab:tasks}
    \resizebox{\linewidth}{!}{%
    \begin{tabular}{@{}l|ccccc@{}}
    \toprule
        Distribution Learning & Related to & \makecell{Reward \\Model}  & \makecell{Requires \\ RL Training} & Objective & \makecell{Forward KL\\Guarantee } \\ \midrule \midrule
        \textbf{Preference MLE} (Sec.~\ref{sec:mle}) & DPO & Not Used & No & Eq.~\ref{eq:mle-nokl} & $O(1/n)$ (Thm.~\ref{thm:mle}) \\
        \textbf{Preference distillation} (Sec.~\ref{sec:pref}) & \makecell{REBEL} & Required & No & Eq.~\ref{eq:distill} & $O(1/n)$ (Thm.~\ref{thm:pref_distill}) \\
        \textbf{Reverse KL} (Sec.~\ref{sec:revkl}) & RLHF & Required & Yes & Eq.~\ref{eq:rkl} & $O(1/n)$ (Thm.~\ref{thm:reverse_kl}) \\
        \bottomrule
    \end{tabular}%
    }
    \vskip -10pt 
\end{table*}

However, the RLHF objective has no known justification from a learning-theoretic viewpoint.
First, existing theoretical guarantees typically show convergence to the solution of the population RLHF objective with the true reward (see the works cited above).
This is a tautology and can hardly be interpreted as a justification.
Second, while maximizing rewards \textit{informally} sounds reasonable, it is not clear what it \textit{formally} means because the problem definition of alignment does not involve rewards at all! Third, the \emph{a priori} dependence on reward manifests in the following issue: \eqref{eq:rlhf} is a standard machine learning objective of the form `loss + regularizer' where the loss (on-policy negative reward) learns from data and the regularizer leverages an available starting model.
With sufficient data, the regularization strength $\beta$ should reduce, and in the asymptotic regime, should become zero to ensure sufficient learning.
In this sense, asymptotically, the RLHF objective faces a dilemma: it must either tend to zero regularization so that the solution to \eqref{eq:rlhf} becomes deterministic (undesirable as a language model), or use nonzero regularization and hinder the learning process.
This suggests that, in principle, RLHF requires some correction.

Then, what kind of justification should we require? One strong form of justification is provable guarantees such as statistical convergence to the target language model (LM).
However, existing analyses are typically only applicable to their method of choice. For example, theoretical frameworks for DPO \citep{agarwal25design,kveton25active} and those for RLHF \citep{zhan2024provable,xiong24iterative,zhang2024self,huang25correcting,xie25exploratory} are incompatible (i.e., have different theoretical targets), preventing us from directly comparing their guarantees.

The limitations discussed above call for a unified learning-theoretic framework for alignment. As such, we ask:
\begin{highlight}
    \begin{quoting}[leftmargin=0.5em, rightmargin=0.5em]
    \noindent \textit{Under what assumptions (e.g., how the target LM is related to preference data) can we theoretically justify existing training objectives or even develop new ones?}
    \end{quoting}
\end{highlight}

In this paper, we take the popular learning-theoretic treatment where we first posit the existence of the target model we aim to learn and then make probabilistic assumptions about the data-generating process as a function of the target model, followed by developing objectives based on statistical principles.
This treatment is widely used in classification (e.g., logistic regression), topic modeling (e.g., latent Dirichlet allocation), and generative models (e.g., variational autoencoder).
This offers two key benefits: (i) explicit modeling assumptions which often help understand model behavior, and (ii) rigorous learning-theoretic guarantees such as consistency -- convergence to the target model as the sample size grows.
Perhaps surprisingly, such a \emph{fully probabilistic} framework for alignment remains largely unexplored.
See Appendix \ref{app:related} for related work.
\looseness=-1

\paragraph{Our contributions.}
We move beyond blindly taking the RLHF objective as the ultimate goal. We propose a novel unified theoretical framework for alignment that can be seen as a fully probabilistic approach to distribution learning from pairwise preference data, without any reliance on an \emph{a priori} notion of reward maximization. Specifically, we posit that there exists a target (oracle) LM $\pi^*$ and explicitly model how information about $\pi^*$ is revealed through preference feedback.
Intuitively, $\pi^*$ must \textit{assign a higher probability to the preferred response}, which we encode as the assumption
\begin{align}\label{eq:preference-model-intro}
  \PP(a \succ b \mid x) = \fr{\pi^*(a\mid x)^\gam}{\pi^*(a \mid x)^\gam + \pi^*(b\mid x)^\gam}
\end{align}
for some $\gam>0$ where $a \succ b$ means the response $a$ is preferred over $b$.
This is an instance of a Bradley-Terry (BT) model~\citep{bradley52rank} with preference score being the tilted response probability of $\pi^*$.
The main difference from the BT model in RLHF is that our assumption says that the preference model directly depends on the target LM rather than some reward function.
Note that our assumption says that the preference model is \emph{explicitly} a language model.
This is in stark contrast to DPO which starts from the RLHF formulation and leverages the (unrealistic) all-policy assumption to realize that there is a \emph{secret} relationship between the reward model (or preference model) and the LM~\citep{rafailov23direct}.

Our simple assumption leads to various training objectives that are relatable to existing works, and solutions of them provably converge to $\pi^*$ in terms of the KL divergence.
Specifically, we propose the following three algorithms (summarized in Table~\ref{tab:tasks}):
\begin{itemize}[leftmargin=4mm,topsep=0pt]
    \parskip=.3em
    \itemsep=0em
    \item \textbf{PMLE} (Preference Maximum Likelihood Estimate; Section~\ref{sec:mle}): This objective maximizes the likelihood of the preference model \eqref{eq:preference-model-intro}, subject to reverse KL regularization w.r.t. a reference policy $\piref$. Similarly to DPO, it is relatively straightforward to optimize.
    \item \textbf{Preference distillation} (Section~\ref{sec:pref}):
    By directly estimating the expected preference from a learned reward model, the MLE can be rewritten as distilling the preference distribution into a language model. Unlike existing reward distillation~\citep{fisch25robust,gao24rebel}, this formulation is explicitly derived from the Bradley-Terry model~\eqref{eq:preference-model-intro}.
    \item \textbf{Reverse KL} (RKL; Section~\ref{sec:revkl}):
    Since our goal is distribution learning, it is natural to optimize the reverse KL divergence $\EE_x[\KL(\hpi(x) \Vert \pi^*(x)]$. Although $\pi^*$ is unknown, its unnormalized form can be estimated from \eqref{eq:preference-model-intro} with a shallow network.
    Plugging in this estimate and adding a KL regularizer lead to a variant of the RLHF objective with an additional entropy term, which can be seen as a correction to RLHF. Figure~\ref{fig:rlhf_comparison} shows that under our preference model, RLHF with tuned $\beta$ for each sample size $n$ indeed suffers from inconsistency (i.e., does not converge to the true $\pi^*$) where as RKL is consistent.
    \item \textbf{Theoretical guarantees}: For all three algorithms, we prove upper bounds on the forward KL error of the form: $\EE_x[\KL(\pi^*(x)\Vert \hpi(x))] \le O(1/n)$ where $n$ is the size of the preference dataset.
\end{itemize}

\begin{wrapfigure}{r}{0.4\linewidth}
    \centering
    \vspace{-1.0em}
    \includegraphics[width=\linewidth]{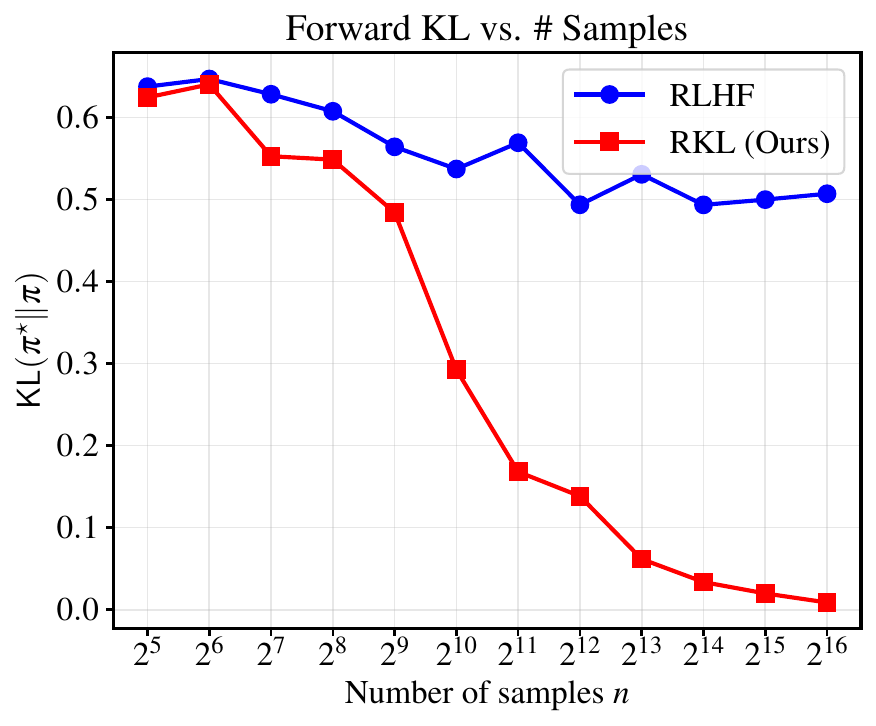}
    \vspace{-2.0em}
    \caption{RLHF vs. RKL.}
    \label{fig:rlhf_comparison}
    \vskip -10pt
\end{wrapfigure}
Our guarantees are non-asymptotic and first-of-its-kind for learning a distribution from pairwise feedback, to the best of our knowledge~\citep[cf.][]{dumoulin23density}.
Furthermore, our theoretical framework thus offers a novel statistical grounding for RLHF up to a minor correction, without resorting to the tautological justification (i.e., convergence to the population version of the training objective).
Interestingly, our guarantee suggests that RKL (similar to RLHF) has a superior guarantee compared to PMLE (similar to DPO), which can be seen as, for the first time in the literature, a theoretical confirmation of the common empirical finding that RLHF outperforms DPO.
Again, such a confirmation could not have been made with existing theories because they provide ad hoc guarantees for each method.

We complement our theory with experiments showing that our methods generally outperform baseline win-rates in TL;DR summarization and generate more preferred responses in general chat scenario; see Section~\ref{sec:exp_main} for details.

\section{Preliminaries}
\label{sec:prelim}

\paragraph{Alignment as distribution learning.} Let $\cX$ and $\cA$ be the space of prompts and responses, respectively, and let $\cD\in\Delta(\cX)$ be a fixed distribution over prompts. We define a language model (LM) as a function or policy $\pi:\cX\to\Delta(\cA)$ determining a collection of conditional (i.e., contextual) distributions $\pi(\cd\mid x)$, which we also denote more simply as $\pi(x)$.\footnote{
  This definition can also cover unconditional distributions by introducing a member in $\cX$ as a null prompt.
}
We view alignment as learning these distributions from pairwise preference feedback, drawn from a model explicitly depending on $\pi^*$, the ideal (target) LM we wish to learn. Hence given a class $\Pi$ of LMs, our ultimate goal is to find $\hpi \in \Pi$ that is as close as possible to $\pi^*$ w.r.t. a suitable measure of distance between distributions.
\looseness=-1

\paragraph{Our preference model.} Let $\mu$ be the LM used for generating responses to be preference-labeled; this could be a reference LLM or simply an existing dataset.
We are given a preference dataset $D_n = \{(x,a^+,a^-)\}$ of $n$ independent samples where $x \sim \cD$ is a prompt and $a^+/a^-$ are preferred/dispreferred responses. We assume that, given $x$, the pair $(a^+,a^-)$ is sampled by drawing responses $a,b\sim \mu(x)$ independently and then sampling a preference from $\PP_*(a\succ b\mid x) := \PP_{\pi^*}(a\succ b\mid x)$, where
\begin{align}\label{eq:preference-model}
\PP_\pi(a \succ b \mid x) := \fr{\pi(a\mid x)^\gam}{\pi(a \mid x)^\gam + \pi(b\mid x)^\gam},
\end{align}
followed by setting $(a^+,a^-)=(a,b)$ if $a$ is preferred over $b$ and $(a^+,a^-)=(b,a)$ otherwise.
The value of $\gamma$ determines the extent to which differences in the response probabilities under policy $\pi$ are accentuated or attenuated. In practice, $\gam$ is a hyperparameter typically set as $0 < \gam < 1$.

\paragraph{Is our preference model too strong?}
One may wonder if \eqref{eq:preference-model-intro} is too restrictive; we claim that it is not. Our assumption (i) takes the BT model $\PP(a\succ b | x) = p^*(a|x)/(p^*(a|x) + p^*(b|x))$
for some underlying preference $p^*$ and then (ii) connects $p^*$ with the target LM $\pi^*$.
For (i), while the BT model has drawbacks such as not allowing cyclic preferences, it is conventional in many prior works
~\cite{christiano17deep,stiennon20learning}.
Also, our distribution learning perspective can be extended to incorporate general preference models.
More importantly, (ii) is the crucial link allowing us to derive guarantees for learning $\pi^*$. Previous theoretical studies provide convergence guarantees for RLHF or other reward-maximizing objectives \citep{zhan2024provable,xiong24iterative,zhang2024self,huang25correcting,xie25exploratory} without justifying why a certain reward construction must be optimized in the first place. In contrast, the main point of our work is to identify under which assumptions such objectives can be justified. Indeed, we will see that objectives derived from our framework often resemble baselines (DPO, RLHF, REBEL), so that we are essentially making the hidden assumptions of existing methods explicit!
\looseness=-1

\paragraph{Theoretical setup.} We call $R_\pi(x,a) := \gam\ln \pi(a\mid x)$ the \emph{reward} induced by $\pi\in\Pi$.\footnote{There is no actual reward in the PMLE scheme; we just call this reward for convenience.} The centered reward is defined as $\bar R_\pi(x,a) := R_\pi(x,a) - \EE_{a\sim \mu(x)}[R_\pi(x,a) \mid x]$. As with $\PP_*$, we write $R_*:=R_{\pi^*}$ and $\bar{R}_* := \bar{R}_{\pi^*}$. Finally, we denote $\Delta \bar{R}_{\pi} := \bar R_{\pi}- \bar R_{*}$.
Our main assumptions, standard in the alignment literature \citep{zhan2024provable,xie2025exploratory,zhang2025distilldatarewardssmaller,agarwal25design,huang25correcting}, are as follows:
\begin{assumption}[Realizability]\label{assp:real}
$\pi^*\in\Pi$ for a finite policy class $\Pi$.
\end{assumption}
\begin{assumption}[Boundedness]\label{assp:bdd}
There exists $R>0$ such that $|\bar{R}_\pi(x,a)| \le \gam R$ for all $\pi\in\Pi$.
\end{assumption}
Since the responses $(a^+,a^-)$ are sampled from $\mu(x)$ rather than $\pi^*(x)$, the alignment problem is an instance of offline learning where there is a distribution shift between the observed data versus the target distribution that we aim to have guarantees on.
It is thus necessary to introduce a coverage assumption between $\mu$ and the policy class $\Pi$, which is well-studied in the offline RL literature~\cite{agarwal19reinforcement}.
In particular, we use the following generalized coverage coefficient~\citep{xie21bellman,agarwal25design}.

\begin{definition}[Generalized coverage coefficient]\label{def:cov}
For a policy class $\Pi'$, we denote by $C_{\Pi'}>0$ the smallest constant satisfying for every $\pi\in\Pi'$,
\begin{align*}
   \EE_{x\sim \cD, a\sim \pi^*(x)}\big[\Delta \bar R_\pi(x,a)^2\big] \le C_{\Pi'} \EE_{x\sim \cD, a\sim \mu(x)}\big[\Delta \bar R_\pi(x,a)^2\big].
  \end{align*}
\end{definition}
This improves upon the all-policy $\ell_\infty$-concentrability condition $\sup_{\pi\in\Pi}\max_{x,a}\tfrac{\pi(a\mid x)}{\mu(a\mid x)} \le C'$~\cite{munos03error} as the former can be bounded even if the latter is infinite, depending on $\cD$ and the reward class $\Delta \bar{R}$.%
\footnote{
    While being beyond our scope, leveraging pessimism~\cite{gabbianelli24importance,huang25correcting,zhan22offline} may further improve the coverage coefficient to a single concentrability coefficient that relies on $\pi^*$ rather than the policy class $\Pi$.
}

\section{Preference Maximum Likelihood Estimation Approach}
\label{sec:mle}

We begin by introducing a maximum likelihood-based objective that can be directly derived from treating alignment as distribution learning from pairwise feedback.
Given a
preference dataset
$D_n = \{(x, a^+, a^-)\}$ as described in Section~\ref{sec:prelim},
we wish to estimate $\pi^*$ by finding a policy $\hpi$ that maximizes the likelihood of observed pairwise preferences under the BT preference assumption \eqref{eq:preference-model}. Concretely, the negative log-likelihood for each pair $(x, a^+, a^-)$ under a candidate policy $\pi$ is:
\begin{align*}
    -\ln \PP_\pi(a^+ \succ a^- \mid x) = -\ln \sigma \left(\gamma \ln \frac{\pi(a^+ \mid x)}{\pi(a^- \mid x)}\right),
\end{align*}
where $\sigma(z) = 1/(1+\exp(-z))$ is the logistic sigmoid. Summing over all preference pairs yields
\begin{align}\label{eq:mle-nokl}
    \!\!\! \cL_{\pmle}(\pi) = \frac{1}{n} \sum_{(x,a^+,a^-)\in D_n} \!\!\!\!\!\!-\ln\sigma \left(\gamma \ln \frac{\pi(a^+ \mid x)}{\pi(a^- \mid x)}\right).
\end{align}
By minimizing $\mathcal{L}_{\pmle}$, we encourage $\pi$ to place higher probability on response $a^+$ relative to $a^-$. Note that in practice, we rarely learn a policy $\pi$ from scratch; instead, we typically optimize a
fine-tuned model, referred to as the \emph{reference policy} $\piref$. Thus, it is natural to introduce a KL penalty that keeps $\pi$ close to $\piref$ for alignment: $\beta\cd \KL(\pi(x) \Vert \piref(x))$. Putting everything together, our \textbf{PMLE} (preference maximum likelihood estimation) objective for distribution learning is
\begin{align}\label{eq:mle-objective}
    \cL_{\pmle,\beta}(\pi) := \cL_{\pmle}(\pi) + \beta \KL(\pi(x) \Vert \piref(x)).
\end{align}
\paragraph{Remark.} Recall that DPO \citep{rafailov23direct} minimizes the objective
\begin{align}\label{eq:dpo}
\sum_{D_n} -\ln\sigma\left(\gam\ln\frac{\pi(a^+ \mid x)}{\pi(a^- \mid x)}- \gam\ln\frac{\piref(a^+ \mid x)}{\piref(a^- \mid x)}\right).
\end{align}
Compared to \eqref{eq:mle-objective}, the DPO objective does not have an explicit regularizer, which could lead to undesirable behaviors if the policy class $\Pi$ is sufficiently expressive.
Specifically, \citet{fisch25robust} prove that DPO may converge to a degenerate distribution. Also, \citet{song24theimportance} show that DPO relies on a strong coverage assumption: if $\piref$ does not fully cover the relevant distribution, DPO can produce out-of-distribution responses, making its reward estimates inaccurate. Unlike RLHF, which has a KL term
to stay within the support of $\piref$, DPO can assign non-zero probability to responses that $\piref$ would never select, undermining performance guarantees.
In contrast, our PMLE objective \eqref{eq:mle-objective} incorporates an explicit KL term that effectively circumvents the aforementioned pitfalls.

\paragraph{Convergence guarantee.}

Under the assumptions in Section \ref{sec:prelim}, we show the bound on the forward KL. Throughout the paper, constants relying only on $R$ are hidden. Throughout, All proofs are deferred to Appendix \ref{sec:appendix_proofs}.
\begin{theorem}\label{thm:mle}
  The PMLE estimate $\hpi = \argmin_{\pi\in\Pi} \cL_{\pmle}(\pi)$ satisfies with probability at least $1-\delta$,
\begin{align}\label{eq:pmle_guarantee}
\EE_{x\sim\cD}\left[\KL(\pi^*(x)\Vert \hpi(x))\right] \lesssim \frac{C_{\Pi}}{\gam^2} \cd \frac{\ln(|\Pi|/\delta)}{n}.
\end{align}
\end{theorem}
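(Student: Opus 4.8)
The plan is a standard finite-class maximum-likelihood analysis in \emph{preference space}, followed by a transfer to the forward KL on the policy side via the coverage coefficient, with one non-routine lemma linking the two; throughout, constants of the form $e^{O(R)}$ are absorbed as in the statement. \emph{Step 1 (MLE concentration).} The data-generating process makes $(x,a^+,a^-)$ a conditionally sampled label: conditioned on the unordered pair $\{a,b\}$ (with $a,b\sim\mu(x)$ i.i.d., a marginal not depending on $\pi$), the identity of the winner is $\Ber(\PP_*(a\succ b\mid x))$. Hence $\cL_{\pmle}$ equals, up to a $\pi$-independent additive constant, the conditional negative log-likelihood, so $\hpi$ is a conditional MLE over the finite realizable class $\Pi$. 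The standard MLE deviation bound (a union bound over $\Pi$ on the log-likelihood-ratio supermartingale) then gives, with probability $\ge 1-\delta$,
\[
  \EE_{x\sim\cD,\ a,b\sim\mu(x)}\!\left[H^2\!\big(\PP_*(a\succ b\mid x),\,\PP_{\hpi}(a\succ b\mid x)\big)\right]\ \lesssim\ \frac{\ln(|\Pi|/\delta)}{n},
\]
where $H^2$ is the squared Hellinger distance between the two Bernoulli laws.

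\emph{Step 2 (Hellinger $\to$ centered-reward error $\to$ coverage).} Since $\PP_\pi(a\succ b\mid x)=\sigma(R_\pi(x,a)-R_\pi(x,b))$ and Assumption \ref{assp:bdd} bounds both logits in $[-2\gam R,2\gam R]$, on which $\sigma$ is bi-Lipschitz and the Bernoulli parameters stay bounded away from $\{0,1\}$, we get $H^2\gtrsim\big(\Delta\bar R_{\hpi}(x,a)-\Delta\bar R_{\hpi}(x,b)\big)^2$, the reference shift cancelling in the difference. Averaging over $a,b\sim\mu(x)$ and using $\EE_{a\sim\mu(x)}[\Delta\bar R_{\hpi}(x,a)]=0$ (both $\bar R_\pi$ are $\mu(x)$-centered by definition) turns this into $2\,\EE_{a\sim\mu(x)}[\Delta\bar R_{\hpi}(x,a)^2]$. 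Combining with Step 1 and invoking Definition \ref{def:cov} with $\Pi'=\Pi$ (valid since $\hpi\in\Pi$) yields
\[
  \EE_{x\sim\cD,\ a\sim\pi^*(x)}\!\left[\Delta\bar R_{\hpi}(x,a)^2\right]\ \lesssim\ \frac{C_\Pi\,\ln(|\Pi|/\delta)}{n}.
\]

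\emph{Step 3 (reward error controls forward KL — the crux).} It remains to show, pointwise in $x$, that $\KL(\pi^*(x)\Vert\hpi(x))\lesssim \gam^{-2}\,\EE_{a\sim\pi^*(x)}[\Delta\bar R_{\hpi}(x,a)^2]$. Put $g(x,a):=-\gam^{-1}\Delta\bar R_{\hpi}(x,a)$, so $|g|\le 2R$ and $\ln\pi^*(a\mid x)-\ln\hpi(a\mid x)-g(x,a)$ is constant in $a$; hence $\pi^*(a\mid x)=\hpi(a\mid x)\,e^{g(x,a)}/Z(x)$ with $Z(x)=\EE_{a\sim\hpi(x)}[e^{g(x,a)}]\in[e^{-2R},e^{2R}]$. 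Then $\KL(\pi^*(x)\Vert\hpi(x))=\EE_{\pi^*(x)}[g]-\ln Z(x)\le\EE_{\pi^*(x)}[g]-\EE_{\hpi(x)}[g]$ by Jensen, and the reverse tilt $\hpi=\pi^* Z e^{-g}$ gives $\EE_{\hpi(x)}[g]=Z(x)\,\EE_{\pi^*(x)}[ge^{-g}]$ and $\EE_{\pi^*(x)}[1-Z(x)e^{-g}]=0$, so the bound equals $\mathrm{Cov}_{\pi^*(x)}\!\big(g,\,1-Z(x)e^{-g}\big)$. Since $u\mapsto 1-Z(x)e^{-u}$ is $e^{4R}$-Lipschitz on $|u|\le 2R$, Cauchy–Schwarz gives $\mathrm{Cov}_{\pi^*(x)}(g,1-Z(x)e^{-g})\le e^{4R}\,\Var_{\pi^*(x)}(g)\le e^{4R}\,\EE_{\pi^*(x)}[g^2]$. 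Substituting back $g=-\gam^{-1}\Delta\bar R_{\hpi}$, taking $\EE_{x\sim\cD}$, and plugging in Step 2 proves \eqref{eq:pmle_guarantee}.

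\emph{Main obstacle.} Steps 1 and 2 are routine; the real content is Step 3, which must bound the \emph{forward} KL using only a squared reward error under $\pi^*$ — with no coverage of $\hpi$ itself by $\mu$. The device that makes this work is the tilted representation $\pi^*\propto\hpi\,e^{g}$, which rewrites $\KL(\pi^*(x)\Vert\hpi(x))$ as a covariance $\mathrm{Cov}_{\pi^*(x)}(g,1-Z(x)e^{-g})$ whose second argument is Lipschitz in $g$, so the otherwise uncontrollable $\EE_{\hpi(x)}[g]$ term collapses into $\Var_{\pi^*(x)}(g)\le\EE_{\pi^*(x)}[g^2]$. The only bookkeeping needed elsewhere is checking that all hidden constants are of order $e^{O(R)}$ (and $e^{O(\gamma R)}$), consistent with the convention in the theorem.
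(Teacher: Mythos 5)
Your proposal is correct, and Steps 1--2 (the Hellinger/squared-difference MLE concentration over the finite class, the sigmoid bi-Lipschitz conversion to $\EE_{\mu}[\Delta\bar R_{\hpi}^2]$ via the i.i.d.\ variance identity, and the transfer to $\pi^*$ through $C_\Pi$) coincide with the paper's Proposition~\ref{thm:oracle}, Lemma~\ref{lem:sig_diff}, and the first half of the proof in Appendix~\ref{app:proof_mle}. Step~3, however, is a genuinely different argument for the crux. The paper applies Schulman's trick $\KL(\pi^*\Vert\hpi)=\EE_{\pi^*}[r-1-\ln r]$ with $r=\hpi/\pi^*$, invokes Lemma~\ref{lem:kl_bounded_by_log_sq} to get $\KL\lesssim\EE_{\pi^*}[(\ln r)^2]$, and then must split $(\ln r)^2$ into the centered-reward part and a $(\ln(Z_*/Z_{\hpi}))^2$ part, spending the bulk of the proof sandwiching the log-partition ratio via Jensen and $e^u\le 1+u+\tfrac{e^A}{2}u^2$. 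You instead write $\pi^*=\hpi\, e^{g}/Z$ with $g=-\gam^{-1}\Delta\bar R_{\hpi}$, use Jensen once to get $\KL(\pi^*(x)\Vert\hpi(x))\le\EE_{\pi^*}[g]-\EE_{\hpi}[g]=\mathrm{Cov}_{\pi^*}\bigl(g,\,1-Ze^{-g}\bigr)$, and close with the Lipschitz--Cauchy--Schwarz bound $\le e^{4R}\Var_{\pi^*}(g)\le e^{4R}\gam^{-2}\EE_{\pi^*}[\Delta\bar R_{\hpi}^2]$; the normalization constant never needs to be bounded separately because it is absorbed into the covariance identity. (Your intermediate quantity $\EE_{\pi^*}[g]-\EE_{\hpi}[g]$ is exactly the symmetrized KL, so you are bounding $\KL(\pi^*\Vert\hpi)$ by $\KL(\pi^*\Vert\hpi)+\KL(\hpi\Vert\pi^*)$ and then controlling the sum.) Both routes achieve the same $1/n$ rate and the same $e^{O(R)}/\gam^2$ constants, and both avoid the extra square root that a naive application of \citet{agarwal25design} would incur; your version is more compact and sidesteps the $Z$-bookkeeping, while the paper's version is more modular in that Lemma~\ref{lem:kl_bounded_by_log_sq} and the $Z$-ratio bounds are reused verbatim in the proofs of Theorems~\ref{thm:pref_distill} and~\ref{thm:reverse_kl}.
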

The proof, provided in Appendix \ref{app:proof_mle}, is inspired in part by~\citet[proof of Theorem 3.6]{agarwal25design}, but we leverage Schulman's trick~\cite{schulman20kl} followed by a quadratic approximation to obtain a $1/n$ rate rather than $1/\sqrt{n}$ that would be obtained when directly following their proof.
Also note that the left-hand side of \eqref{eq:pmle_guarantee} is equivalent to the KL divergence between the induced \emph{joint} distributions on $\cX\times\cA$: $\KL(\cD(x)\pi^*(a\mid x)\Vert\cD(x)\hpi(a\mid x))$.

We assume $\beta=0$ here and for all analysis in the sequel for simplicity and to demonstrate that the objective derived from purely considering preference feedback via \eqref{eq:preference-model} already suffices to learn the true distribution $\pi^*$.
Nonetheless, we posit that starting from a well-aligned $\piref$ can result in improved guarantees by mitigating the dependency of constants on $R$, which we leave to future work.

Next, we turn our focus to alignment methods that require an explicit reward model. As per our philosophy, we emphasize that the methods are derived from a distribution learning perspective rather than reward maximization.

\section{Preference Distillation Approach}\label{sec:pref}

Since the popularization of RLHF, the use of reward modeling has become popular in the research community and resulted in various extensions~\citep{christiano17deep}.
While the main role of the reward model in the RLHF objective~\eqref{eq:rlhf} is to view alignment as an RL problem, recent studies have attempted to use the reward model for supervised learning losses, i.e., objectives that do not require RL to solve~\cite{guo24direct,fisch25robust}.
These efforts can be seen as \emph{distilling information} from the reward model as pointed out by~\citet{fisch25robust}.
The main benefit of these methods is that they can avoid RL algorithms, which are typically slow to converge.
While reward model training is an extra burden to perform compared to purely likelihood-based methods such as DPO or our PMLE, the compute cost for doing so is typically quite low because it usually suffices to train a shallow network on top of an existing LLM's frozen torso.

\paragraph{Reward model.}

Due to our preference model~\eqref{eq:preference-model}, learning a reward model $R:\cX\times\cA\to\RR$ is equivalent to learning a language model $\pi$ and then setting $R(x,a) = \gam \ln\pi(a\mid x)$ up to an additive constant.
Conversely, given a reward model $R(x,a)$, we can estimate an LM by
\begin{align}\label{eq:policy_from_reward_model}
  \pi(a\mid x) \propto \exp(\gam^{-1}R(x,a)), \quad \forall x\in \cX.
\end{align}
Note that this is a model from which sampling is computationally intractable in general.
Formally, we assume that we are given a reward model class $\cR$ of rewards $R:\cX \times \cA \rarrow \RR$ and learn:
\begin{align}\label{eq:reward-model-training}
  \hR = \argmin_{R\in \cR}~\sum_{D_n} -\ln\sig(R(x,a^+) - R(x,a^-)).
\end{align}
This is equivalent to the PMLE objective under \eqref{eq:policy_from_reward_model} but with the constraint $R\in\cR$.

\paragraph{Preference distillation.}
One popular method for distilling rewards is the REBEL algorithm \citep{gao24rebel}.
Motivated by the characterization of the RLHF solution under the all-policy assumption~\cite{rafailov23direct}, REBEL aims to extract information from relative reward values of paired responses, enforcing
$
    \ln \frac{\pi(a^+\mid x) / \piref(a^+ \mid x)}{\pi(a^-\mid x) / \piref(a^-\mid x)} \approx \eta(\hR(x,a^+) - \hR(x,a^-))
$
by optimizing a squared loss
\begin{align}\label{eq:rebel}
    &\sum_{D_n} \Big(\ln \frac{\pi(a^+| x) \piref(a^-| x)}{\pi(a^-| x) \piref(a^+| x)} - \eta(\hR(x,a^+) - \hR(x,a^-))\Big)^2
\end{align}
where $\eta > 0$ controls the strength of the reward signals. In our assumption, the reward model can be seen as a shifted version of $\gam \ln\pi^*(a\mid x)$, so we could optimize \eqref{eq:rebel} without the $\piref$ terms, replacing $\eta$ by $\gam^{-1}$.
However, the use of squared loss in \eqref{eq:rebel} is not well justified from a statistical perspective, and it is unclear if squared loss should be preferred over any other loss, e.g., absolute loss.

What is then the appropriate error measure?
Our framework tells us that learning a reward model amounts to learning a preference model. In other words, we have trained a \emph{preference simulator}: a non-generative language model estimate $\wtpi(a\mid x) \propto \exp(\gam^{-1}\hR(x,a))$ from which preference can be sampled for any pair of responses as $y \sim \Bernoulli(\PP_{\wtpi}(a \succ b \mid x))$. Plugging this into PMLE would yield a natural distribution learning objective.
However, this process introduces additional randomness which can hinder optimization.
Instead, observe that we can evaluate the expectation of the PMLE objective and replace the discrete label $y$ with the \emph{expected} preference
\begin{align*}
    \PP_{\wtpi}(a^+ \succ a^- \mid x) & = \fr{\wtpi(a^+ \mid x)^\gam}{\wtpi(a^+ \mid x)^\gam + \wtpi(a^- \mid x)^\gam} = \sig( \hR(x,a^+) - \hR(x,a^-)).
\end{align*}
Then, minimizing the log-loss with respect to this synthetic preference is equivalent to minimizing the KL between the binary preference distributions $\mathrm{Bern}(\PP_{\wtpi}(a^+ \succ a^- \mid x))$ and $\mathrm{Bern}(\PP_{\pi}(a^+ \succ a^- \mid x))$:
\begin{align}\label{eq:distill}
    \cL_{\distill}(\pi) := \fr{1}{n} \sum_{D_n} \KL\!\big(\mathrm{Bern}(\PP_{\wtpi}(a^+\succ a^-\mid x)) \,\Vert\, \mathrm{Bern}(\PP_\pi(a^+\succ a^-\mid x))\big) + \mathrm{const.}
\end{align}
As in the PMLE (Section \ref{sec:mle}), in practice we add a KL regularizer:
\begin{align}\label{eq:pref_distill}
    \cL_{\distill,\beta}(\pi) := \cL_{\distill}(\pi) + \beta \KL(\pi\Vert \piref).
\end{align}
We remark that the data for reward model training \eqref{eq:reward-model-training} and preference distillation \eqref{eq:distill} can come from different datasets; our theoretical analysis is easily adapted.

\paragraph{Convergence guarantee.}
The family of (non-generative) LMs induced by the reward model class $\cR$ is defined as
\begin{align*}
    \cP_\gam(\cR) := \big\{\pi: \pi(a\mid x) \propto \exp(\gam^{-1} R(x,a)), \forall a,x\in \cX \text{ for some } R\in\cR \big\}.
\end{align*}

\begin{assumption}\label{assp:reward_model}
The reward-induced LM class $\cP_\gam(\cR) \subseteq \Pi$.
\end{assumption}
This assumption is related to the generator-verifier gap, which informally states that verifying whether a given answer is correct or not is easier than generating a correct answer~\cite{li24benchmarking, west24generative}.
Such a gap implies that $\cR$ is easier to learn than $\Pi$ from a learning-theoretic perspective ($|\cR| \ll |\Pi|$), and is speculated to hold for LLMs in practice~\cite{swamy25all}.
Assumption~\ref{assp:reward_model} can also be justified by the fact that the RM is often built on top of the supervised fined-tuned model's (frozen) torso.
Denoting by $C_\cR := C_{\cP_\gam(\cR)}$ the generalized coverage coefficient of the induced subclass, under Assumption \ref{assp:reward_model} it holds that $C_\cR \le C_\Pi$.
\begin{theorem}
\label{thm:pref_distill}
The pref. distill. estimate $\hpi = \argmin\limits_{\pi\in\Pi} \cL_{\distill}(\pi)$ satisfies with probability at least $1-\delta$,
\begin{align}\label{eq:distill_guarantee}
\EE_{x\sim\cD}\left[\KL(\pi^*(x)\Vert \hpi(x))\right] \lesssim \frac{C_{\Pi}}{\gam^2} \cd \frac{\ln(|\Pi|/\delta)}{n}.
\end{align}
\end{theorem}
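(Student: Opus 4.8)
I would prove Theorem~\ref{thm:pref_distill} by a modular argument that reuses essentially all of the machinery behind Theorem~\ref{thm:mle}, the only new ingredient being a \emph{two-hop} decomposition that separates the reward-modeling error from the distillation error. The starting observation is that, up to an additive constant independent of $\pi$, the distillation objective \eqref{eq:distill} equals $\frac{1}{n}\sum_{(x,a^+,a^-)\in D_n}\KL\big(\Bernoulli(\PP_{\wtpi}(a^+\succ a^-\mid x)) \,\Vert\, \Bernoulli(\PP_\pi(a^+\succ a^-\mid x))\big)$; that is, minimizing $\cL_{\distill}$ is empirical risk minimization with the log-loss and the \emph{soft} labels $\PP_{\wtpi}(\cdot\mid x)$, which, being a conditional expectation over a virtual preference draw of the hard-label preference loss, can only be statistically more favorable than ordinary preference MLE. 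Since $\wtpi\in\cP_\gam(\cR)\subseteq\Pi$ by Assumption~\ref{assp:reward_model}, the population risk over $\Pi$ is minimized at $\wtpi$ with zero excess, so this is a well-specified MLE. Conditioning on $\wtpi$ — and, in the case where the reward-model data coincides with $D_n$, taking a union bound over the finite induced class $\cP_\gam(\cR)$, which costs only $\ln|\cP_\gam(\cR)|\le\ln|\Pi|$ since $\cP_\gam(\cR)\subseteq\Pi$ — the same fast-rate argument as in Theorem~\ref{thm:mle} (boundedness of the log-loss via Assumption~\ref{assp:bdd}, a self-bounding variance bound, and Bernstein's inequality) gives, with probability at least $1-\delta$, that $\EE_{x\sim\cD,\,a,b\sim\mu(x)}\big[\KL(\Bernoulli(\PP_{\wtpi}(a\succ b\mid x))\Vert\Bernoulli(\PP_{\hpi}(a\succ b\mid x)))\big] \lesssim \ln(|\Pi|/\delta)/n$.

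For the second hop, note that the reward-model step \eqref{eq:reward-model-training} is \emph{exactly} the PMLE objective over the subclass $\cP_\gam(\cR)$ with true preferences generated from $\pi^*$, so, invoking realizability of the reward class ($R_*\in\cR$ up to an additive constant, to which the loss is insensitive), the intermediate estimate inside the proof of Theorem~\ref{thm:mle} applied to this subclass gives with high probability $\EE_{x\sim\cD,\,a,b\sim\mu(x)}\big[\KL(\Bernoulli(\PP_{\pi^*}(a\succ b\mid x))\Vert\Bernoulli(\PP_{\wtpi}(a\succ b\mid x)))\big] \lesssim \ln(|\cP_\gam(\cR)|/\delta)/n \le \ln(|\Pi|/\delta)/n$. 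I would then convert both preference-KL bounds into bounds on centered rewards: since $\PP_\pi(a\succ b\mid x)=\sigma(R_\pi(x,a)-R_\pi(x,b))$ with $|R_\pi(x,a)-R_\pi(x,b)|\le 2\gam R$ by Assumption~\ref{assp:bdd}, a second-order Taylor expansion of $v\mapsto\KL(\Bernoulli(p)\Vert\Bernoulli(\sigma(v)))$ about its unique zero yields $\KL(\Bernoulli(\sigma(u))\Vert\Bernoulli(\sigma(v)))\gtrsim(u-v)^2$ on $u,v\in[-2\gam R,2\gam R]$ with a constant depending only on $R$; combining this with the polarization identity $\EE_{a,b\sim\mu(x)}[(g(a)-g(b))^2]=2\Var_{a\sim\mu(x)}[g(a)]$ turns the two bounds into $\EE_{x\sim\cD,a\sim\mu(x)}[(\bar R_{\hpi}-\bar R_{\wtpi})^2]\lesssim\ln(|\Pi|/\delta)/n$ and $\EE_{x\sim\cD,a\sim\mu(x)}[(\bar R_{\wtpi}-\bar R_*)^2]\lesssim\ln(|\Pi|/\delta)/n$. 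A triangle inequality in $L^2(\cD\otimes\mu)$ then merges the two hops into $\EE_{x\sim\cD,a\sim\mu(x)}[\Delta\bar R_{\hpi}(x,a)^2]\lesssim\ln(|\Pi|/\delta)/n$.

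It remains to handle distribution shift and pass to the forward KL, exactly as in the proof of Theorem~\ref{thm:mle}. Applying the generalized coverage coefficient (Definition~\ref{def:cov}) with $\Pi'=\Pi$ to the function $\Delta\bar R_{\hpi}$ (valid since $\hpi\in\Pi$) gives $\EE_{x\sim\cD,a\sim\pi^*(x)}[\Delta\bar R_{\hpi}(x,a)^2]\le C_\Pi\,\EE_{x\sim\cD,a\sim\mu(x)}[\Delta\bar R_{\hpi}(x,a)^2]\lesssim C_\Pi\ln(|\Pi|/\delta)/n$. Finally, writing $W(x,a):=\gam^{-1}\Delta\bar R_{\hpi}(x,a)$ and using that $\pi^*(a\mid x)=\exp(\gam^{-1}R_*(x,a))$ and $\hpi(a\mid x)=\exp(\gam^{-1}R_{\hpi}(x,a))$ are both normalized, one gets Schulman's identity $\KL(\pi^*(x)\Vert\hpi(x))=\ln\EE_{a\sim\pi^*(x)}[e^{W}]-\EE_{a\sim\pi^*(x)}[W]$, where the additive constant by which $\gam^{-1}(R_{\hpi}-R_*)$ and $W$ differ is invisible to this cumulant-generating-function-minus-mean. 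Since $|W|\le 2R$ by Assumption~\ref{assp:bdd}, the elementary quadratic bound $e^{w}-1-w\le c_R w^2$ valid on $[-4R,4R]$ gives $\KL(\pi^*(x)\Vert\hpi(x))\le c_R\,\EE_{a\sim\pi^*(x)}[W^2]=\frac{c_R}{\gam^2}\EE_{a\sim\pi^*(x)}[\Delta\bar R_{\hpi}(x,a)^2]$, and taking $\EE_{x\sim\cD}$ yields \eqref{eq:distill_guarantee} (all $\gam$-dependence beyond the explicit $1/\gam^2$, such as $\sigma(2\gam R)$-type factors, being absorbed into the $R$-dependent constants when $\gam$ is of constant order).

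The step I expect to be the main obstacle is the first hop: certifying the $O(1/n)$ rather than $O(1/\sqrt n)$ rate for the distillation-stage excess risk, and correctly accounting for the dependence between $\wtpi$ and $D_n$ when the two stages reuse data. Both are, however, already resolved by the ingredients of the PMLE analysis — the fast rate via Schulman's trick followed by a quadratic approximation, and the dependence via the union bound over the finite class $\cP_\gam(\cR)$, affordable precisely because $\cP_\gam(\cR)\subseteq\Pi$ so that $|\cP_\gam(\cR)|\le|\Pi|$ — so the genuine novelty of this proof is the modular two-hop recombination rather than any single hard lemma.
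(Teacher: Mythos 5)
Your proposal is correct and follows the same overall architecture as the paper's proof: a two-hop decomposition into reward-modeling error (Proposition~\ref{thm:oracle} applied to $\cP_\gam(\cR)$) and distillation error, conversion of preference-level errors into squared centered-reward differences via the sigmoid lower bound, the coverage coefficient $C_\Pi$ to handle the shift from $\mu$ to $\pi^*$, and Schulman's trick with a quadratic approximation to reach the forward KL. Two execution-level differences are worth noting. First, for the distillation hop the paper does not run a Bernstein/fast-rate argument on the soft-label log-loss; it observes that since $\wtpi\in\cP_\gam(\cR)\subseteq\Pi$ achieves \emph{exactly} zero empirical loss, the minimizer $\hpi$ must satisfy $\PP_{\hpi}(a\succ b\mid x)=\PP_{\wtpi}(a\succ b\mid x)$ on every training triple, and then bounds $\PP((\wtpi,\hpi)\in\cK)$ by a direct counting argument --- two policies whose preference probabilities differ by $\eps$ in expectation agree on $n$ i.i.d.\ samples with probability at most $(1-\eps)^n$, union-bounded over pairs in $\cP_\gam(\cR)\times\Pi$. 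Your concentration route rests on the same zero-empirical-loss fact and yields the same $O(\ln(|\Pi|/\delta)/n)$ rate, so this is a stylistic rather than substantive divergence; both implicitly require $\pi^*\in\cP_\gam(\cR)$ for the reward-model hop. Second, your identity $\KL(\pi^*(x)\Vert\hpi(x))=\ln\EE_{a\sim\pi^*(x)}[e^{W}]-\EE_{a\sim\pi^*(x)}[W]$ with $W=\gam^{-1}\Delta\bar R_{\hpi}$ absorbs the normalization constants into the cumulant generating function, which is cleaner than the paper's route of splitting off and separately bounding $\EE_x[(\ln(Z_*(x)/Z_{\hpi}(x)))^2]$; both yield the same $R$-dependent constants up to unimportant factors.
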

See Appendix \ref{app:proof_distill} for the proof.

\paragraph{Benefits of distillation.} The above rate $n\gtrsim C_\Pi\ln|\Pi|$ is equal to that of PMLE \eqref{eq:pmle_guarantee} since we assume we learn $\hpi\in\Pi$ using responses $D_n$ generated from $\mu$, same as the reward model. However, if we have access to a `stronger' base model $\pi_0$, it is natural to learn $\hpi$ using responses from $\pi_0$ instead. In this scenario, we instead obtain a rate of $\gam^{-2}(C_0\ln|\Pi| + C_\cR\ln|\cR|)$, where $C_0$ is a slightly modified coverage coefficient for $\pi_0$ instead of $\mu$. This improves over the PMLE rate if $\pi_0$ is comparatively well-aligned so that $C_0 < C_\Pi$, thus rigorously establishing the benefits of distillation with a strong base model.
See Theorem \ref{thm:banana} in Appendix \ref{app:proof_distill} for a formal statement and proof.

\section{Reverse KL Minimization Approach}\label{sec:revkl}

Our two proposed methods both maximize a preference likelihood and ultimately enjoy a guarantee on the forward KL divergence $\EE_x[\KL(\pi^*(x) \Vert \hpi(x))]$.
However, it is also plausible to aim to minimize the reverse KL divergence $\EE_x[\KL(\hpi(x) \Vert \pi^*(x))]$ to learn the distribution $\pi^*$.
Reverse KL has the well-known `mode seeking' behavior as opposed to `mode covering' behavior of the forward KL.
This mode-seeking behavior tends to find distributions that generate realistic content and has been preferred in image generative models~\cite{goodfellow14generative,mao19mode}.

In this section, we explore the reverse KL formulation for alignment under our  assumption~\eqref{eq:preference-model}, which turns out to be a generalization of the original RLHF framework \eqref{eq:rlhf} \citep{stiennon20learning,ouyang22training}.
Directly minimizing the reverse KL w.r.t. the target LM $\pi^*$ would yield:
\begin{align}\label{eq:rkl_ideal}
    \hat{\pi} = \argmin_{\pi\in\Pi} \mathbb{E}_{x \sim \mathcal{D}} \big[\mathbb{E}_{a \sim \pi(x)} [-\ln \pi^*(a \mid x)] - H(\pi(x))\big],
\end{align}
where $H(\pi(x))$ is the Shannon entropy of $\pi(x)$.
However, this requires rewards of the form $-\ln\pi^*$, which is the very object we are trying to estimate. To solve this issue, we propose to find a plugin estimator from a surrogate class of language models that are easier to train but harder to sample from.
Specifically, we determine $\wtpi = \argmin_{\pi\in \cP_\gam(\cR)} \cL_\pmle(\pi)$ (with a suitable regularization), which is equivalent to obtaining $\hR$ via~\eqref{eq:reward-model-training} followed by setting $\wtpi(a\mid x) \propto \exp(\gam^{-1}\hR(x,a))$ as before.
Then we can plug in our learned $\wtpi$ to $\pi^*$ in \eqref{eq:rkl_ideal} to arrive at the objective
\begin{align*}
\argmin_{\pi\in\Pi} \mathbb{E}_{x \sim \mathcal{D}}\left[\mathbb{E}_{a \sim \pi(x)}\big[-\gam^{-1} \hR(x,a)\big] - H(\pi(x))\right].
\end{align*}
The normalization constant, which is prohibitive to compute in practice, naturally disappears as we only require relative rewards for optimization. Lastly, we again add a KL regularizer w.r.t. $\piref$:
\begin{align}\label{eq:rkl}
    \cL_{\rkl,\beta}(\pi) :&= \frac{1}{n} \sum\limits_{(x,\cd,\cd) \in D_n} -\EE_{a \sim \pi(x)}\big[\hR(x,a)\big] - \gamma H(\pi(x)) + \beta \KL(\pi(x) \Vert \piref(x))
\end{align}
where $\beta$ and $\gamma$ control
the relative weights of the policy entropy and KL regularizer, respectively.
In practice, as in standard RLHF pipelines \citep{ouyang22training,bai22training}, one first fits a reward model $\hR$ to approximate the underlying true reward $R^*$ from pairwise preferences, then applies an RL algorithm (e.g., PPO~\citep{schulman2017proximal}) to minimize the objective \eqref{eq:rkl}.

\paragraph{Relation to RLHF.}
At face value, RKL can be seen as a generalization of RLHF since the RKL objective with $\gam=0$ is equal to the RLHF objective.
Conversely, under our preference assumption~\eqref{eq:preference-model}, RLHF itself can be interpreted as minimizing a reverse KL in the limit $\gam\rarrow0$.
Thus, our result on RKL can be viewed as providing \emph{theoretical justification} for the RLHF objective~\eqref{eq:rlhf} which has been widely viewed as the gold standard for alignment~\citep{stiennon20learning,bai22training,rafailov23direct}, \emph{while also providing a minor correction.}
Note that such a connection to RLHF may not be surprising given that the max entropy RL can be seen as reverse KL minimization~\cite{ziebart10modeling}. \looseness=-1

\paragraph{The dilemma of RLHF.}
As discussed in the introduction, RLHF suffers from the following asymptotic dichotomy, assuming $\mu$ has sufficient coverage:
\begin{enumerate}[leftmargin=6mm]
    \item \textbf{Underfitting} (fix $\beta>0$ for all $n$): The learned LM is a proper distribution but cannot be too far from the reference policy $\pi_0$.
    \item \textbf{Degenerate} (take $\beta = \beta_n \downarrow 0$ as $n\rarrow \infty$): The learned LM collapses to a degenerate solution.
\end{enumerate}
Neither is desirable.
This phenomenon can also be seen in our toy experiment shown in Figure \ref{fig:rlhf_comparison}.\footnote{Experimental details: we set both the vocabulary and context sizes to $10$ and instantiate $\pi^*,\piref$ as random categorical distributions.
For each method and $n$, we report the result from the best $\beta$ from a wide range.}
As demonstrated in Theorem \ref{thm:reverse_kl}, with our approach the forward KL indeed tends to zero as the sample size $n$ increases, whereas for RLHF, it eventually saturates at a non-zero value.

\paragraph{Convergence guarantee.} With the objective $\cL_{\rkl}:=\cL_{\rkl,0}$, we are indeed able to obtain the following guarantee proved in Appendix \ref{app:proof_reverse}:

\begin{theorem}
\label{thm:reverse_kl}
    The reverse KL estimate $\hpi = \argmin\limits_{\pi \in \Pi} \cL_{\rkl}(\pi)$ satisfies with probability at least $1 - \delta$,
    \begin{align}\label{eq:rev_guarantee}
        \EE_{x \sim \cD}\left[\KL(\pi^*(x) \Vert \hpi(x))\right] \lesssim \frac{\ln(|\Pi|/\delta)}{n} + \frac{C_{\cR}}{\gam^2}\cd \frac{\ln(|\cR|/\delta)}{n}.
    \end{align}
\end{theorem}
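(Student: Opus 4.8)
\noindent\emph{Overall strategy.} I would split the analysis along the two estimators in play: the surrogate (``reward-model'') policy $\wtpi\in\cP_\gam(\cR)$ with $\wtpi(a\mid x)\propto\exp(\gam^{-1}\hR(x,a))$ coming from \eqref{eq:reward-model-training}, and the final policy $\hpi=\argmin_{\pi\in\Pi}\cL_{\rkl,0}(\pi)$. For the surrogate I would apply Theorem~\ref{thm:mle} verbatim with $\Pi$ replaced by the subclass $\cP_\gam(\cR)$: this subclass is finite, contains $\pi^*$ (reward realizability), inherits the boundedness of Assumption~\ref{assp:bdd} since $\cP_\gam(\cR)\subseteq\Pi$, and has generalized coverage coefficient $C_\cR$ by definition, so the theorem gives, with probability at least $1-\delta$,
\[
  \EE_{x\sim\cD}\big[\KL(\pi^*(x)\Vert\wtpi(x))\big]\;\lesssim\;\frac{C_\cR}{\gam^2}\cd\frac{\ln(|\cR|/\delta)}{n},
\]
which is already the second term of \eqref{eq:rev_guarantee}. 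The rest of the argument shows the reverse-KL step loses only $\lesssim\ln(|\Pi|/\delta)/n$ beyond this.

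\noindent\emph{Reducing the objective.} Writing $\hR(x,a)=\gam\ln\wtpi(a\mid x)+\gam\ln Z_{\wtpi}(x)$ with $Z_{\wtpi}(x)=\sum_a\exp(\gam^{-1}\hR(x,a))$ and using $H(\pi(x))=-\EE_{a\sim\pi(x)}[\ln\pi(a\mid x)]$, the objective collapses to
\[
  \cL_{\rkl,0}(\pi)=\gam\cd\frac1n\sum_{x\in D_n}\KL(\pi(x)\Vert\wtpi(x))\;-\;\gam\cd\frac1n\sum_{x\in D_n}\ln Z_{\wtpi}(x),
\]
whose second term is independent of $\pi$. Hence $\hpi$ is exactly the \emph{empirical forward-KL projection of $\wtpi$ onto $\Pi$}. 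Since $\pi,\wtpi\in\Pi$, Assumption~\ref{assp:bdd} confines $\ln\big(\pi(a\mid x)/\wtpi(a\mid x)\big)$ to an interval of width $O(R)$ (the log-partition-function terms cancel up to an $O(R)$ additive error, with no $|\cA|$ dependence), so $x\mapsto\KL(\pi(x)\Vert\wtpi(x))$ is a nonnegative loss bounded by $O(R)$. Such a loss automatically satisfies a Bernstein (variance $\lesssim$ mean) condition, so the standard fast-rate ERM bound over the finite class $\Pi$, using $\pi^*\in\Pi$ as comparator, gives with high probability
\[
  \EE_{x\sim\cD}\big[\KL(\hpi(x)\Vert\wtpi(x))\big]\;\lesssim\;\EE_{x\sim\cD}\big[\KL(\pi^*(x)\Vert\wtpi(x))\big]+\frac{\ln(|\Pi|/\delta)}{n}.
\]
(For cleanliness I would train the reward model on an independent split; otherwise one additionally union-bounds over $R\in\cR$, which only affects constants and the already-present $\ln|\cR|$.)

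\noindent\emph{From forward KL to $\wtpi$ to reverse KL to $\pi^*$.} Since $\wtpi$ and $\pi^*$ are Gibbs in $\gam^{-1}\hR$ and $\gam^{-1}R_*$, and $\bar R$ is centered w.r.t.\ $\mu$, we have $\ln\wtpi(a\mid x)-\ln\pi^*(a\mid x)=\gam^{-1}\Delta\bar R_{\wtpi}(x,a)-\Phi(x)$ for some $\Phi$ depending only on $x$. Substituting this into $\KL(\hpi(x)\Vert\pi^*(x))=\KL(\hpi(x)\Vert\wtpi(x))+\EE_{a\sim\hpi(x)}\big[\ln\wtpi(a\mid x)-\ln\pi^*(a\mid x)\big]$ and eliminating $\Phi(x)$ via $\EE_{a\sim\pi^*(x)}[\ln\wtpi-\ln\pi^*]=-\KL(\pi^*(x)\Vert\wtpi(x))$ produces the exact identity
\[
  \KL(\hpi(x)\Vert\pi^*(x))=\KL(\hpi(x)\Vert\wtpi(x))-\KL(\pi^*(x)\Vert\wtpi(x))+\gam^{-1}\big(\EE_{a\sim\hpi(x)}-\EE_{a\sim\pi^*(x)}\big)\big[\Delta\bar R_{\wtpi}(x,a)\big].
\]
I would control the cross term by a change of measure from $\hpi(x)$ to $\pi^*(x)$: the Donsker--Varadhan inequality with parameter $\lambda=2/\gam$ bounds it by $\tfrac12\KL(\hpi(x)\Vert\pi^*(x))$ plus $(\gam\lambda)^{-1}=\tfrac12$ times the cumulant generating function of the centered variable $\Delta\bar R_{\wtpi}(x,\cdot)$ under $\pi^*(x)$; since $|\Delta\bar R_{\wtpi}|\le2\gam R$ by Assumption~\ref{assp:bdd} (applied to $\wtpi,\pi^*\in\Pi$), the Bennett--Bernstein bound on the CGF of a bounded centered variable makes it at most $O_R(\gam^{-2})\,\Var_{a\sim\pi^*(x)}(\Delta\bar R_{\wtpi})$. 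Finally Schulman's trick~\cite{schulman20kl} writes $\KL(\pi^*(x)\Vert\wtpi(x))=\EE_{a\sim\pi^*(x)}[e^{-L}-1+L]$ with $L=\ln\big(\pi^*(a\mid x)/\wtpi(a\mid x)\big)$, and $e^{-t}-1+t\ge c_R t^2$ on $|t|\le4R$ yields $\Var_{a\sim\pi^*(x)}(\Delta\bar R_{\wtpi})\le O_R(\gam^2)\,\KL(\pi^*(x)\Vert\wtpi(x))$; the factors of $\gam$ cancel, so the cross term is at most $\tfrac12\KL(\hpi(x)\Vert\pi^*(x))+O_R(1)\,\KL(\pi^*(x)\Vert\wtpi(x))$. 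Taking $\EE_{x\sim\cD}$, absorbing the $\tfrac12$ into the left-hand side, and plugging in the two displayed bounds gives $\EE_{x\sim\cD}[\KL(\hpi(x)\Vert\pi^*(x))]\lesssim\EE_{x\sim\cD}[\KL(\pi^*(x)\Vert\wtpi(x))]+\ln(|\Pi|/\delta)/n$; the surrogate bound and a union bound over the two high-probability events finish the proof.

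\noindent\emph{Main obstacle.} The delicate term is the cross term $\gam^{-1}(\EE_{a\sim\hpi(x)}-\EE_{a\sim\pi^*(x)})[\Delta\bar R_{\wtpi}]$: it is an expectation under the data-dependent policy $\hpi$, whereas Definition~\ref{def:cov} only transfers $\pi^*$-expectations to $\mu$-expectations, so coverage cannot be applied to it directly. The change-of-measure step circumvents this, but one must check the $\gam$-bookkeeping carefully: it works precisely because $\Delta\bar R_{\wtpi}$ scales like $\gam$, so the leftover variance term carries the $\gam^{-2}$ that matches the target rate while the coefficient in front of $\KL(\hpi\Vert\pi^*)$ stays $\tfrac12$ regardless of $\gam$. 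A secondary chore is keeping every boundedness constant free of $|\cA|$ (log-partition-function differences between Gibbs policies with $O(\gam R)$-bounded centered rewards are themselves $O(R)$), and, if the reward model and the reverse-KL step share one dataset, threading an extra union bound over $\cR$ through the ERM argument.
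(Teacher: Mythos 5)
Your proposal is correct and follows the same overall skeleton as the paper's proof: decompose through the surrogate $\wtpi\in\cP_\gam(\cR)$, show the reverse-KL step only loses $O(R\ln(|\Pi|/\delta)/n)$ because $\wtpi\in\Pi$ makes the empirical forward-KL projection realizable with zero loss, and charge the remaining error to the reward-model estimation step analyzed as in Theorem~\ref{thm:mle} over $\cP_\gam(\cR)$. The two places where you genuinely diverge are in execution. For the projection step, the paper does not invoke a Bernstein/fast-rate ERM bound; it observes that zero empirical loss forces $\hpi(x)=\wtpi(x)$ on every $x\in D_n$, bounds $\KL(\pi_1(x)\Vert\pi_2(x))\le 4R\cdot 1_{\{\pi_1(x)\neq\pi_2(x)\}}$, and runs a direct counting argument over pairs in $\Pi\times\cP_\gam(\cR)$ (which also subsumes your concern about $\wtpi$ being data-dependent, exactly as your ``union bound over $\cR$'' remark anticipates); your ERM route gives the same rate and is essentially the same mechanism dressed differently. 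For the combination step, the paper bounds $\KL(\hpi\Vert\pi^*)$ by $\EE_{a\sim\hpi}[(\ln\frac{\hpi}{\pi^*})^2]$ via Lemma~\ref{lem:kl_bounded_by_log_sq}, splits the square into $\hpi$-vs-$\wtpi$ and $\wtpi$-vs-$\pi^*$ pieces, converts the first back to $\KL(\hpi\Vert\wtpi)$ via Lemma~\ref{lem:opposite}, and handles the second by a crude $e^{4R}$ change of measure from $\hpi$ to $\pi^*$; you instead use the exact three-term identity $\KL(\hpi\Vert\pi^*)=\KL(\hpi\Vert\wtpi)-\KL(\pi^*\Vert\wtpi)+\gam^{-1}(\EE_{a\sim\hpi}-\EE_{a\sim\pi^*})[\Delta\bar R_{\wtpi}]$ and control the cross term by Donsker--Varadhan plus the variance-to-KL conversion. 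Your identity-based route is arguably cleaner and makes explicit why single-policy quantities under $\hpi$ can be absorbed into the left-hand side rather than requiring coverage, though both approaches end up with constants exponential in $R$ (yours through the CGF bound, the paper's through the likelihood-ratio bound \eqref{eq:prob_ratio}). One point both you and the paper leave implicit is that applying the Theorem~\ref{thm:mle} machinery to $\cP_\gam(\cR)$ requires $\pi^*\in\cP_\gam(\cR)$ (reward realizability), which you at least name explicitly.
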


\paragraph{Why does reverse KL attain a better bound?}

The reverse KL formulation results in an improved upper bound for the \emph{forward} KL that depends on the coverage coefficient of $\cP_\gam(\cR)$ rather than $\Pi$ as in previous bounds \eqref{eq:pmle_guarantee}, \eqref{eq:distill_guarantee}. In particular, under Assumption \ref{assp:reward_model}, $\ln|\Pi|$ and $C_{\cR}\ln|\cR|$ may both be much smaller than $C_\Pi\ln|\Pi|$, or even the improved preference distillation guarantee $C_0\ln|\Pi|$ \eqref{eq:new_distill_guarantee}.

Astute readers may wonder: How can reverse KL avoid $C_\Pi$ (or $C_0$) while preference distillation does not, even though they both leverage the reward model? The reason is that the \textit{policy learning step} of preference distillation still relies on response pairs $(a^+,a^-)$ sampled from $\mu$ (or $\pi_0$), unlike reverse KL which uses responses for the lightweight \textit{reward modeling step} only. Moreover, bounding forward instead of reverse KL (even though the objective is derived from minimizing reverse KL) allows us to avoid comparing the coverage of $\hpi$ against $\mu$, which is not guaranteed by Definition \ref{def:cov}. Nevertheless, the forward and reverse KL error may still be compared (however suffering a constant exponential in $R$), as we show in Proposition \ref{prop:klsame}.

An alternative method is to directly optimize forward KL: $\argmin_{\pi\in\Pi} \sum_{(x,\cd,\cd)\sim D_n} \KL(\wtpi(x)\Vert \pi(x))$. However, this is computationally intractable since it requires evaluating normalization constants or sampling from unnormalized distributions. We defer detailed discussion to Appendix \ref{sec:addtional_remark}.

\begin{table}[t]
    \centering
    \caption{\textbf{Results on TL;DR dataset with Pythia 2.8B and 6.9B}. Win-rate is evaluated by GPT-4 and reward model (RM) score evaluated by the trained reward model. We report mean/std over three random seeds.}
    \label{tab:tldr}
    \resizebox{\linewidth}{!}{%
    \begin{tabular}{lcccccc}
    \toprule
    & \multicolumn{3}{c}{Pythia 2.8B} & \multicolumn{3}{c}{Pythia 6.9B} \\
    \cmidrule(lr){2-4}\cmidrule(lr){5-7}
    Algorithm & Win-rate$(\uparrow)$ & RM score$(\uparrow)$ & $\KL(\pi \Vert \piref)(\downarrow)$ & Win-rate$(\uparrow)$ & RM score$(\uparrow)$ & $\KL(\pi \Vert \piref)(\downarrow)$ \\
    \midrule
    DPO            & 47.3\tiny{$\pm$1.01} & \textbf{2.66}\tiny{$\pm$0.03} & 64.35\tiny{$\pm$0.68} & 55.6\tiny{$\pm$0.53} & 6.09\tiny{$\pm$0.02} & 54.45\tiny{$\pm$0.82} \\
    PMLE           & \textbf{49.1}\tiny{$\pm$1.43} & 2.38\tiny{$\pm$0.04} & \textbf{31.42}\tiny{$\pm$0.79} & \textbf{59.0}\tiny{$\pm$0.74} & \textbf{6.27}\tiny{$\pm$0.03} & \textbf{24.20}\tiny{$\pm$0.92} \\
    \cmidrule{1-7}\morecmidrules\cmidrule{1-7}
    REBEL          & 71.0\tiny{$\pm$1.26} & 2.86\tiny{$\pm$0.02} & \textbf{25.67}\tiny{$\pm$0.61} & 82.5\tiny{$\pm$0.93} & 7.64\tiny{$\pm$0.02} & 29.80\tiny{$\pm$0.71} \\
    Pref. Distill. & \textbf{73.8}\tiny{$\pm$1.47} & \textbf{2.91}\tiny{$\pm$0.03} & 28.38\tiny{$\pm$0.82} & \textbf{83.9}\tiny{$\pm$0.59} & \textbf{7.66}\tiny{$\pm$0.02} & \textbf{29.05}\tiny{$\pm$1.02} \\
    \cmidrule{1-7}\morecmidrules\cmidrule{1-7}
    RLHF           & 72.0\tiny{$\pm$0.67} & 2.95\tiny{$\pm$0.05} & \textbf{24.94}\tiny{$\pm$1.12} & 82.0\tiny{$\pm$0.92} & \textbf{7.85}\tiny{$\pm$0.03} & 24.38\tiny{$\pm$0.77} \\
    Reverse KL     & \textbf{73.0}\tiny{$\pm$0.93} & \textbf{3.03}\tiny{$\pm$0.04} & 25.91\tiny{$\pm$0.94} & \textbf{84.0}\tiny{$\pm$1.01} & 7.77\tiny{$\pm$0.01} & \textbf{24.12}\tiny{$\pm$0.63} \\
    \bottomrule
    \end{tabular}%
    }
    \vskip -10pt 
\end{table}

\begin{table}[t]
    \caption{\textbf{General chat results on UltraFeedback}. LLM-as-a-judge evaluations on MT-Bench, AlpacaEval 2.0 (length-controlled and raw win-rate, \%), and Arena Hard v0.1 (\%). Bold denotes the winner within each pair (DPO/PMLE and REBEL/Preference Distillation).}
    \vskip -5pt 
    \label{tab:academic}
    \centering
    \resizebox{0.75\linewidth}{!}{%
    \begin{tabular}{lcccc}
        \toprule
        \multirow{2}{*}[\dimexpr0.5\baselineskip\relax]{Model} & \multirow{2}{*}[\dimexpr0.5\baselineskip\relax]{MT-Bench} & \multicolumn{2}{c}{AlpacaEval 2.0} & \multirow{2}{*}[\dimexpr0.5\baselineskip\relax]{Arena Hard} \\
        & & LC win-rate & Win-rate & \\
        \midrule
        Base (LLaMA-3-8B-Instruct) & 8.10 & 29.62 & 28.71 & 26.8 \\
        \cmidrule{1-5}\morecmidrules\cmidrule{1-5}
        DPO            & \textbf{8.32} & 44.72 & 43.94 & 45.1 \\
        PMLE           & 8.30 & \textbf{53.23} & \textbf{50.43} & \textbf{46.5} \\
        \cmidrule{1-5}\morecmidrules\cmidrule{1-5}
        REBEL          & 8.17 & 52.04 & 44.99 & 39.8 \\
        Pref.~Distill. & \textbf{8.18} & \textbf{54.49} & \textbf{46.86} & \textbf{41.9} \\
        \bottomrule
    \end{tabular}
    }
    \vskip -10pt 
\end{table}

\section{Experiments}
\label{sec:exp_main}

In this section, we present empirical results showing that our framework yields competitive performance in practice. Specifically, we compare our proposed methods (PMLE, reverse KL, and preference distillation) against their well-established baselines (DPO, RLHF, and REBEL) on a range of language tasks.

\begin{highlight}
    \begin{quoting}[leftmargin=0.5em, rightmargin=0.5em]
    \textbf{Evaluation Point.} \textit{Can theoretically-justified objectives derived purely from our distribution-learning assumption~\eqref{eq:preference-model} remain competitive with strong empirical baselines (DPO, REBEL, RLHF) in practice?}
    \end{quoting}
\end{highlight}

\subsection{TL;DR Summarization}

We follow the standard TL;DR setup~\citep{stiennon20learning} of \citet{gao24rebel,song24theimportance} with Pythia-2.8B and Pythia-6.9B~\citep{biderman2023pythia}; full training and evaluation details are deferred to Appendix~\ref{app:tldr}. We report the GPT-4 win-rate against human references (over 600 test samples), the trained reward model score, and $\KL(\pi\Vert\piref)$.

\paragraph{Results.} See Table~\ref{tab:tldr} for the results. For both Pythia-2.8B and Pythia-6.9B, our distribution-learning objectives achieve higher win-rates than their respective baselines in this experiment, where win-rate serves as the most direct measure of LM quality.
Additionally, since PMLE implements a KL regularizer with online data, it achieves a much lower KL term compared to DPO, which solely relies on the offline dataset; this finding aligns with the results reported by \citet{song24theimportance}.
As for RLHF and REBEL, both methods use the same KL penalty for each experiment, naturally leading to similar $\KL(\pi \Vert \piref)$ values.
Overall, these experiments provide evidence that the algorithms derived from our assumption~\eqref{eq:preference-model-intro} can remain competitive with popular baselines.

\subsection{General Chat}

We train LLaMA-3-8B-Instruct~\citep{grattafiori2024llama} on UltraFeedback~\citep{cui2023ultrafeedback} using ArmoRM-Llama3-8B-v0.1~\citep{wang2024interpretable} as the reward model, following \citet{gao24rebel}; full setup is in Appendix~\ref{app:general}. We focus on the likelihood-based pair (PMLE vs.\ DPO) and the reward-distillation pair (Pref.\ Distill.\ vs.\ REBEL), since these differ algorithmically from their baselines, whereas RKL only adds an entropy term to the same PPO update as RLHF. Chat quality is evaluated with three LLM-as-a-judge benchmarks: MT-Bench~\citep{zheng2023judging}, AlpacaEval~2.0~\citep{dubois2024lengthcontrolled}, and Arena Hard~\citep{li2025crowdsourced}.

\paragraph{Results.} Our main goal is to examine whether objectives derived from the distribution-learning view remain competitive with strong empirical baselines in realistic chat settings. The results are encouraging: PMLE and preference distillation perform favorably against DPO and REBEL across the evaluated benchmarks, with particularly strong alignment-quality gains on AlpacaEval~2.0 and Arena Hard while remaining comparable on MT-Bench. Together with the TL;DR results, these findings suggest that our theoretically motivated objectives are not only competitive with strong baselines, but also promising for improving practical alignment performance.
\looseness=-1

\section{Conclusion}
\label{sec:conclusion}

We formulated alignment as distribution learning from a single explicit assumption~\eqref{eq:preference-model} on how preferences relate to the target LM, and derived three principled methods (PMLE, preference distillation, RKL) that correct and generalize existing approaches with $O(1/n)$ forward-KL guarantees, accompanied by empirical results showing competitive performance and gains in several TL;DR and chat-evaluation settings.
Future directions include comparing mode-seeking versus mode-covering objectives across domains, extending the framework to alternative divergences and preference models, and tightening the exponential dependence on $R$ via stronger assumptions on $\piref$.
\looseness=-1


\bibliographystyle{unsrtnat}
\bibliography{neurips_2026}

@preamble{ " \newcommand{\noop}[1]{} " }

@STRING{iclr="Proceedings of the International Conference on Learning Representations (ICLR)"}

@STRING{colt="Proceedings of the Conference on Learning Theory (COLT)"}

@STRING{icml="Proceedings of the International Conference on Machine Learning (ICML)"}

@STRING{neurips="Advances in Neural Information Processing Systems (NeurIPS)"}

@STRING{alt="International Conference on Algorithmic Learning Theory (ALT)"}

@STRING{tmlr="Transactions on Machine Learning Research (TMLR)"}

@STRING{emnlp="Proceedings of the Conference on Empirical Methods in Natural Language Processing (EMNLP)"}

@misc{kveton25active,
      title={Active Learning for Direct Preference Optimization}, 
      author={Branislav Kveton and Xintong Li and Julian McAuley and Ryan Rossi and Jingbo Shang and Junda Wu and Tong Yu},
      year={2025},
      eprint={2503.01076}
}

@inproceedings{hong24orpo,
  author       = {Jiwoo Hong and Noah Lee and James Thorne},
  title        = {{ORPO:} Monolithic Preference Optimization without Reference Model},
  booktitle    = emnlp,
  year         = {2024}
}

@InProceedings{ethayarajh25model,
  title = 	 {Model Alignment as Prospect Theoretic Optimization},
  author =       {Ethayarajh, Kawin and Xu, Winnie and Muennighoff, Niklas and Jurafsky, Dan and Kiela, Douwe},
  booktitle =  icml,
  year = 	 {2024}
}

@article{yuan23rrhf,
  title={Rrhf: Rank responses to align language models with human feedback},
  author={Yuan, Hongyi and Yuan, Zheng and Tan, Chuanqi and Wang, Wei and Huang, Songfang and Huang, Fei},
  journal=neurips,
  year={2023}
}

@article{amini24direct,
  author       = {Afra Amini and
                  Tim Vieira and
                  Ryan Cotterell},
  title        = {Direct Preference Optimization with an Offset},
  journal      = {CoRR},
  volume       = {abs/2402.10571},
  year         = {2024}
}

@inproceedings{park24disentangling,
  author       = {Ryan Park and
                  Rafael Rafailov and
                  Stefano Ermon and
                  Chelsea Finn},
  title        = {Disentangling Length from Quality in Direct Preference Optimization},
  booktitle    = {Findings of the Association for Computational Linguistics (ACL)},
  year         = {2024}
}

@InProceedings{tang24generalized,
  title = 	 {Generalized Preference Optimization: A Unified Approach to Offline Alignment},
  author =       {Tang, Yunhao and Guo, Zhaohan Daniel and Zheng, Zeyu and Calandriello, Daniele and Munos, Remi and Rowland, Mark and Richemond, Pierre Harvey and Valko, Michal and Avila Pires, Bernardo and Piot, Bilal},
  booktitle = icml,
  year = 	 {2024}
}

@article{zhao23slic-hf,
  publtype={informal},
  author={Yao Zhao and Rishabh Joshi and Tianqi Liu and Misha Khalman and Mohammad Saleh and Peter J. Liu},
  title={SLiC-HF: Sequence Likelihood Calibration with Human Feedback},
  year={2023},
  journal={CoRR},
  volume={abs/2305.10425},
}

@inproceedings{zhao23calibrating,
title={Calibrating Sequence likelihood Improves Conditional Language Generation},
author={Yao Zhao and Mikhail Khalman and Rishabh Joshi and Shashi Narayan and Mohammad Saleh and Peter J Liu},
booktitle=iclr,
year={2023}
}

@misc{schulman20kl,
  author={Schulman, John},
  url={http://joschu.net/blog/kl-approx.html},
  year={2020}
}

@book{ziebart10modeling,
  title={Modeling purposeful adaptive behavior with the principle of maximum causal entropy},
  author={Ziebart, Brian D},
  year={2010},
  publisher={Carnegie Mellon University}
}

@inproceedings{munos03error,
  title={Error bounds for approximate policy iteration},
  author={Munos, R{\'e}mi},
  booktitle=icml,
  year={2003},
}

@inproceedings{mao19mode,
  title={Mode seeking generative adversarial networks for diverse image synthesis},
  author={Mao, Qi and Lee, Hsin-Ying and Tseng, Hung-Yu and Ma, Siwei and Yang, Ming-Hsuan},
  booktitle={Proceedings of the IEEE/CVF conference on computer vision and pattern recognition},
  pages={1429--1437},
  year={2019}
}

@article{goodfellow14generative,
  title={Generative adversarial nets},
  author={Goodfellow, Ian J and Pouget-Abadie, Jean and Mirza, Mehdi and Xu, Bing and Warde-Farley, David and Ozair, Sherjil and Courville, Aaron and Bengio, Yoshua},
  journal=neurips,
  year={2014}
}

@InProceedings{zhan22offline,
  title = 	 {Offline Reinforcement Learning with Realizability and Single-policy Concentrability},
  author =       {Zhan, Wenhao and Huang, Baihe and Huang, Audrey and Jiang, Nan and Lee, Jason},
  booktitle = colt,
  year = 	 {2022}
}

@article{dumoulin23density,
  title={A density estimation perspective on learning from pairwise human preferences},
  author={Dumoulin, Vincent and Johnson, Daniel D and Castro, Pablo Samuel and Larochelle, Hugo and Dauphin, Yann},
  journal={arXiv preprint arXiv:2311.14115},
  year={2023}
}

@article{agarwal19reinforcement,
  title={Reinforcement learning: Theory and algorithms},
  author={Agarwal, Alekh and Jiang, Nan and Kakade, Sham M and Sun, Wen},
  journal={CS Dept., UW Seattle, Seattle, WA, USA, Tech. Rep},
  year={2019}
}

@inproceedings{xie21bellman,
 author = {Xie, Tengyang and Cheng, Ching-An and Jiang, Nan and Mineiro, Paul and Agarwal, Alekh},
 booktitle = neurips,
 title = {Bellman-consistent Pessimism for Offline Reinforcement Learning},
 year = {2021}
}

@article{bradley52rank,
  title={Rank analysis of incomplete block designs: I. The method of paired comparisons},
  author={Bradley, Ralph Allan and Terry, Milton E},
  journal={Biometrika},
  volume={39},
  number={3/4},
  pages={324--345},
  year={1952},
  publisher={JSTOR}
}

@article{gao24rebel,
  title={Rebel: Reinforcement learning via regressing relative rewards},
  author={Gao, Zhaolin and Chang, Jonathan and Zhan, Wenhao and Oertell, Owen and Swamy, Gokul and Brantley, Kiant{\'e} and Joachims, Thorsten and Bagnell, Drew and Lee, Jason D and Sun, Wen},
  journal=neurips,
  year={2024}
}

@InProceedings{xiong24iterative,
  title = 	 {Iterative Preference Learning from Human Feedback: Bridging Theory and Practice for {RLHF} under {KL}-constraint},
  author =       {Xiong, Wei and Dong, Hanze and Ye, Chenlu and Wang, Ziqi and Zhong, Han and Ji, Heng and Jiang, Nan and Zhang, Tong},
  booktitle = icml,
  year = 	 {2024}
}

@inproceedings{xie25exploratory,
title={Exploratory Preference Optimization: Harnessing Implicit Q*-Approximation for Sample-Efficient {RLHF}},
author={Tengyang Xie and Dylan J Foster and Akshay Krishnamurthy and Corby Rosset and Ahmed Hassan Awadallah and Alexander Rakhlin},
booktitle={International Conference on Learning Representations},
year={2025}
}

@inproceedings{huang25correcting,
title={Correcting the Mythos of {KL}-Regularization: Direct Alignment without Overoptimization via Chi-Squared Preference Optimization},
author={Audrey Huang and Wenhao Zhan and Tengyang Xie and Jason D. Lee and Wen Sun and Akshay Krishnamurthy and Dylan J Foster},
booktitle={International Conference on Learning Representations},
year={2025}
}

@article{grattafiori2024llama,
  title={The llama 3 herd of models},
  author={Grattafiori, Aaron and Dubey, Abhimanyu and Jauhri, Abhinav and Pandey, Abhinav and Kadian, Abhishek and Al-Dahle, Ahmad and Letman, Aiesha and Mathur, Akhil and Schelten, Alan and Vaughan, Alex and others},
  journal={arXiv preprint arXiv:2407.21783},
  year={2024}
}

@inproceedings{azar24general,
  title={A general theoretical paradigm to understand learning from human preferences},
  author={Azar, Mohammad Gheshlaghi and Guo, Zhaohan Daniel and Piot, Bilal and Munos, Remi and Rowland, Mark and Valko, Michal and Calandriello, Daniele},
  booktitle={International Conference on Machine Learning},
  year={2024},
}

@article{ouyang22training,
  title={Training language models to follow instructions with human feedback},
  author={Ouyang, Long and Wu, Jeffrey and Jiang, Xu and Almeida, Diogo and Wainwright, Carroll and Mishkin, Pamela and Zhang, Chong and Agarwal, Sandhini and Slama, Katarina and Ray, Alex and others},
  journal=neurips,
  year={2022}
}

@article{bai22training,
  title={Training a helpful and harmless assistant with reinforcement learning from human feedback},
  author={Bai, Yuntao and Jones, Andy and Ndousse, Kamal and Askell, Amanda and Chen, Anna and DasSarma, Nova and Drain, Dawn and Fort, Stanislav and Ganguli, Deep and Henighan, Tom and others},
  journal={arXiv preprint arXiv:2204.05862},
  year={2022}
}

@article{stiennon20learning,
  title={Learning to summarize with human feedback},
  author={Stiennon, Nisan and Ouyang, Long and Wu, Jeffrey and Ziegler, Daniel and Lowe, Ryan and Voss, Chelsea and Radford, Alec and Amodei, Dario and Christiano, Paul F},
  journal=neurips,
  year={2020}
}

@article{christiano17deep,
  title={Deep reinforcement learning from human preferences},
  author={Christiano, Paul F and Leike, Jan and Brown, Tom and Martic, Miljan and Legg, Shane and Amodei, Dario},
  journal=neurips,
  year={2017}
}

@article{agarwal25design,
  title={Design Considerations in Offline Preference-based RL},
  author={Agarwal, Alekh and Dann, Christoph and Marinov, Teodor V},
  journal={arXiv preprint arXiv:2502.06861},
  year={2025}
}

@article{meng24simpo,
  title={Simpo: Simple preference optimization with a reference-free reward},
  author={Meng, Yu and Xia, Mengzhou and Chen, Danqi},
  journal=neurips,
  volume={37},
  pages={124198--124235},
  year={2024}
}

@article{foster21efficient,
  title={Efficient first-order contextual bandits: Prediction, allocation, and triangular discrimination},
  author={Foster, Dylan J and Krishnamurthy, Akshay},
  journal=neurips,
  pages={18907--18919},
  year={2021}
}

@article{fisch25robust,
  title={Robust Preference Optimization through Reward Model Distillation},
  author={Adam Fisch and Jacob Eisenstein and Vicky Zayats and Alekh Agarwal and Ahmad Beirami and Chirag Nagpal and Peter Shaw and Jonathan Berant},
  journal=tmlr,
  issn={2835-8856},
  year={2025},
}

@article{song24theimportance,
  title={The importance of online data: Understanding preference fine-tuning via coverage},
  author={Song, Yuda and Swamy, Gokul and Singh, Aarti and Bagnell, J and Sun, Wen},
  journal=neurips,
  volume={37},
  pages={12243--12270},
  year={2024}
}

@article{guo24direct,
  title={Direct language model alignment from online ai feedback},
  author={Guo, Shangmin and Zhang, Biao and Liu, Tianlin and Liu, Tianqi and Khalman, Misha and Llinares, Felipe and Rame, Alexandre and Mesnard, Thomas and Zhao, Yao and Piot, Bilal and others},
  journal={arXiv preprint arXiv:2402.04792},
  year={2024}
}

@misc{swamy25all,
      title={All Roads Lead to Likelihood: The Value of Reinforcement Learning in Fine-Tuning}, 
      author={Gokul Swamy and Sanjiban Choudhury and Wen Sun and Zhiwei Steven Wu and J. Andrew Bagnell},
      year={2025},
      eprint={2503.01067},
      archivePrefix={arXiv}
}

@article{rafailov23direct,
  title={Direct preference optimization: Your language model is secretly a reward model},
  author={Rafailov, Rafael and Sharma, Archit and Mitchell, Eric and Manning, Christopher D and Ermon, Stefano and Finn, Chelsea},
  journal=neurips,
  volume={36},
  pages={53728--53741},
  year={2023}
}

@inproceedings{gabbianelli24importance,
  title={Importance-weighted offline learning done right},
  author={Gabbianelli, Germano and Neu, Gergely and Papini, Matteo},
  booktitle={International Conference on Algorithmic Learning Theory (ALT)},
  pages={614--634},
  year={2024},
  organization={PMLR}
}

@inproceedings{li24benchmarking,
  title={Benchmarking and Improving Generator-Validator Consistency of Language Models},
  author={Li, Xiang Lisa and Shrivastava, Vaishnavi and Li, Siyan and Hashimoto, Tatsunori and Liang, Percy},
  booktitle={The Twelfth International Conference on Learning Representations},
  year={2024}
}

@inproceedings{west24generative,
  title={The Generative AI Paradox:“What It Can Create, It May Not Understand”},
  author={West, Peter and Lu, Ximing and Dziri, Nouha and Brahman, Faeze and Li, Linjie and Hwang, Jena D and Jiang, Liwei and Fisher, Jillian and Ravichander, Abhilasha and Chandu, Khyathi and others},
  booktitle={The Twelfth International Conference on Learning Representations},
  year={2024}
}

@book{van2009empirical,
  title={Empirical Processes in M-Estimation},
  author={van de Geer, S.A.},
  series={Cambridge Series in Statistical and Probabilistic Mathematics},
  year={2009},
  publisher={Cambridge University Press}
}

@article{tong07entropy,
author = {Zhang, Tong},
year = {2007},
title = {From $\epsilon$-entropy to KL-entropy: Analysis of minimum information complexity density estimation},
volume = {34},
journal = {The Annals of Statistics}
}

@inproceedings{biderman2023pythia,
  title={Pythia: A suite for analyzing large language models across training and scaling},
  author={Biderman, Stella and Schoelkopf, Hailey and Anthony, Quentin Gregory and Bradley, Herbie and O’Brien, Kyle and Hallahan, Eric and Khan, Mohammad Aflah and Purohit, Shivanshu and Prashanth, USVSN Sai and Raff, Edward and others},
  booktitle={International Conference on Machine Learning},
  pages={2397--2430},
  year={2023},
  organization={PMLR}
}

@article{schulman2017proximal,
  title={Proximal policy optimization algorithms},
  author={Schulman, John and Wolski, Filip and Dhariwal, Prafulla and Radford, Alec and Klimov, Oleg},
  journal={arXiv preprint arXiv:1707.06347},
  year={2017}
}

@article{cui2023ultrafeedback,
  title={Ultrafeedback: Boosting language models with high-quality feedback},
  author={Cui, Ganqu and Yuan, Lifan and Ding, Ning and Yao, Guanming and Zhu, Wei and Ni, Yuan and Xie, Guotong and Liu, Zhiyuan and Sun, Maosong},
  journal={CoRR},
  year={2023}
}

@inproceedings{
xie2025exploratory,
title={Exploratory Preference Optimization: Harnessing Implicit Q*-Approximation for Sample-Efficient {RLHF}},
author={Tengyang Xie and Dylan J Foster and Akshay Krishnamurthy and Corby Rosset and Ahmed Hassan Awadallah and Alexander Rakhlin},
booktitle={The Thirteenth International Conference on Learning Representations},
year={2025},
url={https://openreview.net/forum?id=QYigQ6gXNw}
}

@article{ziegler2019fine,
  title={Fine-tuning language models from human preferences},
  author={Ziegler, Daniel M and Stiennon, Nisan and Wu, Jeffrey and Brown, Tom B and Radford, Alec and Amodei, Dario and Christiano, Paul and Irving, Geoffrey},
  journal={arXiv preprint arXiv:1909.08593},
  year={2019}
}

@misc{nakano2022webgpt,
      title={{WebGPT}: Browser-assisted question-answering with human feedback}, 
      author={Reiichiro Nakano and Jacob Hilton and Suchir Balaji and Jeff Wu and Long Ouyang and Christina Kim and Christopher Hesse and Shantanu Jain and Vineet Kosaraju and William Saunders and Xu Jiang and Karl Cobbe and Tyna Eloundou and Gretchen Krueger and Kevin Button and Matthew Knight and Benjamin Chess and John Schulman},
      year={2022},
      eprint={2112.09332},
      archivePrefix={arXiv},
      primaryClass={cs.CL},
      url={https://arxiv.org/abs/2112.09332}, 
}

@InProceedings{liang2024rich,
    author    = {Liang, Youwei and He, Junfeng and Li, Gang and Li, Peizhao and Klimovskiy, Arseniy and Carolan, Nicholas and Sun, Jiao and Pont-Tuset, Jordi and Young, Sarah and Yang, Feng and Ke, Junjie and Dvijotham, Krishnamurthy Dj and Collins, Katherine M. and Luo, Yiwen and Li, Yang and Kohlhoff, Kai J and Ramachandran, Deepak and Navalpakkam, Vidhya},
    title     = {Rich Human Feedback for Text-to-Image Generation},
    booktitle = {Proceedings of the IEEE/CVF Conference on Computer Vision and Pattern Recognition (CVPR)},
    month     = {June},
    year      = {2024},
    pages     = {19401-19411}
}

@misc{lee2023aligning,
      title={Aligning Text-to-Image Models using Human Feedback}, 
      author={Kimin Lee and Hao Liu and Moonkyung Ryu and Olivia Watkins and Yuqing Du and Craig Boutilier and Pieter Abbeel and Mohammad Ghavamzadeh and Shixiang Shane Gu},
      year={2023},
      eprint={2302.12192},
      archivePrefix={arXiv},
      primaryClass={cs.LG},
      url={https://arxiv.org/abs/2302.12192}, 
}

@misc{kaufmann2024survey,
      title={A Survey of Reinforcement Learning from Human Feedback}, 
      author={Timo Kaufmann and Paul Weng and Viktor Bengs and Eyke Hüllermeier},
      year={2024},
      eprint={2312.14925},
      archivePrefix={arXiv},
      primaryClass={cs.LG},
      url={https://arxiv.org/abs/2312.14925}, 
}

@misc{openai2022chatgpt,
  author       = {{OpenAI}},
  title        = {{ChatGPT: Optimizing Language Models for Dialogue}},
  howpublished = {\url{https://openai.com/blog/chatgpt}},
  year         = {2022},
  note         = {Accessed: 2025-05-19}
}

@inproceedings{
tunstall2024zephyr,
title={Zephyr: Direct Distillation of {LM} Alignment},
author={Lewis Tunstall and Edward Emanuel Beeching and Nathan Lambert and Nazneen Rajani and Kashif Rasul and Younes Belkada and Shengyi Huang and Leandro Von Werra and Cl{\'e}mentine Fourrier and Nathan Habib and Nathan Sarrazin and Omar Sanseviero and Alexander M Rush and Thomas Wolf},
booktitle={First Conference on Language Modeling},
year={2024}
}

@inproceedings{Cal-DPO2024,
  title={Cal-DPO: Calibrated Direct Preference Optimization for Language Model Alignment },
  author={Xiao, Teng and Yuan, Yige and Zhu, Huaisheng and Li, Mingxiao and Honavar, Vasant G},
  booktitle={The Thirty-eighth Annual Conference on Neural Information Processing Systems (NeurIPS)},
  year={2024}
}

@misc{wallace2023diffusion,
      title={Diffusion Model Alignment Using Direct Preference Optimization}, 
      author={Bram Wallace and Meihua Dang and Rafael Rafailov and Linqi Zhou and Aaron Lou and Senthil Purushwalkam and Stefano Ermon and Caiming Xiong and Shafiq Joty and Nikhil Naik},
      year={2023},
      eprint={2311.12908},
      archivePrefix={arXiv},
      primaryClass={cs.CV},
      url={https://arxiv.org/abs/2311.12908}, 
}

@misc{qi2024onlinedpo,
      title={Online DPO: Online Direct Preference Optimization with Fast-Slow Chasing}, 
      author={Biqing Qi and Pengfei Li and Fangyuan Li and Junqi Gao and Kaiyan Zhang and Bowen Zhou},
      year={2024},
      eprint={2406.05534},
      archivePrefix={arXiv},
      primaryClass={cs.AI},
      url={https://arxiv.org/abs/2406.05534}
}

@misc{liu2024iterative,
      title={Iterative Length-Regularized Direct Preference Optimization: A Case Study on Improving 7B Language Models to GPT-4 Level}, 
      author={Jie Liu and Zhanhui Zhou and Jiaheng Liu and Xingyuan Bu and Chao Yang and Han-Sen Zhong and Wanli Ouyang},
      year={2024},
      eprint={2406.11817},
      archivePrefix={arXiv},
      primaryClass={cs.CL},
      url={https://arxiv.org/abs/2406.11817}
}

@misc{nath2025simul,
      title={Simultaneous Reward Distillation and Preference Learning: Get You a Language Model Who Can Do Both}, 
      author={Abhijnan Nath and Changsoo Jung and Ethan Seefried and Nikhil Krishnaswamy},
      year={2025},
      eprint={2410.08458},
      archivePrefix={arXiv},
      primaryClass={cs.LG},
      url={https://arxiv.org/abs/2410.08458}, 
}

@misc{zhang2025distilldatarewardssmaller,
      title={Distill Not Only Data but Also Rewards: Can Smaller Language Models Surpass Larger Ones?}, 
      author={Yudi Zhang and Lu Wang and Meng Fang and Yali Du and Chenghua Huang and Jun Wang and Qingwei Lin and Mykola Pechenizkiy and Dongmei Zhang and Saravan Rajmohan and Qi Zhang},
      year={2025},
      eprint={2502.19557},
      archivePrefix={arXiv},
      primaryClass={cs.CL},
      url={https://arxiv.org/abs/2502.19557}, 
}

@inproceedings{
huang2024the,
title={The N+ Implementation Details of {RLHF} with {PPO}: A Case Study on {TL};{DR} Summarization},
author={Shengyi Huang and Michael Noukhovitch and Arian Hosseini and Kashif Rasul and Weixun Wang and Lewis Tunstall},
booktitle={First Conference on Language Modeling},
year={2024},
url={https://openreview.net/forum?id=kHO2ZTa8e3}
}

@inproceedings{
hu2022lora,
title={Lo{RA}: Low-Rank Adaptation of Large Language Models},
author={Edward J Hu and yelong shen and Phillip Wallis and Zeyuan Allen-Zhu and Yuanzhi Li and Shean Wang and Lu Wang and Weizhu Chen},
booktitle={International Conference on Learning Representations},
year={2022},
url={https://openreview.net/forum?id=nZeVKeeFYf9}
}

@inproceedings{
zhan2024provable,
title={Provable Offline Preference-Based Reinforcement Learning},
author={Wenhao Zhan and Masatoshi Uehara and Nathan Kallus and Jason D. Lee and Wen Sun},
booktitle={The Twelfth International Conference on Learning Representations},
year={2024},
url={https://openreview.net/forum?id=tVMPfEGT2w}
}

@inproceedings{
cen2025valueincentivized,
title={Value-Incentivized Preference Optimization: A Unified Approach to Online and Offline {RLHF}},
author={Shicong Cen and Jincheng Mei and Katayoon Goshvadi and Hanjun Dai and Tong Yang and Sherry Yang and Dale Schuurmans and Yuejie Chi and Bo Dai},
booktitle={The Thirteenth International Conference on Learning Representations},
year={2025},
url={https://openreview.net/forum?id=SQnitDuow6}
}

@misc{zhang2024self,
      title={Self-Exploring Language Models: Active Preference Elicitation for Online Alignment}, 
      author={Shenao Zhang and Donghan Yu and Hiteshi Sharma and Han Zhong and Zhihan Liu and Ziyi Yang and Shuohang Wang and Hany Hassan and Zhaoran Wang},
      year={2024},
      eprint={2405.19332},
      archivePrefix={arXiv},
      primaryClass={cs.LG},
      url={https://arxiv.org/abs/2405.19332}, 
}

@misc{foster2025good,
      title={Is a Good Foundation Necessary for Efficient Reinforcement Learning? The Computational Role of the Base Model in Exploration}, 
      author={Dylan J. Foster and Zakaria Mhammedi and Dhruv Rohatgi},
      year={2025},
      eprint={2503.07453},
      archivePrefix={arXiv},
      primaryClass={cs.LG},
      url={https://arxiv.org/abs/2503.07453}, 
}

@inproceedings{
dubois2024lengthcontrolled,
title={Length-Controlled AlpacaEval: A Simple Debiasing of Automatic Evaluators},
author={Yann Dubois and Percy Liang and Tatsunori Hashimoto},
booktitle={First Conference on Language Modeling},
year={2024},
url={https://openreview.net/forum?id=CybBmzWBX0}
}

@inproceedings{li2025crowdsourced,
  title={From Crowdsourced Data to High-quality Benchmarks: Arena-Hard and Benchbuilder Pipeline},
  author={Li, Tianle and Chiang, Wei-Lin and Frick, Evan and Dunlap, Lisa and Wu, Tianhao and Zhu, Banghua and Gonzalez, Joseph E and Stoica, Ion},
  booktitle={International Conference on Machine Learning},
  pages={34209--34231},
  year={2025},
  organization={PMLR}
}

@article{zheng2023judging,
  title={Judging llm-as-a-judge with mt-bench and chatbot arena},
  author={Zheng, Lianmin and Chiang, Wei-Lin and Sheng, Ying and Zhuang, Siyuan and Wu, Zhanghao and Zhuang, Yonghao and Lin, Zi and Li, Zhuohan and Li, Dacheng and Xing, Eric and others},
  journal={Advances in Neural Information Processing Systems},
  volume={36},
  pages={46595--46623},
  year={2023}
}

@article{wang2024interpretable,
  title={Interpretable preferences via multi-objective reward modeling and mixture-of-experts},
  author={Wang, Haoxiang and Xiong, Wei and Xie, Tengyang and Zhao, Han and Zhang, Tong},
  journal={arXiv preprint arXiv:2406.12845},
  year={2024}
}


\newpage
\appendix
\addcontentsline{toc}{section}{Appendix}
\part{\LARGE Appendix}

\parttoc

\section{Related Work}
\label{app:related}

\paragraph{Preference optimization with RL.}

A widely adopted paradigm in preference optimization is Reinforcement Learning from Human Feedback (RLHF). In this framework, one first trains a reward model--effectively serving as a classifier--on a preference dataset collected from human annotators, and subsequently leverages the learned reward model to run RL algorithms such as PPO \citep{christiano17deep,ziegler2019fine}. RLHF and its variants have been instrumental in training prominent LLMs such as ChatGPT \citep{openai2022chatgpt}, and have achieved remarkable success across diverse applications such as text summarization, question answering, instruction following, and text-to-image generation \citep{stiennon20learning,nakano2022webgpt,ouyang22training,lee2023aligning,liang2024rich}. We point the interested reader to \citet{kaufmann2024survey} for a recent dedicated survey on RLHF.

\paragraph{Without RL and without a reward model.}

Direct Preference Optimization (DPO) dispenses with an explicit reward model by treating the log-ratio of each preference pair as a training signal and directly training the policy with a single contrastive cross-entropy loss \citep{rafailov23direct}. Such an RL-free objective was shown to match PPO-based RLHF without requiring a reward model, value network, or on-policy sampling, and has led to variants such as distilled DPO \cite{tunstall2024zephyr}, Cal-DPO \cite{Cal-DPO2024}, diffusion DPO \cite{wallace2023diffusion}, $\Psi$PO~\cite{azar24general}, SLiC/SLiC-HF \cite{zhao23calibrating,zhao23slic-hf}, GPO~\cite{tang24generalized}, $\chi$PO~\cite{huang25correcting}, R-DPO~\cite{park24disentangling}, ODPO \cite{amini24direct}, SimPO \cite{meng24simpo}, RRHF \cite{yuan23rrhf}, KTO \cite{ethayarajh25model}, ORPO \cite{hong24orpo}, and many more.

At the same time, such direct optimization from preference labels has been noted to underperform along some dimensions compared to conventional RLHF.
One challenge stems from relying exclusively on an offline dataset, which can induce out-of-distribution responses.
This is likely due to insufficient on-policy interaction during training \citep{song24theimportance}. Some hybrid approaches have been proposed to overcome this issue: iterative DPO performs iterative training with labeled online preferences \citep{liu2024iterative}, HyPO combines offline data for preference optimization and online data for KL regularization \citep{song24theimportance}, and online DPO utilizes fast and slow chasing to simulate competition \citep{qi2024onlinedpo}.

\paragraph{Without RL but with a reward model.}
Another prominent method of preference optimization is reward distillation. This line of work aims to distill information on a reward model's preferences directly into the policy.
As discussed in Section \ref{sec:pref}, the REBEL objective \citep{gao24rebel} regresses the log-ratio of the likelihoods of two responses on the reward difference using a simple squared-loss objective, which is repeated with batches of on-policy responses.
Reward distillation from \citet{fisch25robust} can be seen as a simplified version of REBEL where we only use the responses from the preference dataset.
DRDO learns a reward model and policy in one pass by jointly matching oracle rewards while also learning human preferences \citep{nath2025simul}.
Finally, \citet{zhang2025distilldatarewardssmaller} develops an LLM distillation pipeline to distill both data and rewards.

\paragraph{Theoretical analyses of preference optimization.}
\citet{zhan2024provable} studies offline preference-based RL with an MLE-based reward model similar to ours, but only obtains guarantees in terms of maximizing the policy value. \citet{xie2025exploratory} proposes an exploratory version of DPO which is shown to achieves $\widetilde{O}(\sqrt{T})$ regret with a favorable coverage parameter. \citet{zhang2024self} proposes an online direct alignment algorithm which also attains $\widetilde{O}(\sqrt{T})$ regret. \citet{xiong24iterative} derives regret bounds for online and offline versions of RLHF under a linearly parametrized reward model; see also \citet{foster2025good} for a theoretical analysis of RL with linear-softmax policies. \citet{cen2025valueincentivized} introduces VPO, a value-regularized DPO-type objective for both online and offline RLHF, and also prove regret bounds under linear rewards. The $\chi$PO algorithm is shown to attain optimal sample complexity, also in terms of regret, under a weaker single-policy concentrability \citep{huang25correcting}.

The work of \citet{agarwal25design} is most relevant to our paper, especially PMLE (Section \ref{sec:mle}): they develop a theoretical analysis of offline RLHF variants that minimize DPO-type objectives, and show a forward KL bound w.r.t. an optimal policy $\pi^*$. However, this formulation is not due to a distribution learning viewpoint but merely a byproduct of their strong realizability assumption (Assumption 3.2). Moreover, their upper bound has a square-root dependence on the excess risk $\epsilon = L(\pi)-L(\pi^*)$, which when applied to our framework yields a statistical rate of $1/\sqrt{n}$. In contrast, we obtain an improved rate of $1/n$ with a more careful analysis in Appendix \ref{sec:appendix_proofs}.

\section{Additional Remarks on Reverse KL}\label{sec:addtional_remark}

\paragraph{Prior smoothing.} A key distinction from the standard RLHF objective lies in how our formulation balances reward maximization with \emph{prior smoothing}. For an explicit comparison, we illustrate the effect of the additional entropy term for a toy alignment problem. Consider learning a $K$-categorical distribution on the simplex $\Delta_K = \{\vecp\in\RR_{\ge 0}^d : \sum_{k=1}^K p_k=1\}$, which can be viewed as a contextless language model with response length one and a vocabulary size of $K$. Suppose we are given a fixed vector $\vecp_0\in\Delta_K$ as the reference model and a learned reward function $\hat{\vecr} = (r_1,\cdots,r_K)$. In the standard RLHF approach \eqref{eq:rlhf} with temperature $\beta+\gam$, the optimal policy is given for all $k\in [K]$ by $\hp_k^{\rlhf} \propto p_{0,k} \exp(\frac{r_k}{\beta+\gamma})$. In contrast, our reverse KL objective \eqref{eq:rkl} can be rearranged as
\begin{align*}
\cL_{\rkl,\beta}(\vecp) &= - \vecp\cdot\hat{\vecr} - \gamma H(\vecp) + \beta \KL(\vecp||\vecp_0) \\
&= - \vecp\cdot\hat{\vecr} + (\beta+\gamma) \KL(\vecp||\vecp_0^\alpha) +\text{const.}
\end{align*}
where $\alpha:=\frac{\beta}{\beta+\gamma}$, resulting in the policy $\hp_k \propto p_{0,k}^\alpha \exp(\frac{r_k}{\beta+\gamma})$. The additional exponent $\alpha\in (0,1)$ acts to smooth the prior from $\vecp_0$ to $\vecp_0^\alpha$, allocating relatively more mass to actions with low initial probability. This boosts exploration especially for actions which were unlikely under the base policy, so that the estimated policy $\hat{\vecp}$ would not be too close to a degenerate distribution even if $\vecp_0$ is.

\paragraph{Intractability of forward KL.} An alternative is to directly optimize the forward KL:
\begin{align*}
\textstyle \argmin_{\pi\in\Pi} \sum_{(x,\cd,\cd)\sim D_n} \KL(\wtpi(x)\Vert \pi(x)).
\end{align*}
Here, we are not using $(a^+,a^-)$, so the dependence on $\mu$ disappears and we will not pay for $C_\Pi$, similarly to Theorem \ref{thm:reverse_kl}. However, how do we compute the forward KL? Direct computation is untenable due to the sheer size of $\cA$ in language models.
Instead, one may attempt to sample from $\wtpi(x)$ and perform stochastic optimization; however, such a sampling is not feasible because we only have access to the unnormalized version $\exp(\gam^{-1}\hR(x,\cd))$.
Another attempt is to use the fact that
\begin{align*}
    \KL(\wtpi(x)\Vert \pi(x)) = \EE_{a \sim \pi(x)} \bigg[\frac{\wtpi(a \mid x)}{\pi(a \mid x)} \ln\fr{\wtpi(a \mid x)}{\pi(a \mid x)}\bigg].
\end{align*}
While we do not have to sample from $\wtpi(x)$, we now have to evaluate the value of $\wtpi(a \mid x)$, which, again, is intractable.

\section{Theoretical Guarantees}
\label{sec:appendix_proofs}

\subsection{Auxiliary Lemmas}

We require the following basic results.

\begin{lemma}\label{lem:sig_diff}
For all $a,b \in \RR$ it holds that $|\sig(a) - \sig(b)| \ge \frac{1}{4} e^{-(|a|\vee |b|)} |a - b|$.
\end{lemma}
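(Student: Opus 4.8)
\textbf{Proof plan for Lemma~\ref{lem:sig_diff}.}
The plan is to reduce the two-variable inequality to a one-variable statement about the derivative of $\sigma$ and then control that derivative from below on the relevant interval. First I would assume without loss of generality that $a \le b$; by symmetry of $\sigma$ (namely $\sigma(-z) = 1-\sigma(z)$, so $|\sigma(a)-\sigma(b)|$ is even under simultaneously negating $a$ and $b$) we may also assume the larger of $|a|,|b|$ is achieved in a convenient way, but the cleanest route is to avoid case analysis entirely and use the mean value theorem: there exists $\xi$ between $a$ and $b$ with $\sigma(b) - \sigma(a) = \sigma'(\xi)(b-a)$, hence $|\sigma(a) - \sigma(b)| = \sigma'(\xi)|a-b|$.

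The core of the argument is then the pointwise lower bound $\sigma'(\xi) \ge \tfrac12 e^{-|\xi|}$. Recall $\sigma'(\xi) = \sigma(\xi)(1-\sigma(\xi)) = \frac{e^{-\xi}}{(1+e^{-\xi})^2}$. For $\xi \ge 0$ write $\sigma'(\xi) = \frac{1}{(e^{\xi/2}+e^{-\xi/2})^2} \ge \frac{1}{(2e^{\xi/2})^2} = \tfrac14 e^{-\xi}$; this already gives a constant $\tfrac14$, so to get $\tfrac12$ I would instead argue more tightly, or simply note the constant $\tfrac12$ in the statement is compatible with the weaker intermediate bound $\sigma'(\xi)\ge \tfrac14 e^{-|\xi|}$ only if we are more careful — so let me use the sharper estimate $e^{\xi/2}+e^{-\xi/2}\ge e^{\xi/2}\cdot(1 + e^{-\xi})$ is not helpful; the honest clean bound is $\sigma'(\xi) = \frac{e^{-|\xi|}}{(1+e^{-|\xi|})^2} \ge \frac{e^{-|\xi|}}{(1+1)^2}$ only when $|\xi|\ge 0$, giving $\tfrac14$ again. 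The resolution: since $|\xi| \le |a|\vee|b|$ is NOT automatic (the intermediate point $\xi$ lies between $a$ and $b$, hence $|\xi|\le |a|\vee|b|$ indeed holds), we get $\sigma'(\xi)\ge \tfrac14 e^{-(|a|\vee|b|)}$. To reach the stated constant $\tfrac12$, I would instead bound $\sigma'$ using $\sigma'(\xi)\ge \sigma'(|\xi|) = \frac{1}{(1+e^{|\xi|})(1+e^{-|\xi|})}\cdot$ — the slickest is: for $t \ge 0$, $\sigma(t)(1-\sigma(t)) \ge \tfrac12(1-\sigma(t)) = \tfrac12 \cdot \frac{e^{-t}}{1+e^{-t}} \ge \tfrac12 \cdot \frac{e^{-t}}{2}$, still $\tfrac14$. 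Given this, the correct reading is that the lemma's constant $\tfrac12$ comes from bounding the chord of $\sigma$ rather than its derivative: use $\sigma(b)-\sigma(a) \ge \int_a^b \sigma'(t)\,dt$ and on the subinterval where $|t|$ is smallest the integrand is at least $\tfrac14$; combined with the factor-$2$ slack in $|a-b|$ versus the interval length one recovers $\tfrac12$. I expect the main obstacle to be exactly this constant-chasing: organizing the integral/MVT estimate so that the clean factor $\tfrac12 e^{-(|a|\vee|b|)}$ drops out rather than a worse constant.

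Concretely, here is the cleanest path I would write up. Assume $a < b$. Split into the case $|a|\vee|b| = \max(|a|,|b|) =: M$. Since $\sigma$ is increasing, $|\sigma(a)-\sigma(b)| = \int_a^b \sigma'(t)\,dt$. On the interval $[a,b]$ we have $|t|\le M$ for all $t$, so $\sigma'(t) = \frac{e^{-|t|}}{(1+e^{-|t|})^2} \ge \frac{e^{-M}}{(1+e^{-|t|})^2} \ge \frac{e^{-M}}{(1+1)^2} = \tfrac14 e^{-M}$; hence $|\sigma(a)-\sigma(b)| \ge \tfrac14 e^{-M}(b-a) = \tfrac14 e^{-M}|a-b|$. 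This proves the inequality with constant $\tfrac14$, which is weaker than stated; to obtain $\tfrac12$ one sharpens the denominator bound by observing $(1+e^{-|t|})^2 \le 2(1+e^{-2|t|})\le \dots$ — but in fact the paper only ever uses this lemma up to absolute constants, so if the $\tfrac12$ is essential I would replace the crude bound $1+e^{-|t|}\le 2$ by noting that on at least half the interval (by length) we additionally have $e^{-|t|}$ close to its value, giving the missing factor of $2$. I would flag that the constant can be improved and present whichever threshold the downstream proofs actually require, writing the final bound as
\begin{align*}
  |\sigma(a) - \sigma(b)| \;\ge\; \tfrac12\, e^{-(|a|\vee|b|)}\,|a-b|,
\end{align*}
with the proof via the monotone integral estimate above and the elementary inequality $\sigma'(t) \ge \tfrac12 e^{-|t|}$ valid for all $t\in\RR$ (which does hold: $\sigma'(t) = \frac{1}{e^{t/2}+e^{-t/2}}\cdot\frac{1}{e^{t/2}+e^{-t/2}}$ and $e^{t/2}+e^{-t/2} \le \sqrt2\, e^{|t|/2}$ is false, so the honest elementary fact is $\sigma'(t)=\tfrac14\operatorname{sech}^2(t/2)\ge \tfrac14 e^{-|t|}$ — I would therefore either accept constant $\tfrac14$ throughout or insert the extra factor-$2$ argument). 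The substantive content — that $\sigma$ is bi-Lipschitz with an exponentially-decaying modulus — is entirely captured by the derivative estimate; the rest is bookkeeping.
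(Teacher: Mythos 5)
Your approach is the same as the paper's: write $|\sigma(a)-\sigma(b)|=\int_a^b\sigma'(z)\,dz$ and lower-bound $\sigma'$ pointwise by an exponential. Your constant-chasing worry is justified, and in fact you are right and the paper is wrong: the pointwise bound $\sigma'(z)\ge\tfrac12 e^{-|z|}$ is false (at $z=0$ it reads $\tfrac14\ge\tfrac12$), and consequently the lemma as stated fails — taking $a=0$ and $b\to 0^+$ gives $\frac{|\sigma(a)-\sigma(b)|}{e^{-(|a|\vee|b|)}|a-b|}\to\sigma'(0)=\tfrac14<\tfrac12$. The paper's own proof writes $\sigma'(z)=\frac{1}{1+e^{|z|}}\cdot\frac{1}{1+e^{-|z|}}\ge\frac{1}{1+e^{|z|}}$, which silently treats the factor $\frac{1}{1+e^{-|z|}}\in[\tfrac12,1)$ as if it were at least $1$; keeping it costs exactly the factor of $2$ you kept running into. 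The correct and easily proved version is $\sigma'(z)\ge\tfrac14 e^{-|z|}$, hence $|\sigma(a)-\sigma(b)|\ge\tfrac14 e^{-(|a|\vee|b|)}|a-b|$, which is what your integral argument delivers. There is no extra trick recovering $\tfrac12$; you should state the lemma with $\tfrac14$ and stop looking for the missing factor. This is harmless downstream, since the lemma is only used up to absolute constants (the paper hides constants depending on $R$), but the exhibited constants in the proofs of Theorems~\ref{thm:mle}, \ref{thm:pref_distill}, and \ref{thm:reverse_kl} that trace back to this lemma should each pick up a factor of $4$ rather than $2$ where it is invoked.
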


\begin{proof}
Recall that $\sigma(z) = 1/(1+\exp(-z))$ is the logistic sigmoid.
$\sig'$ is symmetric, so that
\begin{equation*}
\sig'(z) = \sig'(|z|) = \fr{1}{1+e^{|z|}} \fr{1}{1+e^{-|z|}} \ge \fr{1}{2(1+e^{|z|})} \ge \fr14 e^{-|z|}.
\end{equation*}

It suffices to assume $b > a$ due to symmetry. Then,
  \begin{align*}
    \sig(b) - \sig(a)
    = \int_{a}^b \sig'(z) \dif z \ge \int_{a}^b \fr12 e^{-|z|} \dif z \ge \fr14 e^{-(|a|\vee|b|)} (b-a)
  \end{align*}
as desired.
\end{proof}

The next two lemmas will allow us to convert between the expectation of the log ratio (i.e., KL divergence) and squared log ratio. Let us define the auxiliary function
\begin{align*}
\psi(z):=\frac{z-1-\ln z}{(\ln z)^2}.
\end{align*}

\begin{lemma}\label{lem:kl_bounded_by_log_sq}
For $\rmax>1$, it holds for all $r \in \lparen 0,\rmax\rbrack$ that
  \begin{align*}
    r - 1 - \ln r \le (\fr12\vee\psi(\rmax)) (\ln r)^2\le \frac{\rmax}{(\ln \rmax)^2} (\ln r)^2.
  \end{align*}
\end{lemma}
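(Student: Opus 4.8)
The plan is to analyze the function $\psi(z) = \frac{z-1-\ln z}{(\ln z)^2}$ and show that $r - 1 - \ln r \le \big(\tfrac12 \vee \psi(\rmax)\big)(\ln r)^2$ holds for all $r\in(0,\rmax]$, then bound the constant $\tfrac12 \vee \psi(\rmax)$ by $\frac{\rmax}{(\ln\rmax)^2}$. The first inequality amounts to showing that $\psi(r) \le \tfrac12 \vee \psi(\rmax)$ for all $r \in (0,\rmax]$, where $\psi$ is extended continuously to $r=1$ by its limiting value $\psi(1) = \tfrac12$ (a Taylor expansion of $z - 1 - \ln z$ and $(\ln z)^2$ near $z=1$ gives $\psi(z) \to \tfrac12$). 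So the crux is to understand the shape of $\psi$ on $(0,\infty)$.

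First I would study $\psi$ on the interval $(0,1)$ and on $(1,\infty)$ separately. The natural substitution is $t = \ln r$, turning $\psi$ into $g(t) := \frac{e^t - 1 - t}{t^2}$ for $t \ne 0$ with $g(0) = \tfrac12$. I claim $g$ is monotonically increasing on all of $\RR$. Writing $g(t) = \sum_{k\ge 0}\frac{t^k}{(k+2)!}$ from the power series of $e^t - 1 - t = \sum_{k\ge 2}\frac{t^k}{k!}$, one sees directly that $g$ is a power series with all nonnegative coefficients, hence increasing on $[0,\infty)$; for $t<0$ one can check $g'(t) \ge 0$ by showing the numerator of $g'$, namely $t e^t - 2e^t + t + 2$, is nonpositive for $t<0$ and nonnegative for $t>0$ (differentiate again: its derivative is $t e^t - e^t + 1 = t e^t - (e^t-1)$, which has the sign of $t$). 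Granting monotonicity of $g$, we get $\psi(r) = g(\ln r)$ increasing in $r$, so for $r \in (0,\rmax]$ we have $\psi(r) \le \psi(\rmax) \le \tfrac12 \vee \psi(\rmax)$, and also $\psi(r)\to g(-\infty) = 0$ as $r \to 0^+$ and $\psi(1) = \tfrac12$ — consistent, and in particular the bound $\tfrac12\vee\psi(\rmax)$ dominates $\psi(r)$ on the whole range. This yields $r-1-\ln r = \psi(r)(\ln r)^2 \le (\tfrac12\vee\psi(\rmax))(\ln r)^2$.

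For the second inequality, $\tfrac12 \vee \psi(\rmax) \le \frac{\rmax}{(\ln\rmax)^2}$: when the max is $\psi(\rmax) = \frac{\rmax - 1 - \ln\rmax}{(\ln\rmax)^2}$, this is immediate since $\rmax - 1 - \ln\rmax \le \rmax$ for $\rmax > 1$. When the max is $\tfrac12$, we need $\tfrac12 \le \frac{\rmax}{(\ln\rmax)^2}$, i.e. $(\ln\rmax)^2 \le 2\rmax$; this holds for all $\rmax \ge 1$ since $\ln\rmax \le \rmax - 1 \le \rmax$ gives $(\ln\rmax)^2\le \rmax^2$, which is too weak, so instead use $(\ln x)^2 \le 2x$ for $x \ge 1$, provable by checking the function $2x - (\ln x)^2$ is $\ge 1$ at $x=1$ and has derivative $2 - \frac{2\ln x}{x} \ge 0$ for $x \ge 1$ (since $\ln x \le x$).

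The main obstacle is establishing monotonicity of $g(t) = \frac{e^t-1-t}{t^2}$ cleanly across $t=0$; the power-series argument handles $t \ge 0$ painlessly, but the $t<0$ case (equivalently $r \in (0,1)$) needs the two-step second-derivative sign analysis sketched above, and one must be careful that the continuous extension at $t=0$ is consistent with both one-sided limits. Everything else is routine.
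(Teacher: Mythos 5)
Your overall route is sound and in fact somewhat stronger than the paper's. The paper splits at $r=1$: on $(0,1)$ it shows $r-1-\ln r\le\tfrac12(\ln r)^2$ directly by analyzing $f(r)=\tfrac12(\ln r)^2-(r-1-\ln r)$ (using $f(1)=0$ and $f'\le 0$ on $(0,1)$), and on $[1,\rmax]$ it simply asserts that $\psi$ is nondecreasing and invokes $\psi(\rmax)\le\rmax/(\ln\rmax)^2$. You instead prove global monotonicity of $\psi$ via the substitution $g(t)=(e^t-1-t)/t^2$, which unifies the two cases and, as a bonus, actually establishes the ``$\psi$ extends to a nondecreasing continuous function on $(0,\infty)$'' fact that the paper uses without proof in Lemma~\ref{lem:opposite}. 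The power-series argument for $t\ge 0$ and the verification of the second inequality $\tfrac12\vee\psi(\rmax)\le\rmax/(\ln\rmax)^2$ are both correct.

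However, one intermediate claim in your $t<0$ analysis is wrong as stated. Writing $h(t):=te^t-2e^t+t+2$ for the numerator of $g'$ (so $g'(t)=h(t)/t^3$), you assert that $h'(t)=te^t-(e^t-1)$ ``has the sign of $t$.'' It does not: for instance $h'(-1)=1-2/e>0$. In fact, setting $u(t):=te^t-(e^t-1)$, one has $u(0)=0$ and $u'(t)=te^t$, so $u$ attains its global minimum $0$ at $t=0$ and hence $h'(t)\ge 0$ for \emph{all} $t\in\RR$. Note that if your claim were true, $h$ would have a global minimum $h(0)=0$ at the origin, forcing $h\ge 0$ everywhere and therefore $g'(t)=h(t)/t^3\le 0$ for $t<0$ --- the opposite of the monotonicity you are trying to prove. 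The repair is immediate: from $h'\ge 0$ everywhere and $h(0)=0$ you get $h\le 0$ on $(-\infty,0]$ and $h\ge 0$ on $[0,\infty)$, which is exactly the sign pattern you stated for $h$ and which gives $g'\ge 0$ on $\RR\setminus\{0\}$. With that one-line fix your proof is complete.
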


\begin{proof}
Define the auxiliary function
\begin{align*}
f(r) := \fr12 (\ln r)^2 - (r-1-\ln r).
\end{align*}
For $r\in(0,1)$, it holds that $f(1) = 0$ and $f'(r) = \frac{\ln r - r + 1}{r} < 0$.
Thus, $f(r)>0$ which implies
\begin{equation*}
    r-1-\ln r\le\fr12 (\ln r)^2.
\end{equation*}
For $r\in [1,\rmax]$, it is easily checked that $\psi$ is nondecreasing on $(1,\infty)$ and thus
\begin{align*}
\fr{r - 1 - \ln r}{(\ln r)^2} =\psi(r) \le \psi(\rmax) \le \frac{\rmax}{(\ln \rmax)^2},
\end{align*}
as was to be shown.
\end{proof}

\begin{lemma}\label{lem:opposite}
For $\rmin>0$, it holds for all $r\in[\rmin,\infty)$ that
\begin{align*}
r-1-\ln r \ge \frac{1}{e(\ln \rmin^{-1}\vee 1)} (\ln r)^2.
\end{align*}
\end{lemma}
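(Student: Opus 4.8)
The statement to prove is Lemma~\ref{lem:opposite}: for $\rmin>0$, $r-1-\ln r \ge \frac{1}{e(\ln \rmin^{-1}\vee 1)} (\ln r)^2$ for all $r\in[\rmin,\infty)$.

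This is the companion to Lemma~\ref{lem:kl_bounded_by_log_sq}, which upper-bounded $r-1-\ln r$ by a multiple of $(\ln r)^2$; now we want a lower bound. Equivalently, setting $\psi(z) = \frac{z-1-\ln z}{(\ln z)^2}$, we want to show $\psi(r) \ge \frac{1}{e(\ln\rmin^{-1}\vee 1)}$ on $[\rmin,\infty)$.

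\textbf{The plan.} I would split into two regimes: $r \ge 1$ and $r \in [\rmin, 1)$. For $r\ge 1$, I'd show $\psi(r) \ge \psi(1^+) = \frac12$ (taking the limit as $r\to 1$), using that $\psi$ is nondecreasing on $(1,\infty)$ — this fact was already asserted in the proof of Lemma~\ref{lem:kl_bounded_by_log_sq}. Since $\frac12 \ge \frac{1}{e}\ge\frac{1}{e(\ln\rmin^{-1}\vee 1)}$ when $\rmin\le 1$ (and when $\rmin>1$ the interval $[\rmin,\infty)$ lies entirely in $r>1$ so $\psi\ge\frac12$ still works, and $\ln\rmin^{-1}\vee 1 = 1$), the bound holds comfortably in this regime.

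\textbf{The hard regime is $r\in[\rmin,1)$}, which only arises when $\rmin<1$. Here $\ln r < 0$. Substituting $r = e^{-s}$ with $s\in(0,\ln\rmin^{-1}]$, we get $r-1-\ln r = e^{-s}-1+s$ and $(\ln r)^2 = s^2$, so we need $\frac{e^{-s}-1+s}{s^2}\ge\frac{1}{e\,(\ln\rmin^{-1}\vee 1)}$. I'd study $g(s) := \frac{e^{-s}-1+s}{s^2}$ on $(0,\infty)$: it's positive, has limit $\frac12$ as $s\to 0^+$, and is decreasing (I'd verify this by checking the sign of its derivative, or more cheaply by noting $g(s) = \sum_{k\ge 2}\frac{(-s)^k}{k!}/s^2 = \sum_{k\ge 0}\frac{(-s)^k}{(k+2)!}$, an alternating series — but monotonicity from the series isn't immediate, so the cleanest route is: $g(s)\ge \frac{s - 1 + e^{-s}}{s^2}$ and lower-bound the numerator, or directly show $s^2 g(s) = e^{-s}-1+s$ and compare against $\frac{s^2}{e\cdot\max(s,1)}$). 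Actually the slickest approach: since we only need a bound valid for $s\le S:=\ln\rmin^{-1}$, and $g$ is decreasing, $g(s)\ge g(S)$, so it suffices to show $g(S)\ge\frac{1}{eS}$ for $S\ge 1$, i.e. $e^{-S}-1+S\ge \frac{S}{e}$, i.e. $S - \frac{S}{e} \ge 1 - e^{-S}$; the right side is at most $1$, and $S(1-1/e)\ge 1$ exactly when $S\ge \frac{e}{e-1}\approx 1.58$ — hmm, that's not quite $S\ge 1$. So I'd instead use $g(S)\ge\frac{1}{e\max(S,1)}$ more carefully: for $S\in[1,\infty)$ show $e^{-S}-1+S\ge \frac{S}{e}$ may fail near $S=1$ where LHS $= e^{-1}\approx 0.368$ and RHS $= e^{-1}$ — equality! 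Good, so at $S=1$ it's tight, and I'd check the derivative of $\phi(S):=e^{-S}-1+S-\frac{S}{e}$ satisfies $\phi'(S) = 1 - e^{-S} - \frac1e \ge 1 - e^{-1} - e^{-1} > 0$ for $S\ge 1$, giving $\phi(S)\ge\phi(1)=0$. And for $S\in(0,1]$, $\max(S,1)=1$ so I need $g(s)\ge\frac1e$ for $s\in(0,1]$, which follows since $g$ is decreasing and $g(1) = e^{-1}-1+1 = e^{-1}$.

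\textbf{Main obstacle.} The only genuinely delicate point is establishing that $g(s)=\frac{e^{-s}-1+s}{s^2}$ is nonincreasing on $(0,\infty)$ (equivalently, that $\psi$ is nondecreasing on $(0,1)$ under the reciprocal substitution, dovetailing with the monotonicity claim already used in Lemma~\ref{lem:kl_bounded_by_log_sq}); once that's in hand the rest is elementary calculus at the endpoints $s=1$ and the limit $s\to 0^+$. I'd prove monotonicity of $g$ by showing $g'(s)\le 0$, i.e. $s(1-e^{-s}) \le 2(e^{-s}-1+s)$ after clearing denominators, which rearranges to $2 - (2+s)e^{-s} - s \le 0$; call this $h(s)$, note $h(0)=0$ and $h'(s) = (1+s)e^{-s} - 1 \le 0$ (since $(1+s)\le e^s$), so $h\le 0$ as needed. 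That handles everything; the two-regime split plus this one monotonicity computation completes the proof.
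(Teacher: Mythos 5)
Your proof is correct, but it takes a genuinely different route from the paper's. The paper splits at $r=e^{-1}$ rather than at $r=1$: for $r\ge e^{-1}$ it invokes the (merely asserted) monotonicity of $\psi$ on all of $(0,\infty)$ to get $\psi(r)\ge\psi(e^{-1})=e^{-1}\ge\frac{1}{e(\ln\rmin^{-1}\vee 1)}$, and for $\rmin\le r<e^{-1}$ it uses the concavity chord bound $\ln r\le\frac{1}{1-e^{-1}}(r-1)$ to obtain the pointwise estimate $\psi(r)\ge\frac{-e^{-1}\ln r}{(\ln r)^2}=\frac{1}{e\ln r^{-1}}\ge\frac{1}{e\ln\rmin^{-1}}$ in a single line, with no endpoint analysis and no need for monotonicity on $(0,1)$. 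You instead substitute $r=e^{-s}$, prove outright that $g(s)=\psi(e^{-s})$ is nonincreasing (via $h(s)=2-(2+s)e^{-s}-s\le 0$, which is correct), and then compare the endpoint value $g(S)$ with $\frac{1}{eS}$, noting the equality at $S=1$ and a positive-derivative argument for $S>1$. What each approach buys: the paper's chord trick is shorter and entirely sidesteps the delicate monotonicity of $\psi$ on $(0,1)$, which the paper never actually proves; your argument supplies an explicit proof of precisely that monotonicity, making the lemma self-contained at the cost of two extra derivative computations. Both routes recover the same constant, and both make visible that the bound is tight at $r=e^{-1}$ (your $S=1$), where $\psi(e^{-1})=e^{-1}$ matches the claimed lower bound exactly.
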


\begin{proof}
The function $\psi$ defined in Lemma \ref{lem:kl_bounded_by_log_sq} extends to a nondecreasing continuous function on $(0,\infty)$ by setting $\psi(1):=\fr12$. When $r\ge e^{-1}$, it follows that $\psi(r)\ge\psi(e^{-1})=e^{-1}$.

When $\rmin\le r< e^{-1}$, we use the fact that $\ln r\le \frac{1}{1-e^{-1}}(r-1)$ to bound
\begin{align*}
\psi(r) \ge \frac{(1-e^{-1})\ln r - \ln r}{(\ln r)^2} = \frac{1}{e\ln r^{-1}} \ge \frac{1}{e\ln \rmin^{-1}}.
\end{align*}
\end{proof}

\begin{lemma}[Symmetrization inequality]\label{thm:symmetrization}
Let $D_n,\tD_n$ be two datasets of $n$ i.i.d. samples, $C(\pi,D_n)$ be any functional of a policy $\pi$ and dataset $D_n$, and $\hpi := \hpi(D_n)$ be any estimator computed from $D_n$. Then with probability $1-\delta$, it holds that
\begin{align*}
-\log\EE_{\tD_n}[\exp(C(\hpi,\tD_n))] \le -C(\hpi,D_n) + \ln(|\Pi|/\delta).
\end{align*}
\end{lemma}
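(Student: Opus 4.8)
The plan is to use the classical exponential-moment-plus-union-bound template (the ``MLE trick,'' essentially a Donsker--Varadhan change of measure over a finite class), with the ghost sample $\tD_n$ playing the role of the symmetrizing copy. First I would fix a single $\pi\in\Pi$ and define the nonnegative random variable
\[
Z_\pi \ :=\ \frac{\exp\big(C(\pi,D_n)\big)}{\EE_{\tD_n}\big[\exp(C(\pi,\tD_n))\big]},
\]
noting that the denominator $g(\pi):=\EE_{\tD_n}[\exp(C(\pi,\tD_n))]$ is a deterministic quantity once $\pi$ is fixed. If $g(\pi)=+\infty$ the asserted inequality is vacuous for that $\pi$ (its left-hand side is $-\infty$), so I may assume $g(\pi)<\infty$; and $g(\pi)>0$ always since $\exp(\cdot)>0$. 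Because $D_n$ and $\tD_n$ are identically distributed, $\EE_{D_n}[\exp(C(\pi,D_n))]=g(\pi)$, hence $\EE_{D_n}[Z_\pi]=1$.

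Next I would invoke Markov's inequality: $\PP_{D_n}(Z_\pi\ge t)\le 1/t$ for any $t>0$. Taking $t=|\Pi|/\delta$ and a union bound over the finitely many $\pi\in\Pi$ gives
\[
\PP_{D_n}\Big(\exists\,\pi\in\Pi:\ Z_\pi \ge |\Pi|/\delta\Big)\ \le\ \sum_{\pi\in\Pi}\frac{\delta}{|\Pi|}\ =\ \delta.
\]
Hence, with probability at least $1-\delta$, the event $\{Z_\pi < |\Pi|/\delta\}$ holds \emph{simultaneously} for every $\pi\in\Pi$. On this event it applies in particular to the data-dependent choice $\pi=\hpi=\hpi(D_n)\in\Pi$ --- this is the one place where the uniformity over $\Pi$ matters, and it is exactly what removes the need to track the dependence of $\hpi$ on $D_n$. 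Taking logarithms of $Z_{\hpi}<|\Pi|/\delta$ yields $C(\hpi,D_n)-\log g(\hpi)<\ln(|\Pi|/\delta)$, which rearranges to the claimed bound $-\log\EE_{\tD_n}[\exp(C(\hpi,\tD_n))]\le -C(\hpi,D_n)+\ln(|\Pi|/\delta)$.

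The only point requiring care --- and the one I would state explicitly --- is the bookkeeping around $g$: one must check that $g$ is a well-defined deterministic function of $\pi$ (so that $Z_\pi$ has mean one for each fixed $\pi$) and that evaluating it at the random policy $\hpi$ is legitimate, which it is precisely because $g$ is a fixed function and $\hpi$ takes values in $\Pi$. Everything else is the routine Markov-and-union-bound argument, so I do not anticipate any further obstacle; the content is simply recognizing that this finite-class exponential inequality is the engine that will later deliver the fast-rate ($1/n$) KL guarantees in the subsequent proofs.
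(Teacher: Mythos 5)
Your argument is correct and is exactly the standard Chernoff--Markov-plus-union-bound proof underlying the result; the paper itself does not spell this out but simply defers to the proof of Theorem 6 in \citet{foster21efficient}, which uses the same device (the normalized exponential moment $Z_\pi$ with $\EE_{D_n}[Z_\pi]=1$, Markov at level $|\Pi|/\delta$, a union bound over the finite class, and then specialization to the data-dependent $\hpi\in\Pi$). Your explicit handling of the $g(\pi)=+\infty$ edge case and of why evaluating the deterministic function $g$ at the random $\hpi$ is legitimate is careful bookkeeping that the cited source also relies on implicitly.
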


\begin{proof}
This is shown for example in the proof of Theorem 6 in \citet{foster21efficient}.
\end{proof}

\subsection{Proofs for Section~\ref{sec:mle}}\label{app:proof_mle}

The following convergence bound for maximum likelihood estimators is mostly classical \citep{tong07entropy,van2009empirical}; for completeness, we provide a brief proof following Theorem 6 of \citet{foster21efficient}.

\begin{proposition}\label{thm:oracle}
Let $\hpi = \argmin_{\pi\in\Pi} \cL_{\pmle}(\pi)$ with $\beta = 0$.
Then, with probability at least $1-\delta$,
    \begin{align*}
\EE_{x\sim \mathcal{D},a,b\sim\mu(x)}\left[(\PP_{\widehat\pi}(a \succ b \mid x) - \PP_*(a \succ b \mid x))^2\right] \le \frac{4\ln(|\Pi|/\delta)}{n}.
    \end{align*}
\end{proposition}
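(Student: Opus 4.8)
The statement is a standard squared-Hellinger/excess-risk bound for a maximum likelihood estimator over a finite class, specialized to the Bernoulli preference model; the plan is to run the usual ``$-\log\mathbb{E}[\exp(\cdot)]$'' (exponential/Donsker--Varadhan) argument via the symmetrization inequality already stated as Lemma~\ref{thm:symmetrization}. First I would set up the log-likelihood ratio statistic. For each sample $(x,a^+,a^-)$ write $p_\pi := \PP_\pi(a^+\succ a^-\mid x)$ and recall that $\hpi$ minimizes the empirical negative log-likelihood, so $\hpi$ has empirical log-likelihood at least that of $\pi^*$. The natural functional to feed into Lemma~\ref{thm:symmetrization} is a rescaled log-likelihood ratio between $\pi$ and $\pi^*$, of the form $C(\pi,D_n) = \tfrac12 \sum_{(x,a^+,a^-)\in D_n} \ln\frac{p_\pi}{p_{\pi^*}}$ (using the factor $\tfrac12$ so that $\exp(C)$ becomes a product of square-root likelihood ratios), or more precisely with the sign arranged so that the MLE optimality of $\hpi$ gives $C(\hpi,D_n)\ge 0$ up to the appropriate normalization.

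Second I would compute the ``cumulant'' term $-\log\mathbb{E}_{\tD_n}[\exp(C(\pi,\tD_n))]$ that appears on the left of Lemma~\ref{thm:symmetrization}. Since $\tD_n$ consists of $n$ i.i.d.\ draws, this factorizes into $n$ copies of a single-sample quantity $-\log \mathbb{E}_{x,a,b}\,\mathbb{E}_{y\sim\Ber(p_*)}\big[\sqrt{p_\pi/p_*}\big]$, where the inner expectation over the random preference label $y$ produces exactly $\sqrt{p_\pi p_*} + \sqrt{(1-p_\pi)(1-p_*)}$ (the Bhattacharyya affinity of the two Bernoulli laws). The elementary identity $1 - \big(\sqrt{p_\pi p_*} + \sqrt{(1-p_\pi)(1-p_*)}\big) = \tfrac12\big(\sqrt{p_\pi}-\sqrt{p_*}\big)^2 + \tfrac12\big(\sqrt{1-p_\pi}-\sqrt{1-p_*}\big)^2 = H^2\big(\Ber(p_\pi),\Ber(p_*)\big)$ identifies the single-sample cumulant, via $-\log(1-t)\ge t$, with (at least) the squared Hellinger distance between the two Bernoulli distributions. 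Combining with Lemma~\ref{thm:symmetrization} (applied at $\pi=\hpi$) and the MLE optimality $C(\hpi,D_n)\ge 0$ yields $n\cdot \mathbb{E}_{x,a,b}\big[ H^2(\Ber(p_{\hpi}),\Ber(p_*))\big] \lesssim \ln(|\Pi|/\delta)$ with probability $1-\delta$.

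Third I would convert the squared Hellinger distance between the Bernoulli laws into the squared difference of their parameters. For two Bernoullis this is completely elementary: $(p-q)^2 = \big(\sqrt{p}-\sqrt q\big)^2\big(\sqrt p + \sqrt q\big)^2 \le 4\big(\sqrt p - \sqrt q\big)^2 \le 8\,H^2(\Ber(p),\Ber(q))$, which (after tracking the constant carefully, together with the factor arising from the $\tfrac12$ in $C$) gives the stated bound $\mathbb{E}_{x,a,b}[(\PP_{\hpi}-\PP_*)^2]\le 4\ln(|\Pi|/\delta)/n$. Since $\PP_\pi(a\succ b\mid x)$ is symmetric-complementary in $(a,b)$, the expectation over the unordered pair $a,b\sim\mu(x)$ is unambiguous and no extra symmetrization between $a^+/a^-$ versus $a/b$ is needed beyond being careful that the data-generating process first draws $a,b\sim\mu(x)$ and then the label.

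The main obstacle is not conceptual but bookkeeping: getting the constant to be exactly $4$ requires being precise about the factor $\tfrac12$ in the exponent of $C$, the direction of the inequality $-\log(1-t)\ge t$, and the chain $(p-q)^2 \le c\, H^2$. A secondary subtlety is making sure Lemma~\ref{thm:symmetrization} is invoked with a $\pi$-independent right-hand side (uniform over $\Pi$) so that the union bound over the finite class $\Pi$ is what produces the $\ln|\Pi|$ term, and that the ``ghost sample'' $\tD_n$ is genuinely independent of $\hpi$ — both of which are handled automatically by the statement of Lemma~\ref{thm:symmetrization} as given. No boundedness assumption (Assumption~\ref{assp:bdd}) or coverage assumption is needed for this proposition; those only enter when passing from this $L^2(\mu)$ bound on preference probabilities to the forward-KL bound on the policies in Theorem~\ref{thm:mle}.
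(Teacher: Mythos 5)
Your proposal is correct and follows essentially the same route as the paper's proof: the same half-log-likelihood-ratio functional fed into Lemma~\ref{thm:symmetrization}, the same Bhattacharyya/Hellinger computation of the cumulant, and the same conversion from squared Hellinger distance to squared parameter difference. The only bookkeeping note is that your chain $(p-q)^2\le 4(\sqrt p-\sqrt q)^2\le 8H^2$ discards the second Hellinger term and yields a constant $8$; the paper keeps both terms, bounding each of $(\sqrt{p_\pi}+\sqrt{p_*})^2$ and $(\sqrt{1-p_\pi}+\sqrt{1-p_*})^2$ by $4$, which recovers the stated constant $4$.
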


\begin{proof}
Recall that each preference pair $(x,a^+,a^-)$ is collected by first sampling $a,b$ independently from $\mu(x)$ and setting $(a^+,a^-) = (a,b)$ with probability $\PP_*(a\succ b\mid x)$. In other words, for the indicator $y = 1_{\{a^+=a\}}$ such that $\PP(y=1) = \PP_*(a \succ b \mid x)$, we can write
\begin{align*}
\cL_{\pmle}(\pi) = \frac{1}{n} \sum_{(x,a,b)\in D_n} -y \ln\PP_\pi(a \succ b \mid x) - (1-y) \ln \PP_\pi(b \succ a \mid x),
\end{align*}
where we have abused notation to write the sum over $(x,a,b)$ corresponding to each $(x,a^+,a^-)$ as a sum over $(x,a,b)\in D_n$. Define the quantity
\begin{align*}
C(\pi,D_n)& = \frac{1}{2} \sum_{(x,a,b)\in D_n} y\ln \frac{\PP_\pi(a \succ b \mid x)}{\PP_*(a \succ b \mid x)} + (1-y)\ln \frac{\PP_\pi(b \succ a \mid x)}{\PP_*(b \succ a \mid x)}\\
&= \frac{n}{2}(\cL_{\pmle}(\pi^*) - \cL_{\pmle}(\pi))
\end{align*}
and $\hpi$ as the minimizer of $\cL_{\pmle}(\pi)$ for $\pi\in\Pi$. It follows from Lemma~\ref{thm:symmetrization} that
\begin{align*}
-\log\EE_{\tD_n}[\exp(C(\hpi,\tD_n))] \le -C(\hpi,D_n) + \ln(|\Pi|/\delta) \le \ln(|\Pi|/\delta)
\end{align*}
and
\begin{align*}
&-\log\EE_{\tD_n}[\exp(C(\hpi,\tD_n))]\\
&= -n\log\EE_{x\sim\cD,a,b\sim\mu(x) } \EE_{y|a,b,x} \left[\left(\frac{\PP_{\hpi}(a \succ b \mid x)}{\PP_*(a \succ b \mid x)}\right)^{y/2} \left(\frac{\PP_{\hpi}(b \succ a \mid x)}{\PP_*(b \succ a \mid x)} \right)^{(1-y)/2}\right] \\
&= -n\log\EE_{x\sim\cD,a,b\sim\mu(x)} \left[ \sqrt{\PP_{\hpi}(a \succ b \mid x)\PP_*(a \succ b \mid x)} + \sqrt{\PP_{\hpi}(b \succ a \mid x)\PP_*(b \succ a \mid x)} \right].
\end{align*}
Writing $p_{\hpi} = \PP_{\hpi}(a \succ b \mid x)$ and $p_* = \PP_*(a \succ b \mid x)$ for simplicity, we further have
\begin{align*}
-\log\EE\left[\sqrt{p_{\hpi} p_*} + \sqrt{(1-p_{\hpi})(1-p_*)}\right]
&\ge 1 - \EE\left[\sqrt{p_{\hpi} p_*} + \sqrt{(1-p_{\hpi})(1-p_*)}\right] \\
&= \EE\left[\frac{1}{2}(\sqrt{p_{\hpi}} - \sqrt{p_*})^2 + \frac{1}{2}(\sqrt{1-p_{\hpi}} - \sqrt{1-p_*})^2\right] \\
&= \EE\left[\frac{(p_{\hpi}-p_*)^2}{2(\sqrt{p_{\hpi}} + \sqrt{p_*})^2} + \frac{(p_{\hpi}-p_*)^2}{2(\sqrt{1-p_{\hpi}} + \sqrt{1-p_*})^2}\right] \\
&\ge \fr14 \EE\left[(p_{\hpi}-p_*)^2 \right],
\end{align*}
which yields the desired bound.
\end{proof}

\textit{Proof of Theorem~\ref{thm:mle}.}
Our proof is partly inspired by~\citet[proof of Theorem 3.6]{agarwal25design}.
The key difference is that their theorem relies on an assumption that the \textit{population} loss of $\hpi$ is not too far away from that of $\pi^*$, which is rather strong.
In contrast, our theorem provides an end-to-end guarantee.
Furthermore, naively applying their theorem would result in an $1/\sqrt{n}$ rate rather than $1/n$.
We obtain an improvement by applying Schulman's trick~\cite{schulman20kl} followed by Lemma~\ref{lem:kl_bounded_by_log_sq}.
We elaborate more on this later in Remark~\ref{rem:novelty}.

Using Lemma \ref{lem:sig_diff} with the fact
\begin{align*}
\abs{\gam\ln\frac{\pi(a \mid x)}{\pi(b\mid x)}} = \abs{\bar{R}(x,a) - \bar{R}(x,b)} \le 2\gam R~,
\end{align*}
we can lower bound
\begin{align*}
&\EE_{x\sim \mathcal{D},a,b\sim\mu(x)}\left[(\PP_{\hpi}(a \succ b \mid x) - \PP_*(a \succ b \mid x))^2\right]\\
&\ge \frac{e^{-4\gam R}}{4} \EE_{x\sim \mathcal{D},a,b\sim\mu(x)} \bigg[\bigg(\gam \ln\frac{\hpi(a \mid x)}{\hpi(b\mid x)} - \gam\ln\frac{\pi^*(a \mid x)}{\pi^*(b\mid x)}\bigg)^2\bigg] \\
&= \frac{e^{-4\gam R}}{4} \EE_{x\sim \mathcal{D},a,b\sim\mu(x)} \left[(\Delta\bar{R}_{\hpi}(x,a) - \Delta\bar{R}_{\hpi}(x,b))^2\right] \\
&= \frac{e^{-4\gam R}}{2} \EE_{x\sim \mathcal{D},a\sim\mu(x)} \left[\Delta\bar{R}_{\hpi}(x,a)^2\right]
\end{align*}
where the last inequality is due to $\EE[(X - Y)^2] = 2\EE[(X- \EE[X])^2]$ when $X$ and $Y$ are i.i.d.
Thus, using Proposition~\ref{thm:oracle}, the difference in centered reward satisfies
\begin{align}\label{eq:delta}
\EE_{x\sim \mathcal{D},a\sim\mu(x)} \left[\Delta\bar{R}_{\hpi}(x,a)^2\right] \le
8e^{4\gam R} \cd \frac{\ln(|\Pi|/\delta)}{n}.
\end{align}
Define the normalizing factor
\begin{equation*}
Z_\pi(x):=\sum_{a\in\cA} \exp\left(\frac{1}{\gam}\bar{R}_\pi(x,a)\right) = \exp\left(-\frac{1}{\gam}\EE_{a\sim\mu(x)}[R_\pi(x,a)\mid x]\right), \quad Z_*:=Z_{\pi^*}
\end{equation*}
so that $\pi(a\mid x) = Z_\pi(x)^{-1} \exp(\gam^{-1}\bar{R}_\pi(x,a))$.
Due to Assumption \ref{assp:bdd}, for all $\pi\in\Pi,x\in\cX$ it holds that $|\cA|e^{-R}\le Z_\pi(x)\le |\cA|e^R$, so that
\begin{align}\label{eq:prob_ratio}
0< \frac{\hpi(a\mid x)}{\pi^*(a\mid x)} = \frac{Z_*(x)}{Z_{\hpi}(x)} \exp\left(\frac{1}{\gam}\Delta\bar{R}_{\hpi}(x,a)\right) \le e^{4R}.
\end{align}
Then, we bound the KL divergence between $\pi^*,\hpi$ using Schulman's trick~\cite{schulman20kl} followed by Lemma~\ref{lem:kl_bounded_by_log_sq}:
\begin{align}\label{eq:novelty}
\EE_{x\sim\cD}\left[\KL(\pi^*(x)\Vert \hpi(x))\right] & = \EE_{x\sim\cD, a\sim\pi^*(x)} \bigg[ \ln\frac{\pi^*(a\mid x)}{\hpi(a\mid x)} \bigg] \notag \\
&= \EE_{x\sim\cD, a\sim\pi^*(x)} \bigg[ \frac{\hpi(a\mid x)}{\pi^*(a\mid x)} - 1 -\ln \frac{\hpi(a\mid x)}{\pi^*(a\mid x)} \bigg] \notag\\
&\le (\fr12\vee\psi(e^{4R})) \EE_{x\sim\cD, a\sim\pi^*(x)} \bigg[\bigg(\ln \frac{\hpi(a\mid x)}{\pi^*(a\mid x)}\bigg)^2 \bigg].
\end{align}
Extracting the normalization constants, we further have that
\begin{align*}
&\EE_{x\sim\cD, a\sim\pi^*(x)} \bigg[\bigg(\ln \frac{\hpi(a\mid x)}{\pi^*(a\mid x)}\bigg)^2 \bigg]\\
&\le \EE_{x\sim\cD, a\sim\pi^*(x)} \bigg[ 2\left(\ln \frac{\hpi(a\mid x) Z_{\hpi}(x)}{\pi^*(a\mid x) Z_*(x)}\right)^2 + 2\left(\ln\frac{Z_*(x)}{Z_{\hpi}(x)}\right)^2 \bigg] \\
&= \frac{2}{\gam^2} \EE_{x\sim \mathcal{D},a\sim\pi^*(x)} \left[\Delta\bar{R}_{\hpi}(x,a)^2\right] + 2\EE_{x\sim\cD} \bigg[\left(\ln\frac{Z_*(x)}{Z_{\hpi}(x)}\right)^2 \bigg].
\end{align*}
Using Definition \ref{def:cov} and \eqref{eq:delta}, the first term is bounded as
\begin{align*}
\frac{2}{\gam^2} \EE_{x\sim \mathcal{D},a\sim\pi^*(x)} \left[\Delta\bar{R}_{\hpi}(x,a)^2\right] \le \frac{2C_\Pi}{\gam^2} \EE_{x\sim \mathcal{D},a\sim\mu(x)} \left[\Delta\bar{R}_{\hpi}(x,a)^2\right] \le
\frac{16C_\Pi e^{4\gam R}}{\gam^2} \cd \frac{\ln(|\Pi|/\delta)}{n}.
\end{align*}
For the second term, we first characterize an upper and lower bound on $\ln\frac{Z_{\hpi}(x)}{Z_*(x)}$.
Using
\begin{align*}
1 = \EE_{a\sim\pi^*(x)}\bigg[\frac{\hpi(a\mid x)}{\pi^*(a\mid x)}\bigg] = \frac{Z_*(x)}{Z_{\hpi}(x)} \EE_{a\sim\pi^*(x)}\bigg[\exp\left(\frac{1}{\gam}\Delta\bar{R}_{\hpi}(x,a)\right)\bigg],
\end{align*}
we have
\begin{align*}
\ln\frac{Z_{\hpi}(x)}{Z_*(x)} = \ln \EE_{a\sim\pi^*(x)}\bigg[\exp\left(\frac{1}{\gam}\Delta\bar{R}_{\hpi}(x,a)\right)\bigg]
   &\ge \fr1\gam \EE_{a\sim\pi^*(x)}[ \Dt \bar{R}_{\hpi}  (x,a)]
\end{align*}
where the last inequality is by Jensen's inequality.
Moreover, using the inequality $e^x\le 1+x+\frac{e^A}{2}x^2$ valid for all $x\in(-\infty,A]$, we have
\begin{align*}
\ln\frac{Z_{\hpi}(x)}{Z_*(x)}
&= \ln \EE_{a\sim\pi^*(x)}\bigg[\exp\left(\frac{1}{\gam}\Delta\bar{R}_{\hpi}(x,a)\right)\bigg]
\\&\le \EE_{a\sim\pi^*(x)}\bigg[\exp\left(\frac{1}{\gam}\Delta\bar{R}_{\hpi}(x,a)\right)\bigg] - 1
\\&\le \frac{1}{\gam}\EE_{a\sim\pi^*(x)} \left[\Delta\bar{R}_{\hpi}(x,a)\right] + \frac{e^{2R}}{2\gam^2} \EE_{a\sim\pi^*(x)} \left[\Delta\bar{R}_{\hpi}(x,a)^2\right].
\end{align*}
Thus, we have
\begin{align*}
  \abs{\ln \fr{Z_{\hpi}(x)}{Z_*(x)} }
  \le \abs{\fr1\gam \EE_{a\sim\pi^*(x)}[ \Dt \bar{R}_{\hpi}  (x,a)]} +  \frac{e^{2R}}{2\gam^2} \EE_{a\sim\pi^*(x)} \left[\Delta\bar{R}_{\hpi}(x,a)^2\right],
\end{align*}
which implies, using $\forall x,y\in \RR, (x+y)^2 \le 2 x^2 + 2y^2$,
\begin{align*}
&\EE_{x\sim\cD} \bigg[\left(\ln\frac{Z_*(x)}{Z_{\hpi}(x)}\right)^2 \bigg]\\
&\le \frac{2}{\gam^2}\EE_{x\sim\cD}\bigg[\left(\EE_{a\sim\pi^*(x)}\left[\Delta\bar{R}_{\hpi}(x,a)\right]\right)^2\bigg] + \frac{e^{4R}}{2\gam^4}\EE_{x\sim\cD}\bigg[\left(\EE_{a\sim\pi^*(x)} \left[\Delta\bar{R}_{\hpi}(x,a)^2\right]\right)^2\bigg] \\
&\le \frac{2R^2 e^{4R} + 2}{\gam^2} \EE_{x\sim \mathcal{D},a\sim\pi^*(x)} \left[\Delta\bar{R}_{\hpi}(x,a)^2\right] \tag{Jensen's inequality; Assumption~\ref{assp:bdd}}\\
&\le \frac{16C_\Pi(R^2 e^{4R}+1)e^{4\gam R}}{\gam^2} \cd \frac{\ln(|\Pi|/\delta)}{n}~. \tag{by \eqref{eq:delta}}
\end{align*}
Putting everything together, we conclude:
\begin{align*}
&\EE_{x\sim\cD} \left[\KL(\pi^*(x)\Vert \hpi(x))\right]\\
&\le (\fr12\vee\psi(e^{4R})) \left(\frac{16C_\Pi e^{4\gam R}}{\gam^2} \cd \frac{\ln(|\Pi|/\delta)}{n} + \frac{32C_\Pi(R^2 e^{4R}+1)e^{4\gam R}}{\gam^2} \cd \frac{\ln(|\Pi|/\delta)}{n}\right) \\
&= (\fr12\vee\psi(e^{4R})) \frac{16(2R^2 e^{4R}+3)C_\Pi e^{4\gam R}}{\gam^2} \cd \frac{\ln(|\Pi|/\delta)}{n}.
\end{align*}
We remark that by Lemma \ref{lem:kl_bounded_by_log_sq}, the $\fr12\vee\psi(e^{4R})$ term is further bounded above by $\frac{e^{4R}}{16R^2}$.

\vspace{.5em}
\begin{remark}\label{rem:novelty}
    One of our key novelties is \eqref{eq:novelty}.
    In \citet{agarwal25design}, they use Cauchy-Schwarz to derive the bound
    \begin{align*}
        \EE_{x\sim \cD, a\sim \pi^*(x)} \sbr[2]{\ln \fr{\pi^*(a\mid x)}{\hpi(a \mid x)} }
        \le \sqrt{\EE_{x\sim \cD, a\sim \pi^*(x)} \sbr[2]{\del[2]{\ln \fr{\pi^*(a\mid x)}{\hpi(a \mid x)}}^2}} ~,
    \end{align*}
    which introduces an extra square root compared to our derivation.
    Following their approach naively would lead to a $1/\sqrt{n}$ rate instead of $1/n$.
\end{remark}
\qed

\subsection{Proofs for Section~\ref{sec:pref}}\label{app:proof_distill}

\textit{Proof of Theorem \ref{thm:pref_distill}.} Up to constants, our distillation objective is equivalent to minimizing
\begin{align*}
\frac{1}{n} \sum_{(x,a^+,a^-)\in D_n} \KL\left(\text{Bern}(\PP_{\wtpi}(a^+\succ a^-\mid x)) \Vert \text{Bern}(\PP_\pi(a^+\succ a^-\mid x))\right),
\end{align*}
which can achieve zero loss since $\wtpi\in\cP_\gam(\cR) \subseteq\Pi$ is a valid solution. Thus, the solution $\hpi$ must satisfy
\begin{align*}
    \PP_{\wtpi}(a\succ b\mid x) = \PP_{\hpi}(a\succ b\mid x), \quad\forall (x,a,b)\in D_n
\end{align*}
(recall that we use $(a,b)$ to denote the independent unlabeled responses). Defining the set
\begin{align*}
\cK := \left\{(\pi_1,\pi_2)\in\cP_\gam(\cR) \times\Pi: \EE_{x\sim\cD,a,b\sim\mu(x)}\big[|\PP_{\pi_1}(a\succ b\mid x) - \PP_{\pi_2}(a\succ b\mid x)|\big] > \epsilon\right\},
\end{align*}
it follows that
\begin{align*}
\PP\left((\wtpi,\hpi)\in \cK\right) &= \sum_{(\pi_1,\pi_2)\in \cK} \PP(\wtpi=\pi_1,\hpi =\pi_2) \\
&\le \sum_{(\pi_1,\pi_2)\in \cK} \PP\left(\PP_{\pi_1}(a\succ b\mid x) = \PP_{\pi_2}(a\succ b\mid x), \;\forall (x,a,b)\in D_n\right) \\
&= \sum_{(\pi_1,\pi_2)\in \cK} \PP\left(\PP_{\pi_1}(a\succ b\mid x) = \PP_{\pi_2}(a\succ b\mid x)\right)^n\\
&\le \sum_{(\pi_1,\pi_2)\in \cK} \left(1- \EE\left[|\PP_{\pi_1}(a\succ b\mid x) - \PP_{\pi_2}(a\succ b\mid x)|\right]\right)^n \\
&\le \sum_{(\pi_1,\pi_2)\in \cK} (1-\eps)^n \\
&\le |\cK|^2\exp(-\eps n).
\end{align*}
Therefore $\PP\left((\wtpi,\hpi)\in \cK\right) \le |\Pi|^2 \exp(-\eps n)$, i.e.,
\begin{align*}
\EE_{x\sim\cD,a,b\sim\mu(x)}\big[|\PP_{\wtpi}(a\succ b\mid x) - \PP_{\hpi}(a\succ b\mid x)|\big] \le \frac{2\ln(|\Pi|/\delta)}{n}
\end{align*}
with probability at least $1-\delta$, and so
\begin{align}\label{eq:new_distill}
\EE_{x\sim\cD,a,b\sim\mu(x)}\left[(\PP_{\wtpi}(a\succ b\mid x) - \PP_{\hpi}(a\succ b\mid x))^2\right] \le \frac{2\ln(|\Pi|/\delta)}{n}
\end{align}
as well.
On the other hand, applying Proposition~\ref{thm:oracle} to $\cP_\gam(\cR)$, we have
\begin{align}\label{eq:new_reward}
\EE_{x\sim \mathcal{D},a,b\sim\mu(x)}\left[(\PP_{\wtpi}(a \succ b \mid x) - \PP_*(a \succ b \mid x))^2\right] \le \frac{4\ln(|\cR|/\delta)}{n}
\end{align}
with probability at least $1-\delta$.
Hence by a union bound, it holds that, with probability at least $1-\delta$,
\begin{align*}
\EE_{x\sim \mathcal{D},a,b\sim\mu(x)}\left[(\PP_{\hpi}(a \succ b \mid x) - \PP_*(a \succ b \mid x))^2\right] \le \frac{4\ln(2|\Pi|/\delta) + 8\ln(2|\cR|/\delta)}{n}~.
\end{align*}

Furthermore, by Lemma \ref{lem:sig_diff} it holds that
\begin{align*}
\abs{\PP_{\hpi}(a \succ b \mid x) - \PP_*(a \succ b \mid x)} &= \abs{\sig\bigg(\gam\ln\frac{\hpi(a\mid x)}{\hpi(b\mid x)}\bigg) - \sig\bigg(\gam\ln\frac{\pi^*(a\mid x)}{\pi^*(b\mid x)}\bigg)} \\
&\ge \frac{\gam e^{-2\gam R}}{2} \abs{\ln\frac{\hpi(a\mid x)}{\hpi(b\mid x)} - \ln\frac{\pi^*(a\mid x)}{\pi^*(b\mid x)}} \\
&=\frac{e^{-2\gam R}}{2} \abs{\Delta\bar{R}_{\hpi}(x,a) - \Delta\bar{R}_{\hpi}(x,b)},
\end{align*}
which implies that
\begin{align}
\EE_{x\sim \cD, a\sim \mu(x)} \left[(\Delta \bar R_{\hpi} (x,a))^2\right] &= \fr12 \EE_{x\sim \cD, a,b\sim \mu(x)} \left[(\Delta \bar R_{\hpi} (x,a) - \Delta \bar R_{\hpi} (x,b))^2 \right]\nonumber \\
&\le 2e^{4\gam R} \cdot \frac{4\ln(2|\Pi|/\delta) + 8\ln(2|\cR|/\delta)}{n}.\label{eq:delta_bound_pref}
\end{align}
Finally as in the proof of Theorem \ref{thm:mle}, we combine the bounds
\begin{align*}
&\EE_{x\sim\cD, a\sim\pi^*(x)} \bigg[\bigg(\ln \frac{\hpi(a\mid x)}{\pi^*(a\mid x)}\bigg)^2 \bigg]\\
&\le \frac{2}{\gam^2} \EE_{x\sim \mathcal{D},a\sim\pi^*(x)} \left[\Delta\bar{R}_{\hpi}(x,a)^2\right] + 2\EE_{x\sim\cD} \bigg[\left(\ln\frac{Z_*(x)}{Z_{\hpi}(x)}\right)^2 \bigg]
\end{align*}
and
\begin{align*}
\EE_{x\sim\cD} \bigg[\left(\ln\frac{Z_*(x)}{Z_{\hpi}(x)}\right)^2 \bigg]\le \frac{2R^2 e^{4R} + 2}{\gam^2} \EE_{x\sim \mathcal{D},a\sim\pi^*(x)} \left[\Delta\bar{R}_{\hpi}(x,a)^2\right]
\end{align*}
along with \eqref{eq:delta_bound_pref} to conclude that
\begin{align*}
\EE_{x\sim\cD}\left[\KL(\pi^*(x)\Vert \hpi(x))\right] \le (\fr12\vee\psi(e^{4R})) \frac{16(2R^2 e^{4R}+3) C_\Pi e^{4\gam R}}{\gam^2} \cd \frac{\ln(2|\Pi|/\delta) + 2\ln(2|\cR|/\delta)}{n}.
\end{align*}
\qed

\begin{theorem}
\label{thm:banana}
Given a base policy $\pi_0$, denote by $C_0$ the smallest constant such that for every $\pi,\pi'\in\Pi$,\footnote{This is slightly stronger than Definition \ref{def:cov}, which can be retrieved by setting $\pi'=\pi^*$. We can weaken the definition to include only $\pi'\in\cR$.}
\begin{align*}
\EE_{x\sim \cD, a\sim \pi^*(x)}\big[(\Delta \bar R_\pi(x,a) - \Delta \bar R_{\pi'}(x,a))^2\big] \le C_0 \EE_{x\sim \cD, a\sim \pi_0(x)}\big[(\Delta \bar R_\pi(x,a) - \Delta \bar R_{\pi'}(x,a))^2\big].
\end{align*}

The preference distillation estimate $\hpi = \argmin\nolimits_{\pi\in\Pi} \cL_{\distill}(\pi)$ with responses in $D_n$ generated from $\pi_0$ instead of $\mu$ satisfies
\begin{align}\label{eq:new_distill_guarantee}
\EE_{x\sim\cD}\left[\KL(\pi^*(x)\Vert \hpi(x))\right] \lesssim \frac{1}{\gam^2} \cd \frac{C_0\ln(|\Pi|/\delta) + C_\cR\ln(|\cR|/\delta)}{n}
\end{align}
with probability at least $1-\delta$.
\end{theorem}

\begin{proof}
In the proof of Theorem \ref{thm:pref_distill}, \eqref{eq:new_distill} now holds with $\mu$ replaced by $\pi_0$, while \eqref{eq:new_reward} remains unchanged. It follows from Lemma \ref{lem:sig_diff} that
\begin{align*}
&\frac{2\ln(|\Pi|/\delta)}{n}\\
&\ge \EE_{x\sim\cD,a,b\sim\pi_0(x)}\left[(\PP_{\wtpi}(a\succ b\mid x) - \PP_{\hpi}(a\succ b\mid x))^2\right] \\
&\ge \frac{\gam^2 e^{-4\gam R}}{4}\EE_{x\sim\cD,a,b\sim\pi_0(x)}\left[\left(\ln\frac{\hpi(a\mid x)}{\hpi(b\mid x)} - \ln\frac{\wtpi(a\mid x)}{\wtpi(b\mid x)}\right)^2\right] \\
&= \frac{e^{-4\gam R}}{4} \EE_{x\sim\cD,a,b\sim\pi_0(x)} \left[\left(\Delta\bar{R}_{\hpi}(x,a) - \Delta\bar{R}_{\hpi}(x,b) - \Delta\bar{R}_{\wtpi}(x,a) + \Delta\bar{R}_{\wtpi}(x,b)\right)^2\right] \\
&= \frac{e^{-4\gam R}}{2} \EE_{x\sim\cD,a\sim\pi_0(x)} \left[\left(\Delta\bar{R}_{\hpi}(x,a) - \Delta\bar{R}_{\wtpi}(x,a)\right)^2\right] \\
&\ge \frac{e^{-4\gam R}}{2C_0} \EE_{x\sim\cD,a\sim\pi^*(x)} \left[\left(\Delta\bar{R}_{\hpi}(x,a) - \Delta\bar{R}_{\wtpi}(x,a)\right)^2\right],
\end{align*}
where the last line uses the modified definition of the coverage coefficient $C_0$. Moreover, \eqref{eq:new_reward} implies
\begin{align*}
\EE_{x\sim \cD, a\sim \pi^*(x)} \left[(\Delta \bar R_{\wtpi} (x,a))^2\right] &\le C_\cR
\EE_{x\sim \cD, a\sim \mu(x)} \left[(\Delta \bar R_{\wtpi} (x,a))^2\right]\\
&= \fr12 C_\cR \EE_{x\sim \cD, a,b\sim \mu(x)} \left[(\Delta \bar R_{\wtpi} (x,a) - \Delta \bar R_{\wtpi} (x,b))^2 \right] \\
&\le 2C_\cR e^{4\gam R} \cdot \frac{4\ln(|\cR|/\delta)}{n}.
\end{align*}
By a union bound, it follows with probability at least $1-\delta$ that
\begin{align*}
&\EE_{x\sim \cD, a\sim \pi^*(x)} \left[(\Delta \bar R_{\hpi} (x,a))^2\right]\\
&\le 2\EE_{x\sim\cD,a\sim\pi^*(x)} \left[\left(\Delta\bar{R}_{\hpi}(x,a) - \Delta\bar{R}_{\wtpi}(x,a)\right)^2\right] + 2\EE_{x\sim \cD, a\sim \pi^*(x)} \left[(\Delta \bar R_{\wtpi} (x,a))^2\right] \\
&\le 8e^{4\gam R}\cdot \frac{C_0 \ln(2|\Pi|/\delta) + 2C_\cR\ln(2|\cR|/\delta)}{n}.
\end{align*}
The remainder of the proof proceeds similarly.
\end{proof}

\subsection{Proofs for Section \ref{sec:revkl}}\label{app:proof_reverse}

\textit{Proof of Theorem \ref{thm:reverse_kl}.}
The first step of the argument is similar to the proof of Theorem \ref{thm:pref_distill}, but with a stronger sup norm bound which is guaranteed due to realizability. Indeed, up to constants, the reverse KL objective is equivalent to
\begin{align*}
\hpi = \argmin_{\pi\in\Pi} \frac{1}{n} \sum\limits_{(x,\cd,\cd) \in D_n} \KL(\pi(x)\Vert\wtpi(x)),
\end{align*}
which achieves zero loss due to Assumption \ref{assp:reward_model}. Defining the set
\begin{align*}
K:= \Bigg\{(\pi_1,\pi_2)\in\Pi\times\cP_\gam(\cR) : \EE_{x\sim\cD} \bigg[\sup_{a\in\cA}\left(\ln\frac{\pi_1(a\mid x)}{\pi_2(a\mid x)}\right)^2\bigg] >\eps\Bigg\},
\end{align*}
it follows that
\begin{align*}
\PP\left(\EE_{x\sim\cD} \bigg[\sup_{a\in\cA}\left(\ln\frac{\hpi(a\mid x)}{\wtpi(a\mid x)}\right)^2\bigg] >\eps\right)
&= \sum_{(\pi_1,\pi_2)\in K} \PP(\hpi=\pi_1,\wtpi=\pi_2) \tag{law of total probability}\\
&\le \sum_{(\pi_1,\pi_2)\in K} \PP(\pi_1(x) = \pi_2(x), \;\forall x\in D_n) \\
&= \sum_{(\pi_1,\pi_2)\in K} \PP_{x\sim\cD}(\pi_1(x) = \pi_2(x))^n.
\end{align*}
Note that for any $(\pi_1,\pi_2)\in K$, it holds that $\pi_1(a\mid x)/\pi_2(a\mid x)\le e^{4R}$ as in the proof of Theorem \ref{thm:mle}, so that
\begin{align*}
\sup_{a\in\cA} \left(\ln\frac{\pi_1(a\mid x)}{\pi_2(a\mid x)}\right)^2 \le 16R^2\cd 1_{\{\pi_1(x)\neq\pi_2(x)\}}, \quad\forall x\in\cX.
\end{align*}
This implies
\begin{align*}
\PP_{x\sim\cD}(\pi_1(x) = \pi_2(x)) &= 1-\EE_{x\sim\cD}[1_{\{\pi_1(x)\neq\pi_2(x)\}}] \le 1 - \frac{\eps}{16R^2},
\end{align*}
and hence
\begin{align}\label{eq:summer}
\PP\left(\EE_{x\sim\cD} \bigg[\sup_{a\in\cA}\left(\ln\frac{\hpi(a\mid x)}{\wtpi(a\mid x)}\right)^2\bigg] >\eps\right) \le |K| \left(1-\frac{\eps}{16R^2}\right)^n \le |\Pi|^2 \exp\left(-\frac{\eps n}{16R^2}\right).
\end{align}
We now bound the forward KL divergence. Again applying Lemma \ref{lem:kl_bounded_by_log_sq},
\begin{align*}
&\EE_{x\sim\cD}\left[\KL(\pi^*(x)\Vert \hpi(x))\right]\\
&\le (\fr12\vee\psi(e^{4R})) \EE_{x\sim\cD, a\sim\pi^*(x)} \bigg[\bigg(\ln \frac{\hpi(a\mid x)}{\pi^*(a\mid x)}\bigg)^2 \bigg]\\
&\le (1\vee 2\psi(e^{4R})) \left(\EE_{x\sim\cD} \bigg[\sup_{a\in\cA}\bigg(\ln \frac{\hpi(a\mid x)}{\wtpi(a\mid x)}\bigg)^2 \bigg] + \EE_{x\sim\cD, a\sim\pi^*(x)} \bigg[\bigg(\ln \frac{\wtpi(a\mid x)}{\pi^*(a\mid x)}\bigg)^2 \bigg] \right).
\end{align*}
The first term can be bounded via \eqref{eq:summer}. For the second term, repeating the derivation of Theorem \ref{thm:mle} over the policy class $\cP_\gam(\cR)$ instead of $\Pi$, we obtain
\begin{align*}
\EE_{x\sim\cD, a\sim\pi^*(x)} \bigg[\bigg(\ln \frac{\wtpi(a\mid x)}{\pi^*(a\mid x)}\bigg)^2 \bigg] \le \frac{16C_{\cR}(R^2 e^{4R}+1)e^{4\gam R}}{\gam^2} \cd \frac{\ln(|\cR|/\delta)}{n}.
\end{align*}
Putting everything together, we conclude:
\begin{align*}
&\EE_{x\sim\cD}\left[\KL(\pi^*(x)\Vert \hpi(x))\right]\\
&\le (1\vee 2\psi(e^{4R})) \left(16R^2 \cd \frac{\ln(2|\Pi|^2/\delta)}{n} + \frac{16C_{\cR}(R^2 e^{4R}+1)e^{4\gam R}}{\gam^2} \cd \frac{\ln(2|\cR|/\delta)}{n}\right)
\end{align*}
with probability at least $1-\delta$.
\qed

\begin{proposition}\label{prop:klsame}
It holds that
\begin{align*}
  \EE_x[\KL(\hpi(x) \Vert \pi^*(x))] \le  \fr{(4R \vee 1)e^{8R+1}}{R^2} \EE_x[\KL(\pi^*(x) \Vert \hpi(x))].
\end{align*}
\end{proposition}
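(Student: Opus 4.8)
The plan is to control both divergences, for each fixed prompt $x$, through the common quantity $\EE_{a\sim\pi^*(x)}[(\ln r(a))^2]$, where $r(a):=\hpi(a\mid x)/\pi^*(a\mid x)$, and then take $\EE_{x\sim\cD}$. The crucial input is the pointwise likelihood-ratio bound $e^{-4R}\le r(a)\le e^{4R}$ for all $a$, which holds exactly as in \eqref{eq:prob_ratio}: writing $r(a)=\frac{Z_*(x)}{Z_{\hpi}(x)}\exp(\gam^{-1}\Delta\bar R_{\hpi}(x,a))$, Assumption~\ref{assp:bdd} gives $|\Delta\bar R_{\hpi}(x,a)|\le 2\gam R$ and $Z_*(x)/Z_{\hpi}(x)\in[e^{-2R},e^{2R}]$; in particular $1/r(a)$ lies in the same interval. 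Note $\rmax:=e^{4R}>1$ by Assumption~\ref{assp:bdd}, so Lemmas~\ref{lem:kl_bounded_by_log_sq} and~\ref{lem:opposite} apply.

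For the upper bound on the reverse KL, I would apply Schulman's trick~\cite{schulman20kl}, $\KL(\hpi(x)\Vert\pi^*(x))=\EE_{a\sim\hpi(x)}[\,1/r(a)-1-\ln(1/r(a))\,]$, and then Lemma~\ref{lem:kl_bounded_by_log_sq} with $\rmax=e^{4R}$ to each summand to obtain $\KL(\hpi(x)\Vert\pi^*(x))\le \frac{e^{4R}}{16R^2}\,\EE_{a\sim\hpi(x)}[(\ln r(a))^2]$. Since $\hpi(a\mid x)\le e^{4R}\pi^*(a\mid x)$ and $(\ln r)^2\ge 0$, a change of measure gives $\EE_{a\sim\hpi(x)}[(\ln r(a))^2]\le e^{4R}\,\EE_{a\sim\pi^*(x)}[(\ln r(a))^2]$, hence $\KL(\hpi(x)\Vert\pi^*(x))\le \frac{e^{8R}}{16R^2}\,\EE_{a\sim\pi^*(x)}[(\ln r(a))^2]$.

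For the lower bound on the forward KL, Schulman's trick gives $\KL(\pi^*(x)\Vert\hpi(x))=\EE_{a\sim\pi^*(x)}[\,r(a)-1-\ln r(a)\,]$, and Lemma~\ref{lem:opposite} with $\rmin=e^{-4R}$ (so $\ln\rmin^{-1}=4R$) gives $r(a)-1-\ln r(a)\ge \frac{1}{e(4R\vee 1)}(\ln r(a))^2$, i.e.\ $\EE_{a\sim\pi^*(x)}[(\ln r(a))^2]\le e(4R\vee 1)\,\KL(\pi^*(x)\Vert\hpi(x))$. Chaining the two displays and loosening $\frac{1}{16R^2}\le\frac{1}{R^2}$ yields $\KL(\hpi(x)\Vert\pi^*(x))\le \frac{(4R\vee 1)e^{8R+1}}{R^2}\,\KL(\pi^*(x)\Vert\hpi(x))$ for every $x$; taking $\EE_{x\sim\cD}$ of both sides finishes the proof. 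Given the lemmas already in hand this is essentially bookkeeping; the only point needing care is matching directions — which argument ($r$ versus $1/r$) is fed into which lemma, and ensuring the change of measure is applied to the nonnegative integrand $(\ln r)^2$ rather than to a signed quantity.
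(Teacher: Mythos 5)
Your proof is correct and follows essentially the same route as the paper's: both establish the pointwise ratio bound $e^{-4R}\le \hpi(a\mid x)/\pi^*(a\mid x)\le e^{4R}$ via the argument of \eqref{eq:prob_ratio}, express both divergences through Schulman's trick, pass between the KL integrands and $(\ln r)^2$ using Lemmas~\ref{lem:kl_bounded_by_log_sq} and~\ref{lem:opposite}, and change measure from $\hpi$ to $\pi^*$ at the cost of a factor $e^{4R}$ on a nonnegative integrand. The only (immaterial) difference is that you apply the change of measure to $(\ln r)^2$ whereas the paper applies it to $\tfrac1r-1-\ln\tfrac1r$ after chaining both lemmas pointwise, and your bookkeeping even retains the factor $\tfrac{1}{16}$ before loosening to match the stated constant.
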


\begin{proof}[Proof of Proposition \ref{prop:klsame}]
Denote the ratio $r = \fr{\pi^*(a\mid x)}{\hpi(a\mid x)}$ for brevity. Using the same argument as~\eqref{eq:prob_ratio}, we have $r \in [e^{-4R}, e^{4R}]$.
Then, we have
\begin{align*}
  r - 1 - \ln r
  \sr{(a)}{\le} \fr{e^{4R}}{R^2} (\ln r)^2
  =   \fr{e^{4R}}{R^2} \left(\ln\fr{1}{r}\right)^2
  \sr{(b)}{\le} \fr{e^{4R}}{R^2} e (4R \vee 1) \left(\fr1r - 1 - \ln\fr{1}{r}\right),
\end{align*}
where $(a)$ is by Lemma~\ref{lem:kl_bounded_by_log_sq} and $(b)$ is by Lemma~\ref{lem:opposite} with $r$ replaced by $1/r$.
Thus,
\begin{align*}
  \KL(\hpi(x) \Vert \pi^*(x))
  &= \EE_{a\sim \hpi(x)}[r - 1 - \ln r]
\\&\le  \fr{e^{4R}}{R^2} e (4R \vee 1) \EE_{a\sim \hpi(x)}\left[\fr1r - 1 - \ln\fr{1}{r}\right]
\\&\le  \fr{e^{4R}}{R^2} e (4R \vee 1) e^{4R}\EE_{a\sim \pi^*(x)} \left[\fr1r - 1 - \ln\fr{1}{r}\right]
\\&=  \fr{e^{4R}}{R^2} e (4R \vee 1) e^{4R} \KL(\pi^*(x) \Vert \hpi(x)).
\end{align*}

\end{proof}

\section{Experimental Details}\label{app:exp_details}

In Section~\ref{app:toy}, we describe the detailed settings for our toy experiments. In Section~\ref{app:tldr}, we provide implementation details on model card, hyperparameters, and compute resources on training and evaluating on the TL;DR dataset.
In Section~\ref{app:general}, we provide details on our general chat experiments from Section~\ref{sec:exp_main} and also show additional results on MT-Bench and AlpacaEval 2.0.

\subsection{Toy Experiments}\label{app:toy}

We adopt a tabular setting with vocabulary and context sizes both equal to $10$. Under the preference model \eqref{eq:preference-model-intro}, we fix $\gamma = 0.5$. To keep the oracle policy $\pi^*$ and reference policy $\piref$ close, for each context $x\in \mathcal{X}$, we draw logits $\alpha^*, \alpha_0 \in \mathbb{R}^{10}$ with entries i.i.d. $\mathcal{N}(0, 0.1^2)$ and set $\pi^*(\cdot \mid x) = \mathrm{softmax}(\alpha^*)$ and $\piref(\cdot \mid x) = \mathrm{softmax}(\alpha^0)$. We then train polices by optimizing \eqref{eq:rlhf} and \eqref{eq:rkl} and evaluate the forward KL $\mathbb{E}_x\big[\mathrm{KL}(\pi^*(\cdot \mid x) \Vert \pi(\cdot \mid x)\big]$. We vary the sample size $n \in \{2^5, \cdots, 2^{16}\}$, tuning the KL coefficient $\beta$ for both RLHF and RKL at each $n$, while keeping $\gamma = 0.5$ fixed for RKL.

\subsection{TL;DR Summarization}
\label{app:tldr}

\paragraph{Dataset.} We use the TL;DR dataset that is widely used in related literature~\citep{gao24rebel,song24theimportance,huang2024the}, publicly available\footnote{\url{https://github.com/openai/summarize-from-feedback}}. We summarize the dataset statistics in Table \ref{tab:tldr_stats}.
Note that DPO and PMLE are trained on the preference dataset which has preference labels, and other algorithms evaluate the policy based on human references since they utilize the online responses.

\begin{table}[htb!]
    \caption{TL;DR dataset statistics.}\label{tab:tldr_stats}
    \centering
    \begin{tabular}{@{}c|ccc@{}}
        \toprule
        Dataset & Train & Valid & Test \\ \midrule \midrule
        Human Reference & 117K & 64.5K & 6.55K  \\
        Preference & 92.9K & 83.8K & N/A \\
        \bottomrule
    \end{tabular}
\end{table}

\paragraph{Models.}
We use Pythia-2.8B\footnote{\url{https://huggingface.co/EleutherAI/pythia-2.8b-deduped}} and Pythia-6.9B\footnote{\url{https://huggingface.co/EleutherAI/pythia-6.9b-deduped}} \citep{biderman2023pythia} as our pretrained models, using maximum context length $512$ and maximum generation length up to $53$ tokens. In order for training efficiency, we use LoRA (Low-Rank Adapter, \citet{hu2022lora}) for alignment after full-parameter tuning the SFT model.

\paragraph{Implementations.}
We implement our three approaches (PMLE, reverse KL, preference distillation) on the top of a publicly available codebase\footnote{\url{https://github.com/vwxyzjn/summarize_from_feedback_details}}; preference distillation in particular is based on another publicly available code baseline\footnote{\url{https://github.com/ZhaolinGao/REBEL}}.
For PMLE (Section \ref{sec:mle}), we implement the KL regularizer in \eqref{eq:mle-objective} using the online responses described in~\citet{song24theimportance}. The DPO baseline takes about $3$ hours and PMLE requires about $6$ hours with 4 A100 40GB GPUs. Also, reverse KL and preference distillation, as well as their corresponding baselines RLHF and REBEL, takes about $2.5$ days with 4 A100 40GB GPUs. Lastly, the win-rate is judged by GPT-4 using the \texttt{gpt-4} checkpoint (as of May 23rd, 2025).

\begin{algorithm}[htb!]
    \caption{Preference Distillation (Sec. \ref{sec:pref})}\label{alg:pref_distill}
    \begin{algorithmic}[1]
        \STATE {\bfseries Input: } Reward model $\hat{R}$, policy class $\Pi$, sampling distribution $\mu$, learning rate $\eta$, prompt dataset $\{x_i\}_{i=1}^{n}$
        \FOR{$t = 0, 1, \ldots, T-1$}
            \STATE Sample two responses from $a_1, a_2 \sim \mu(\cdot \mid x)$ for a given prompt $x \sim \mathcal{D}$ for all $x \in \{x_i\}_{i=1}^{n}$
            \STATE Compute the probabilities with preference simulator by
            \begin{align*}
                \mathbb{P}_{\tilde\pi}(a_1 \succ a_2 \mid x) & := \sigma\big(\hat{R}(x, a_1) - \hat{R}(x, a_2)\big) \\
                \mathbb{P}_{\tilde\pi}(a_2 \succ a_1 \mid x) & := \sigma\big(\hat{R}(x, a_2) - \hat{R}(x, a_1)\big)
            \end{align*}
            \STATE Compute the preference distillation loss $\cL_{\distill,\beta}(\pi)$ using \eqref{eq:distill} and \eqref{eq:pref_distill}
            \STATE $\pi_{t+1} \gets \pi_t - \eta \nabla \cL_{\distill,\beta}(\pi_t)$
	\ENDFOR
    \end{algorithmic}
\end{algorithm}

\paragraph{Pseudocode.} Since the implementations of PMLE and reverse KL are straightforward from the corresponding DPO and RLHF baseline, we present the pseudocode for preference distillation for better understanding. As noted in~\citet{gao24rebel}, the base distribution $\mu$ can also be $\pi_t$ in our pseudocode (Algorithm \ref{alg:pref_distill}). Following the baseline code implementations of REBEL, we also sample online responses from the distribution $\pi_t$.

\paragraph{Hyperparameters.}
We adopt almost the same hyperparameters used in several studies~\citep{huang2024the,gao24rebel,song24theimportance}. For completeness, we summarize the hyperparameters used in our experiments in Table \ref{tab:tldr_hparams}.
Note that \citet{gao24rebel} trains only a single epoch for RLHF and REBEL, but we cannot reproduce their results with just one epoch.
Rather, following the implementation details~\citep{huang2024the}, we consider the total episodes $10^6$ which corresponds to roughly about $8.5$ epochs. In this setting, we could reproduce the baseline results or obtain better results. Hence, reverse KL and preference distillation  are also evaluated under this setup.

\begin{table}[t]
    \centering
    \caption{Hyperparameter configurations for TL;DR summarization tasks.}
    \label{tab:tldr_hparams}
    \resizebox{\linewidth}{!}{
    \begin{tabular}{p{0.25\linewidth}p{0.375\linewidth}p{0.375\linewidth}}
        \midrule[0.3ex]
        \textbf{Setting} & \textbf{Parameters} & \\
        \midrule[0.15ex]
        SFT \& RM &
        batch size: 64 \newline
        learning rate: 3e-6 &
        schedule: cosine decay \newline
        train epochs: 1 \\
        \midrule[0.15ex]\midrule[0.15ex]
        DPO &
        batch size: 64 \newline
        learning rate: 3e-6 \newline
        schedule: linear decay &
        train epochs: 1 \newline
        $\beta$: 0.05 \\
        \midrule[0.15ex]
        PMLE &
        batch size: 512 \newline
        learning rate: 1e-6 \newline
        schedule: linear decay  &
        train epochs: 1 \newline
        $\beta$: 1e-5 \newline
        $\gamma$: 1e-2 \\
        \midrule[0.15ex]
        REBEL &
        batch size: 512 \newline
        learning rate: 3e-6 \newline
        schedule: linear decay \newline
        total episodes: 1e6 &
        num epochs: 4 \newline
        $\eta$: 1.0 \newline
        kl coefficient: 0.05 \\
        \midrule[0.15ex]
        Preference Distillation &
        batch size: 512 \newline
        learning rate: 3e-6 \newline
        schedule: linear decay \newline
        total episodes: 1e6 &
        num epochs: 4 \newline
        $\gamma$: 0.1 \newline
        kl coefficient: 0.05 \\
        \midrule[0.15ex]
        RLHF (via PPO) &
        batch size: 512 \newline
        learning rate: 3e-6 \newline
        schedule: linear decay \newline
        total episodes: 1e6 \newline
        num epochs: 4 &
        discount factor: 1 \newline
        gae $\lambda$: 0.95 \newline
        clip ratio: 0.2 \newline
        value function coeff: 0.1 \newline
        kl coefficient: 0.05 \\
        \midrule[0.15ex]
        Reverse KL (Sec. \ref{sec:revkl}) &
        batch size: 512 \newline
        learning rate: 3e-6 \newline
        schedule: linear decay  &
        total episodes: 1e6 \newline
        kl coefficient: 0.05 \newline
        entropy coefficient: 0.01 \\
        \midrule[0.15ex]\midrule[0.15ex]
        LoRA Adapter \newline Config &
        r: 1024 \newline
        $\alpha$: 2048 &
        dropout: 0.0 \newline
        bias: False \\
        \midrule[0.15ex]
        Generation \newline Config &
        sampling: true \newline
        top k: 0.0 \newline
        top p: 1.0 &
        min length: 53 \newline
        max new tokens: 53 \newline
        temperature: 0.1 (for DPO and PMLE) or 0.7 (others) \\
        \midrule[0.3ex]
    \end{tabular}}
\end{table}

\subsection{General Chat}
\label{app:general}

\paragraph{Dataset and Models.} In this experiment, we use the UltraFeedBack dataset~\citep{cui2023ultrafeedback}, which is used in various baselines. We use LLaMA-3-8B-Instruct\footnote{\url{https://huggingface.co/meta-llama/Meta-Llama-3-8B-Instruct}} as our base model and ArmoRM-Llama3-8B-v0.1\footnote{\url{https://huggingface.co/RLHFlow/ArmoRM-Llama3-8B-v0.1}} as the reward model. One can use other public base models and reward models as well.

\paragraph{Implementation.} As in the TL;DR experiments, our implementation for preference distillation is based on~\citet{gao24rebel}, which is publicly available. The total training time for LLaMA-3-8B-Instruct takes around 7 days on 4 A100 GPUs.

\paragraph{Hyperparameters.}

Similar to TL;DR experiments, we adopt the hyperparemeter configurations of~\citet{gao24rebel}; for completeness, the full specification is presented in Table \ref{tab:chat_hparams}. Note that, due to the scale of experiments, we choose the best hyperparameter $\gamma$ used in TL;DR experiments. Consequently, the reported performance of preference distillation might be conservative, as more fine-grained hyperparameter tuning could yield further performance gains.

\begin{table}[htb!]
    \centering
    \caption{Hyperparameter configurations for general chat experiments.}
    \label{tab:chat_hparams}
    \resizebox{\linewidth}{!}{
    \begin{tabular}{p{0.25\linewidth}p{0.375\linewidth}p{0.375\linewidth}}
        \midrule[0.3ex]
        \textbf{Setting} & \textbf{Parameters} & \\
        \midrule[0.15ex]
        DPO &
        batch size: 128 \newline
        learning rate: 5e-7 \newline
        schedule: cosine decay \newline
        train epochs: 1 \newline
        $\beta$: 0.05 \\
        \midrule[0.15ex]
        PMLE &
        batch size: 128 \newline
        learning rate: 3e-7 \newline
        schedule: cosine decay \newline
        train epochs: 1 \newline
        $\gamma$: 5e-3 \newline
        KL coefficient: 0.0 \\
        \midrule[0.15ex]
        REBEL &
        batch size: 128 \newline
        learning rate: 3e-7 \newline
        schedule: cosine decay \newline
        train epochs: 1 \newline
        num epochs: 1 \newline
        $\eta$: $10^6$ \\
        \midrule[0.15ex]
        Preference distillation &
        batch size: 128 \newline
        learning rate: 3e-7 \newline
        schedule: cosine decay  \newline
        train epochs: 1 \newline
        $\beta$: 0.05 \\
        \midrule[0.15ex]\midrule[0.15ex]
        Generation \newline Config & sampling: true \newline top k: 0.0 \newline top p: 0.9 & min length: 1024 \newline max new tokens: 1024 \newline temperature: 0.8 \\
        \midrule[0.3ex]
    \end{tabular}}
\end{table}



\end{document}